\documentclass{article}
\usepackage{iclr2027_conference,times}

\usepackage{amsmath,amsfonts,bm}

\def\eqref#1{equation~\ref{#1}}
\def\Eqref#1{Equation~\ref{#1}}
\def\1{\bm{1}}

\def\vmu{{\bm{\mu}}}

\def\vc{{\bm{c}}}

\def\vh{{\bm{h}}}

\def\vs{{\bm{s}}}

\def\vv{{\bm{v}}}
\def\vw{{\bm{w}}}
\def\vx{{\bm{x}}}
\def\vy{{\bm{y}}}
\def\vz{{\bm{z}}}

\def\mA{{\bm{A}}}

\def\mD{{\bm{D}}}

\def\mG{{\bm{G}}}

\def\mM{{\bm{M}}}

\def\mS{{\bm{S}}}

\def\mW{{\bm{W}}}

\def\mLambda{{\bm{\Lambda}}}

\DeclareMathAlphabet{\mathsfit}{\encodingdefault}{\sfdefault}{m}{sl}
\SetMathAlphabet{\mathsfit}{bold}{\encodingdefault}{\sfdefault}{bx}{n}

\usepackage{graphicx}
\usepackage{wrapfig}
\usepackage{float}

\usepackage{titletoc}

\usepackage{hyperref}
\usepackage{url}

\newcommand{\methodbase}{iCReN}
\newcommand{\methodfullnamebase}{\methodbase, a framework for identifying latent states and their instantaneous and lagged causal relations using nonstationarity of transition-noise distributions}

\newcommand{\methodname}{\mbox{\methodbase}}
\newcommand{\methodfullname}{\methodfullnamebase}
\newcommand{\methodTCL}{\mbox{\methodbase-T}}
\newcommand{\methodGCL}{\mbox{\methodbase-G}}

\newcommand{\veps}{\bm{\epsilon}}
\newcommand{\vlambda}{\bm{\lambda}}
\newcommand{\vpsi}{\bm{\psi}}
\newcommand{\Einst}{\mathcal E_{\mathrm{inst}}}
\newcommand{\Elag}{\mathcal E_{\mathrm{lag}}}
\newcommand{\widehatEinst}{\widehat{\mathcal E}_{\mathrm{inst}}}
\newcommand{\widehatElag}{\widehat{\mathcal E}_{\mathrm{lag}}}

\newtheorem{theorem}{Theorem}

\newtheorem{lemma}{Lemma}
\newtheorem{definition}{Definition}
\newtheorem{assumption}{Assumption}
\newtheorem{remark}{Remark}

\newcounter{identcondition}
\renewcommand{\theidentcondition}{(\roman{identcondition})}
\newcommand{\identcondition}[2]{%
  \refstepcounter{identcondition}%
  \par\noindent\textbf{\theidentcondition\ #2}\label{#1}%
}

\newcounter{appendixcondition}[assumption]
\renewcommand{\theappendixcondition}{(\roman{appendixcondition})}
\newcommand{\appendixcondition}[2]{%
  \refstepcounter{appendixcondition}%
  \par\noindent\textbf{\theappendixcondition\ #2}\label{#1}%
}

\newcommand{\assumptionconditionref}[2]{%
  Assumption~\ref{#1}\ref{#2}%
}

\newenvironment{proof}{\par\noindent\textit{Proof.}\ }{\hfill$\square$\par}

\title{Causal Representation Learning with Instantaneous and Lagged Relations via Nonstationarity}

\iclrarxivtrue
\author{
Tatsuya Yamada$^{1,2}$ \quad
Hiroshi Morioka$^{3,1,2}$ \quad
Yoshinobu Kawahara$^{1,2}$ \\[0.3em]
$^1$The University of Osaka \quad
$^2$RIKEN \quad
$^3$Shiga University \\[0.3em]
{\normalfont\small\texttt{t-yamada@ist.osaka-u.ac.jp}} \\[0.1em]
{\normalfont\small\texttt{hiroshi-morioka@biwako.shiga-u.ac.jp}} \\[0.1em]
{\normalfont\small\texttt{kawahara@ist.osaka-u.ac.jp}}
}

\begin{document}

\maketitle

\begin{abstract}
Causal representation learning for time-series data aims to identify latent states and their causal relations from observations. In this setting, an important challenge is to model both lagged causal relations across observation intervals and faster causal effects that appear as instantaneous relations within an interval, while accounting for nonstationarity in time-series data. However, methods that jointly handle these causal relations and nonstationarity remain limited. To address this gap, we establish sufficient conditions for identifying latent states up to component permutation and component-wise invertible transformations, and their instantaneous and lagged causal structures up to the same permutation, using an observed auxiliary variable, such as time or a condition label, associated with changes in transition-noise distributions. Based on these results, we propose iCReN, a framework that uses contrastive learning with discrete or continuous auxiliary variables to learn latent representations and estimate their instantaneous and lagged causal structures. Experiments demonstrate accurate recovery of latent states and both instantaneous and lagged causal structures on synthetic data and the utility of the learned representations for downstream forecasting on real-world data.
\end{abstract}

\section{Introduction}
\label{sec:introduction}

Time-series analysis is used for a wide range of tasks~\citep{Yue2022a,Wu2023a}, including weather forecasting~\citep{Bi2023a,Lam2023a} and human motion analysis~\citep{Butepage2017a,Martinez2017a}.
Many time-series methods model dependencies among observed variables, while learning the factors underlying the observations and their causal relations may additionally support generalization under distribution shifts~\citep{Scholkopf2021a,Bagi2023a,Song2026a}.
When these factors are directly observed, causal discovery methods can be used to infer their causal structure~\citep{Shimizu2006a,Zheng2018a,Runge2020a}, whereas in many applications, the relevant factors are not directly observed but are instead entangled in observations such as sensor measurements or images.
Causal representation learning (CRL) therefore seeks to identify these factors and their causal relations in learned representations~\citep{Komanduri2024a}.
However, different latent representations can produce the same distribution of observations, so identifying the underlying factors requires additional conditions or information~\citep{Hyvarinen1999a,Khemakhem2020a}.
Consequently, a central question in CRL is under what conditions and with what additional information the underlying factors and their causal relations become identifiable~\citep{Moran2026a}.

Two important aspects of CRL for time-series data are modeling the underlying factors as latent states and their causal relations within and across time steps, and accounting for nonstationarity of the data distribution.
First, causal interactions occurring within an observation interval are represented as relations among variables at the same time step, termed instantaneous causal relations, whereas interactions across time steps are represented as lagged causal relations.
While earlier CRL work for time-series data largely focused on lagged causal relations~\citep{Yao2022b,Yao2022a}, recent work explicitly models both instantaneous and lagged relations~\citep{Lippe2023a,Li2025a,Li2025b,Song2025a}.
These relations can be represented using structural causal models (SCMs)~\citep{Pearl2009a}, where each component of the latent state is modeled as a function of its instantaneous and lagged parents and exogenous noise~\citep{Hyvarinen2010a,Peters2013a}.
Second, the distribution of time-series data can change across seasons, times of day, operating conditions, or experimental conditions; accounting for such nonstationarity is important for time-series modeling~\citep{Liu2022a,Ditzler2015a,Song2023a}.
Within this formulation, one way to model nonstationarity is to allow the distribution of the exogenous noise---which we call transition noise---to vary over time or across conditions.
Prior work has used such transition-noise variation~\citep{Yao2022b} or, more generally, nonstationarity in the transitions between latent states~\citep{Yao2022a,Song2023a,Song2024b} to identify latent states and their temporal relations.
However, methods that explicitly model such nonstationarity and use it to identify latent states and their instantaneous and lagged causal structures remain underexplored.

To address this gap, we propose \methodfullname{}.
We model nonstationarity through variation in transition-noise distributions associated with an observed auxiliary variable, while the structural functions governing transitions between latent states and the mapping from latent states to observations remain invariant.
Under this setting, our main contributions are as follows.
First, we establish sufficient conditions for identifying latent states and their instantaneous and lagged causal structures.
Second, we develop \methodTCL{} and \methodGCL{}, which instantiate \methodname{} using time-contrastive learning (TCL) and generalized contrastive learning (GCL), respectively.
Third, we demonstrate accurate recovery of latent states and causal structures on synthetic data and predictive utility on real-world data.
\section{Related Work}
\label{sec:related_work}

CRL methods for time-series data differ in what causal relations they identify and what information they use to do so.
While causal discovery methods for time-series data identify instantaneous and lagged causal relations among directly observed variables under assumptions based on non-Gaussianity, conditional independence, or regime-switching nonstationarity~\citep{Hyvarinen2010a,Runge2020a,Hulsman2026a}, CRL must identify both the latent states underlying the observations and their causal relations, requiring further assumptions or information.
Several CRL approaches identify latent states and often their lagged causal relations using nonstationarity in transition-noise or latent transition distributions~\citep{Yao2022b,Yao2022a,Song2023a,Song2024b,Chen2024a} or component-level interventions~\citep{Lippe2022a}.
In particular, similar to our approach, LEAP~\citep{Yao2022b} exploits variation in transition-noise distributions while assuming invariant structural functions and an invariant observation process, but excludes instantaneous parents.
Among methods that allow instantaneous causal relations, iCITRIS~\citep{Lippe2023a} assumes known intervention targets and interventions that remove each targeted variable's dependence on its instantaneous causal parents. 
G-CaRL~\citep{Morioka2024a} can also be applied to time-series data with both instantaneous and lagged causal relations; when applied in this temporal setting, however, it identifies only lagged causal relations.
IDOL~\citep{Li2025a} identifies latent states and their instantaneous and lagged causal relations from stationary time-series data without auxiliary information, using sufficiently diverse dependencies on past states together with sparsity of the causal structure.
More broadly, prior work has studied how sparse dependencies can support identification of latent variables~\citep{Lachapelle2022a}. 
Our work identifies latent states and both instantaneous and lagged causal structures when an observed auxiliary variable is associated with changes in the transition-noise distributions.

The methods above also differ in how representations are learned.
LEAP and IDOL use variational inference with variational autoencoders (VAEs), while iCITRIS also offers a normalizing-flow variant. 
In contrast, our identification framework uses contrastive learning without requiring a variational posterior or decoder.
Our approach builds on independent innovation analysis (IIA)~\citep{Morioka2021a}, which uses distributional changes associated with an auxiliary variable to identify transition noise up to a component permutation and component-wise invertible transformations. 
Its TCL and GCL variants accommodate discrete and general auxiliary variables, respectively, and identify the transition-noise components by exploiting sufficient variability in the associated distributional changes.
However, IIA alone identifies neither the latent states nor their causal structure.
Under additional structural constraints on the latent data-generating process and restrictions on the learned model class, we extend its transition-noise identification result to the latent states and their instantaneous and lagged causal structures.
Table~\ref{tab:related_work} summarizes the differences among the most closely related methods.

\begin{table}[tbp]
\centering
\caption{Comparison of closely related CRL methods for time-series data by the causal relations they identify, the information they use for identification, and how they learn representations. For iCITRIS (*), the distribution changes are induced by interventions with known targets.}
\label{tab:related_work}
\begingroup
\footnotesize
\setlength{\tabcolsep}{2pt}
\renewcommand{\arraystretch}{1.15}
\newcommand{\yes}{\checkmark}
\newcommand{\no}{$\times$}
\resizebox{\textwidth}{!}{%
\begin{tabular}{l|ccc|c}
\hline
 & \shortstack{Instantaneous\\causal structure\\identification}
 & \shortstack{Nonstationarity}
 & \shortstack{No intervention-target\\labels required}
 & \shortstack{No variational\\inference required} \\
\hline
LEAP~\citep{Yao2022b} & \no & \yes & \yes & \no \\
iCITRIS~\citep{Lippe2023a} & \yes & \yes* & \no & \yes \\
G-CaRL~\citep{Morioka2024a} & \no & \no & \yes & \yes \\
IDOL~\citep{Li2025a} & \yes & \no & \yes & \no \\
\hline
\methodname{} & \yes & \yes & \yes & \yes \\
\hline
\end{tabular}
}
\endgroup
\end{table}
\section{Problem Setup}
\label{sec:problem-setup}

We consider a generative model for time-series observations with both instantaneous and lagged causal relations among latent states.
As shown in Figure~\ref{fig:proposal_overview}, the latent states evolve according to structural functions and are mapped to observations through an observation function.
In our setting, the transition-noise distributions may vary depending on an auxiliary variable, whereas the structural functions and observation function remain invariant.

\begin{wrapfigure}{r}{0.47\columnwidth}
    \vspace{-0.5\baselineskip}
    \centering
    \includegraphics[width=\linewidth]{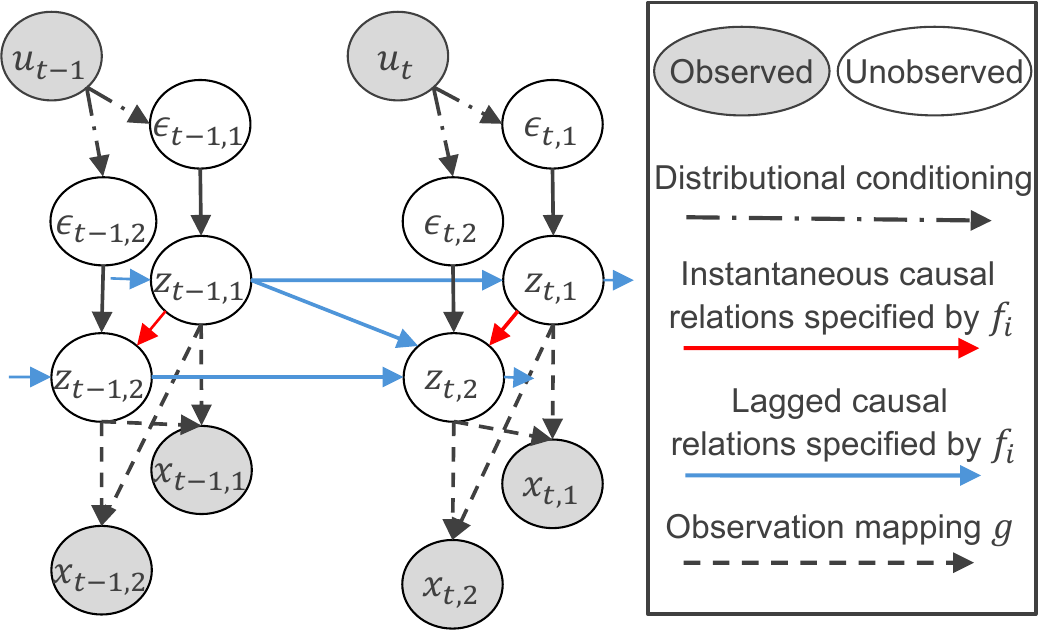}
    \caption{Generative model with $d_z=2$ and $L=1$. Gray dash-dotted, solid black, red, blue, and dashed black arrows denote conditioning of the transition-noise distributions on the auxiliary variable, transition-noise inputs to the structural functions, instantaneous causal relations, lagged causal relations, and the observation mapping, respectively.}
    \vspace{-1.5\baselineskip}
    \label{fig:proposal_overview}
\end{wrapfigure}

\paragraph{Observation model.}
At each time step $t$, let $\vx_t\in\mathcal X$ denote an observation, with a corresponding continuous latent state $\vz_t\in\mathbb{R}^{d_z}$.
The observation is generated as $\vx_t=g(\vz_t)$, where $g$ is an observation function that represents the process through which the latent states generate observations.

\paragraph{Latent dynamics.}
We use structural equations~\citep{Pearl2009a} to model the temporal evolution of the latent states, with instantaneous causal relations within each time step and lagged causal relations across time steps.
We consider a finite-lag model with maximum lag $L\geq1$, in which the current latent state may receive direct effects from latent states up to $L$ time steps in the past.
For each latent component $z_{t,i}$, we let $f_i$ denote its structural function and define the same-time and past latent variables on which $f_i$ depends nontrivially as its instantaneous and lagged parents, respectively.
These parent sets satisfy
\begin{equation}
    \mathrm{pa}_{\mathrm{inst}}(z_{t,i})
    \subseteq \{z_{t,j}:j\ne i\},
    \qquad
    \mathrm{pa}_{\mathrm{lags}}(z_{t,i})
    \subseteq
    \{z_{t-\ell,j}:1\leq\ell\leq L,\ 1\leq j\leq d_z\}.
\end{equation}
Each latent component is generated as
\begin{equation}
    z_{t,i}
    =f_i\bigl(
        \mathrm{pa}_{\mathrm{inst}}(z_{t,i}),
        \mathrm{pa}_{\mathrm{lags}}(z_{t,i}),
        \epsilon_{t,i}
    \bigr).
    \label{eq:latent_dynamics}
\end{equation}
Here, $\epsilon_{t,i}$ is the transition noise for component $i$.
This formulation includes, as special cases, models in which the instantaneous parent sets are empty and models in which transition noise enters additively.
Each parent relation corresponds to a directed causal edge from the parent variable to $z_{t,i}$.
The instantaneous causal structure is assumed to form a directed acyclic graph (DAG).
We further assume that there are no unmodeled common causes of the latent components.

\paragraph{Nonstationary transition noise.}
To model nonstationarity, we allow each conditional transition-noise distribution $p_i(\epsilon_{t,i}\mid u_t)$ to vary depending on an observed auxiliary variable $u_t$.
The auxiliary variable may be continuous, such as time, or discrete, such as a label indicating an environment, experimental condition, phase, season, or time of day.
Conditioned on the auxiliary-variable sequence $u_{1:T}$, the transition-noise components are independent across components and time steps, and $\veps_t$ is independent of the latent states in the past $L$ time steps given $u_t$:
\begin{equation}
    p(\veps_{1:T}\mid u_{1:T})
    =
    \prod_{t=1}^{T}\prod_{i=1}^{d_z}
    p_i(\epsilon_{t,i}\mid u_t),
    \qquad
    \veps_t
    \perp
    (\vz_{t-1},\ldots,\vz_{t-L})
    \mid u_t .
    \label{eq:transition_noise_assumptions}
\end{equation}
Under this model, we seek to identify the latent states up to a component permutation and component-wise invertible transformations, and their instantaneous and lagged causal structures up to the same permutation, from the joint distribution of the observation sequence $(\vx_t)_{t=1}^{T}$ and the auxiliary-variable sequence $(u_t)_{t=1}^{T}$.
\section{Identifiability Analysis}
\label{sec:identifiability}

% Topic1: purpose, setting
In this section, we establish sufficient conditions for identifying the latent states and for identifying their instantaneous and lagged causal structures.
For simplicity, we set $L=1$ in this section and extend the results to any finite maximum lag $L$ in Appendix~\ref{app:multiple_lags}.

% Topic2: identify latent components
\subsection{Identification of Latent Components}
\label{subsec:latent_component_identification}
We first establish conditions for identifying the components of the latent state.
Let $\phi$ be an encoder such that $\hat{\vz}_t=\phi(\vx_t)$.
Because $\vx_t=g(\vz_t)$, the learned representation satisfies $\hat{\vz}_t=(\phi\circ g)(\vz_t)$.
When $\phi\circ g$ is invertible, we denote its inverse by $H=(\phi\circ g)^{-1}$, so that $\vz_t=H(\hat{\vz}_t)$.
% <Instantaneous and cross-time Markov networks>
We next define the conditional-dependence structures.
Let $\mathcal M_z$ be the pairwise Markov network representing conditional-dependence relations among the components of $\vz_t$ given $\vz_{t-1}$ and $u_t$, and let $\Einst$ denote its edge set.
We further let $\Elag$ denote the set of conditional-dependence edges between the components of $\vz_{t-1}$ and $\vz_t$ in the pairwise Markov network for $(\vz_{t-1},\vz_t)$ given $u_t$.
For distinct $i,j$ and any $a$, the absence of an edge is characterized by
\begin{align}
\{z_{t,i},z_{t,j}\}\notin \Einst
&\Longleftrightarrow
z_{t,i}\perp z_{t,j}
\mid \vz_{t,\setminus{i,j}},\vz_{t-1},u_t,
\label{eq:instantaneous_markov_relations} \\
\{z_{t-1,a},z_{t,i}\}\notin \Elag
&\Longleftrightarrow
z_{t-1,a}\perp z_{t,i}
\mid \vz_{t-1,\setminus{a}},\vz_{t,\setminus{i}},u_t.
\label{eq:lagged_markov_relations}
\end{align}
Here, $\vz_{s,\setminus S}$ denotes the vector obtained from $\vz_s$ by excluding the components indexed by $S$.
We denote the corresponding quantities for the learned representation by
$\mathcal M_{\hat z}$, $\widehatEinst$, and $\widehatElag$.
Because these edge sets are defined by conditional dependence, their edges may correspond to direct causal edges or connect variables with no direct causal edge.
For example, in $z_{t,i}\to z_{t,k}\leftarrow z_{t,j}$, the conditioning set in~\eqref{eq:instantaneous_markov_relations} includes the common child $z_{t,k}$, and conditioning on it can induce dependence between $z_{t,i}$ and $z_{t,j}$, even though no direct causal edge connects them.

% Distributional variation used for identification
We next introduce quantities that characterize how differences in the conditional log transition density across auxiliary-variable values depend on the current and past latent states.
Fix a value $u_t^{(0)}$ of the auxiliary variable as a reference, and let $u_t^{(1)},\ldots,u_t^{(N_{\mathrm{aux}})}$ denote the comparison values.
For each $n=1,\ldots,N_{\mathrm{aux}}$, define
\begin{equation}
\Delta_n(\vz_t,\vz_{t-1})
:=
\log p(\vz_t\mid \vz_{t-1},u_t^{(n)})
-
\log p(\vz_t\mid \vz_{t-1},u_t^{(0)}),
\label{eq:instantaneous_log_transition_difference}
\end{equation}
and define $\widehat\Delta_n(\hat{\vz}_t, \hat{\vz}_{t-1})$ in the same way for the learned representation.
We then define the instantaneous and cross-time variation vectors
\begin{align}
\vw_n(\vz_t,\vz_{t-1})
&:=
\left[
\left(
\frac{\partial^2\Delta_n}{\partial z_{t,i}^2}
\right)_{i=1}^{d_z},
\left(
\frac{\partial^2\Delta_n}
{\partial z_{t,i}\partial z_{t,j}}
\right)_{\{z_{t,i},z_{t,j}\}\in \Einst},
\left(
\frac{\partial\Delta_n}{\partial z_{t,i}}
\right)_{i=1}^{d_z}
\right]^\top,
\label{eq:instantaneous_variation_vector}\\
\vv_{ia}(\vz_t,\vz_{t-1})
&:=
\left(
\frac{\partial^2\Delta_n}
{\partial z_{t,i}\partial z_{t-1,a}}
\right)_{n=1}^{N_{\mathrm{aux}}}.
\label{eq:cross_time_variation_vector}
\end{align}
The corresponding learned quantity $\hat{\vv}_{ia}(\hat{\vz}_t,\hat{\vz}_{t-1})$ is obtained from $\widehat\Delta_n(\hat{\vz}_t,\hat{\vz}_{t-1})$ in the same manner.
To examine cross-time relations between $\vz_t$ and $\vz_{t+1}$, we also use $\vv_{bi}(\vz_{t+1},\vz_t)$, with $\hat{\vv}_{bi}(\hat{\vz}_{t+1},\hat{\vz}_t)$ denoting its learned counterpart.
Because $H$ does not depend on the value of the auxiliary variable, the Jacobian term introduced by the change of variables cancels in the log-density ratio, giving $\widehat\Delta_n(\hat{\vz}_t,\hat{\vz}_{t-1}) =\Delta_n(H(\hat{\vz}_t),H(\hat{\vz}_{t-1}))$.
For fixed learned-component indices $j$ and $a$, differentiating this equation expresses $\hat{\vv}_{ja}(\hat{\vz}_t,\hat{\vz}_{t-1})$ as a linear combination of the true variation vectors $\vv_{km}(H(\hat{\vz}_t),H(\hat{\vz}_{t-1}))$ over current and past component indices $k$ and $m$, where the coefficients are products of Jacobian entries of $H$.
In Theorem~\ref{thm:elementwise_identification}, the instantaneous variation vector $\vw_n$ and the cross-time variation vector $\vv_{ia}$ play complementary roles in establishing component-wise identification.
First, the condition involving $\vw_n$ restricts possible mixing between latent components to a specific type of neighboring pair in $\mathcal M_z$.
Following \citet{Zhang2024a}, we call $z_{t,j}$ an intimate neighbor of $z_{t,i}$ if $z_{t,j}$ is adjacent to $z_{t,i}$ and to every other neighbor of $z_{t,i}$; the formal definition is given in Appendix~\ref{app:latent_component_identification}.
The condition involving $\vv_{ia}$ then eliminates this remaining mixing, yielding component-wise identification.

\begin{theorem}[Identification of latent components]
\label{thm:elementwise_identification}
Suppose that the following conditions hold.
\identcondition{cond:latent-regularity}{Regularity.}
The inverse $H=(\phi\circ g)^{-1}$ exists and is a $C^2$ diffeomorphism independent of time and the value of the auxiliary variable.
The transition densities are positive, and their log densities are twice continuously differentiable.

\identcondition{cond:instantaneous-variation}{Instantaneous variation.}
Let $N_{\mathrm{inst}}:=2d_z+|\Einst|$. 
For every $\vz_t$, there exists a set of $N_{\mathrm{inst}}$ pairs $(n_q,\bar{\vz}_{t-1}^{(q)})\in \{1,\ldots,N_{\mathrm{aux}}\}\times\mathbb R^{d_z}$, $q=1,\ldots,N_{\mathrm{inst}}$, such that
\begin{equation}
\mW(\vz_t):=
\left[
\vw_{n_q}(\vz_t,\bar{\vz}_{t-1}^{(q)})
\right]_{q=1}^{N_{\mathrm{inst}}}
\quad\text{has rank }N_{\mathrm{inst}}.
\label{eq:instantaneous_variation_matrix}
\end{equation}
Here, the indices $n_q$ need not be distinct from one another.

\identcondition{cond:cross-time-variation}{Cross-time variation and separation.}
For each learned variation vector $\hat{\vv}_{ja}(\hat{\vz}_t,\hat{\vz}_{t-1})$, the vectors $\vv_{km}(\vz_t,\vz_{t-1})$ indexed by all pairs $(k,m)$ that can contribute to its linear expansion above are linearly independent almost everywhere.
Moreover, for every intimate neighbor $z_{t,j}$ of $z_{t,i}$ in $\mathcal M_z$, there exists a past or future component with a conditional-dependence edge to $z_{t,i}$ and no such edge to $z_{t,j}$.
The precise index sets and past and future conditions are given in Appendix~\ref{app:latent_component_identification}.

\identcondition{cond:learned-edge-bounds}{Bounds on the learned edge sets.}
$|\widehatEinst|\leq|\Einst|$ and $|\widehatElag|\leq|\Elag|$ hold.

\par\noindent
Under these conditions, there exist a permutation $\pi_z$ and univariate $C^2$ diffeomorphisms $\xi_1,\ldots,\xi_{d_z}$ such that $\hat z_{t,\pi_z(i)}=\xi_i(z_{t,i})$ for every $i$ and $t$.
The same permutation $\pi_z$ also defines an isomorphism between $\mathcal M_z$ and $\mathcal M_{\hat z}$.
\end{theorem}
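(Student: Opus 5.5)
The argument has two stages, following the structure used by \citet{Zhang2024a} for Markov-network identification. The first stage uses condition~\ref{cond:instantaneous-variation} to reduce the mixing $H$ to permutations and intimate-neighbor mixing. The second stage uses condition~\ref{cond:cross-time-variation} to remove the remaining mixing.

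\textbf{Stage 1 (instantaneous variation).} Start from the identity $\widehat\Delta_n(\hat{\vz}_t,\hat{\vz}_{t-1})=\Delta_n(H(\hat{\vz}_t),H(\hat{\vz}_{t-1}))$, which holds because the Jacobian cancels. Fix a pair $j\neq l$ of learned components with $\{\hat z_{t,j},\hat z_{t,l}\}\notin\widehatEinst$. For such a pair, the conditional log density of $\hat{\vz}_t$ given $(\hat{\vz}_{t-1},u_t)$ has zero mixed partial $\partial^2/\partial\hat z_{t,j}\partial\hat z_{t,l}$. This is the Hammersley--Clifford characterization under positive, $C^2$ densities from condition~\ref{cond:latent-regularity}. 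Hence $\partial^2\widehat\Delta_n/\partial\hat z_{t,j}\partial\hat z_{t,l}=0$ for every $n$.

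Expanding by the chain rule (only the $\vz_t$-block of $H$ depends on $\hat{\vz}_t$) gives a linear combination of the entries of $\vw_n(\vz_t,\vz_{t-1})$. The coefficients are:
\begin{itemize}
\item $H_{ij}H_{il}$ for the second derivatives $\partial^2\Delta_n/\partial z_{t,i}^2$;
\item $H_{ij}H_{kl}+H_{kj}H_{il}$ for the pairs $\{i,k\}\in\Einst$;
\item $\partial^2 H_i/\partial\hat z_{t,j}\partial\hat z_{t,l}$ for the first derivatives.
\end{itemize}
These coefficients depend only on $\hat{\vz}_t$, hence only on $\vz_t$. Pairs outside $\Einst$ contribute nothing, since $\Delta_n$ has a zero mixed partial there by~\eqref{eq:instantaneous_markov_relations}.

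Evaluating at the $N_{\mathrm{inst}}$ pairs $(n_q,\bar{\vz}^{(q)}_{t-1})$ with $\vz_t$ fixed gives $\mW(\vz_t)^\top c=0$. The rank condition then forces every coefficient to vanish. This yields $H_{ij}H_{il}=0$ for all $i$, and $H_{ij}H_{kl}+H_{kj}H_{il}=0$ on edges of $\Einst$.

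Next combine this with invertibility of $J_H$. Each row has a nonzero entry, and by a matching argument we obtain a permutation $\pi_z$ with $H_{i,\pi_z(i)}\neq0$. Together with $|\widehatEinst|\le|\Einst|$, a counting argument shows that $\pi_z$ maps $\Einst$ onto $\widehatEinst$. The argument also shows that an off-diagonal entry $H_{k,\pi_z(i)}\neq 0$ is possible only when $z_{t,k}$ is an intimate neighbor of $z_{t,i}$. The reason is that otherwise some neighbor $z_{t,m}$ of $z_{t,i}$ that is non-adjacent to $z_{t,k}$ would produce a nonzero forced coefficient. This is essentially the lemma of \citet{Zhang2024a}, which I would reproduce with the differentiation carried out conditionally on $\vz_{t-1}$.

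\textbf{Stage 2 (cross-time separation).} Now differentiate the identity in both $\hat z_{t,j}$ and $\hat z_{t-1,a}$, collecting over $n$. This gives $\hat{\vv}_{ja}=\sum_{k,m}(\partial z_{t,k}/\partial\hat z_{t,j})(\partial z_{t-1,m}/\partial\hat z_{t-1,a})\,\vv_{km}$. Here $\vv_{km}\equiv0$ whenever $\{z_{t-1,m},z_{t,k}\}\notin\Elag$.

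Suppose $z_{t,k}$ is an intimate neighbor of $z_{t,i}$ with $H_{k,\pi_z(i)}\neq0$. Condition~\ref{cond:cross-time-variation} supplies a past component $z_{t-1,a}$ adjacent in $\Elag$ to $z_{t,i}$ but not to $z_{t,k}$; the future case is handled symmetrically using $\vv_{bi}(\vz_{t+1},\vz_t)$. Take the learned pair $(\pi_z(k),\pi_z(a))$. After Stage 1, applied at time $t-1$ as well, the bound $|\widehatElag|\le|\Elag|$ together with the isomorphism forces $\hat{\vv}_{\pi_z(k)\pi_z(a)}$ to vanish whenever the true edge is absent. Linear independence of the contributing $\vv_{km}$ then forces each coefficient to be zero almost everywhere, including $H_{i,\pi_z(k)}\,\partial z_{t-1,a}/\partial\hat z_{t-1,\pi_z(a)}$. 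The second factor is nonzero by Stage 1, so $H_{i,\pi_z(k)}=0$, and continuity extends this from almost everywhere to everywhere. Hence $J_H$ is a permuted diagonal matrix, so $H$ is component-wise. Its inverse components $\xi_i$ are $C^2$ univariate diffeomorphisms, and $\pi_z$ is a graph isomorphism by Stage 1.

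\textbf{Main obstacle.} The hardest step is the circularity in Stage 2. Concluding that the learned edge is absent in $\widehatElag$ requires edge-correspondence under $\pi_z$, which in turn relies on the edge-count bound. I would break the circularity by first showing, from the expansion and linear independence, that every true edge in $\Elag$ induces a learned edge under $\pi_z$ (an injection). Equality of cardinalities then gives a bijection, after which the vanishing argument goes through.
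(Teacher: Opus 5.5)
Your two-stage plan follows the paper's proof closely, including how you break the circularity in Stage 2 (edge preservation from linear independence, then counting, as in Lemma~\ref{lemma:local_time_relation_isomorphism}). However, two steps need repair before the argument closes.

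First, the intimate-neighbor restriction in Stage 1 has its indices transposed. With $H_{ij}=\partial z_{t,i}/\partial\hat z_{t,j}$, the Stage-1 constraints give $H_{i,\pi(k)}\neq0\Rightarrow z_{t,k}\in\Psi_{\mathcal M_z}(z_{t,i})$ (Lemma~\ref{lemma:residual_mixing_is_intimate}). Here is why. Suppose a neighbor $z_{t,m}$ of $z_{t,i}$ is not adjacent to $z_{t,k}$. Then $\{\hat z_{t,\pi(k)},\hat z_{t,\pi(m)}\}$ is a learned nonedge, and the true edge $\{z_{t,i},z_{t,m}\}$ gives $H_{i,\pi(k)}H_{m,\pi(m)}+H_{m,\pi(k)}H_{i,\pi(m)}=0$. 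The product of these two terms vanishes by the node condition $H_{q,\pi(k)}H_{q,\pi(m)}=0$, so each term vanishes, and hence $H_{i,\pi(k)}=0$. You need this splitting step explicitly, since the vanishing of the sum alone is not enough.

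Your version, ``$H_{k,\pi_z(i)}\neq0$ only if $z_{t,k}\in\Psi_{\mathcal M_z}(z_{t,i})$,'' does not follow from these constraints. Take the chain network with edges $\{z_{t,1},z_{t,2}\}$ and $\{z_{t,2},z_{t,3}\}$. No Stage-1 constraint involves $H_{1,\pi(2)}$, yet $z_{t,1}\notin\Psi_{\mathcal M_z}(z_{t,2})$. As a result, your Stage 2 supposes $H_{k,\pi_z(i)}\neq0$ and derives $H_{i,\pi_z(k)}=0$, which is no contradiction. The derivation itself is correct and is exactly Lemma~\ref{lemma:eliminate_residual_mixing_by_time_relation}. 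It kills $H_{i,\pi(k)}$ directly whenever $z_{t,k}\in\Psi_{\mathcal M_z}(z_{t,i})$, which meshes with the corrected Stage 1, not with the version you wrote.

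Second, the permutation from Stage 1 is only pointwise. It is read off a nonzero term in the expansion of $\det J_H(\hat{\vz}_t)$ and need not be unique while intimate-neighbor mixing remains. So you cannot name it $\pi_z$ yet, nor use the same one at $\hat{\vz}_t$ and $\hat{\vz}_{t-1}$. Stage 2 has to be run with separate local permutations $\pi_{\hat{\vz}_t}$ and $\pi_{\hat{\vz}_{t-1}}$. The counting argument works pointwise, and at points where the a.e.\ rank condition fails you use that the chosen diagonal entries stay nonzero on a neighborhood.

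Once every off-diagonal entry is eliminated, the permutation at each point is unique. Continuity of $J_H$ makes it locally constant, and connectedness of the domain of $H$ (diffeomorphic to $\mathbb R^{d_z}$) makes it a single $\pi_z$. Only then does the fixed support pattern of the Jacobian of $H^{-1}$ on $\mathbb R^{d_z}$ show that $(H^{-1})_{\pi_z(i)}$ depends on $z_{t,i}$ alone. That yields the $\xi_i$ and the isomorphism of $\mathcal M_z$ and $\mathcal M_{\hat z}$ under one permutation. This is Lemma~\ref{lemma:global_permutation_from_local_jacobian}, which your final step skips.
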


% discussion of the assumptions
The following discussion clarifies the scope of the conditions in Theorem~\ref{thm:elementwise_identification} and the role of Condition~\ref{cond:cross-time-variation} when intimate neighbors are present.
Compared with IDOL, whose sufficient-variability condition uses only variation across past states, Condition~\ref{cond:instantaneous-variation} additionally exploits changes in transition-noise distributions across auxiliary-variable values, including changes in their mean or variance.
Consequently, our condition may hold for additive Gaussian transition noise in cases where IDOL's condition does not~\citep{Li2025a}.
However, as in IDOL, Condition~\ref{cond:instantaneous-variation} cannot hold when each transition-noise component can be expressed as the sum of a function of the current state and a function of the past state in the presence of instantaneous edges, as shown in Remark~\ref{remark:supp_instantaneous_additive_separation_rank_bound}.
Condition~\ref{cond:cross-time-variation} requires linear independence to prevent cancellation among the relevant cross-time variation vectors and uses differences in past or future conditional-dependence relations to eliminate mixing between intimate neighbors.
Complete proofs are given in Appendices~\ref{app:instantaneous_markov_identification} and~\ref{app:latent_component_identification}.

% Topic3: Graph identification
\subsection{Identification of the Causal Graph}
\label{subsec:causal_graph_identification}

We then establish conditions for identifying the causal graph.
For the learned representation, let $\hat r$ be a learned inverse transition function that estimates transition noise as $\hat{\veps}_t=\hat r(\hat{\vz}_t,\hat{\vz}_{t-1})$.
To express direct causal edges through derivatives, we regard each $f_i$ as a function of all candidate instantaneous and lagged variables and characterize its parent sets by the variables whose corresponding partial derivatives are not identically zero.
Theorem~\ref{thm:causal_graph_identification} builds on the latent-component identification in Theorem~\ref{thm:elementwise_identification} and component-wise identification of transition noise to establish a correspondence between Jacobian entries of $\hat r$ that are not identically zero and direct causal edges.

\begin{theorem}[Identification of the causal graph]
\label{thm:causal_graph_identification}
Suppose that Conditions~\ref{cond:latent-regularity}--\ref{cond:learned-edge-bounds} of Theorem~\ref{thm:elementwise_identification} and the following additional conditions hold, and let $\pi_z$ denote the resulting permutation between the true and learned latent components.

\identcondition{cond:transition-noise-identification}{Transition-noise identification.}
The conditional transition-noise distribution factorizes across components, with each factor belonging to an exponential family with at least two twice continuously differentiable and nonredundant sufficient statistics, such as $\epsilon_{t,i}$ and $\epsilon_{t,i}^2$ for a Gaussian distribution.
The natural parameters vary sufficiently across auxiliary-variable values to satisfy the full-rank condition required by IIA~\citep{Morioka2021a}.
The remaining regularity conditions on the true and learned transformations, together with the estimation conditions for contrastive learning, are given in Assumption~\ref{assumption:supp_iia_transition_noise_identification}.

\identcondition{cond:structural-function-regularity}{Differentiability of the structural functions.}
Each $f_i$ is continuously differentiable.

\par\noindent
Under the above conditions, there exist a permutation $\pi_\epsilon$ and univariate $C^2$ diffeomorphisms $q_i$ such that $\hat\epsilon_{t,\pi_\epsilon(i)}=q_i(\epsilon_{t,i})$ for every $i$ and $t$.
Moreover, because the instantaneous causal structure is a DAG, the instantaneous Jacobian support of $\hat r$ uniquely determines the correspondence between the indices $\pi_\epsilon(i)$ and $\pi_z(i)$.
Under this correspondence, the following equivalence holds for
$\ell\in\{0,1\}$:
\begin{equation}
\frac{\partial\hat r_{\pi_\epsilon(i)}}
{\partial\hat z_{t-\ell,\pi_z(j)}}
\not\equiv0
\quad\Longleftrightarrow\quad
\frac{\partial f_i}
{\partial z_{t-\ell,j}}
\not\equiv0.
\label{eq:transition_noise_jacobian_and_causal_edge}
\end{equation}
Here, for $\ell=0$, the case $i=j$ is excluded because instantaneous causal edges are defined only between distinct components.
Consequently, the true instantaneous and lagged causal edges are identifiable up to the component permutation $\pi_z$ from the corresponding Jacobian supports of $\hat r$.
\end{theorem}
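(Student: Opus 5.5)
We would prove Theorem~\ref{thm:causal_graph_identification} in three stages: identification of the transition noise via IIA, matching of the noise indices to the latent indices, and transfer of Jacobian supports.

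\textbf{Stage 1: transition-noise identification.} Under Condition~\ref{cond:transition-noise-identification}, we invoke the IIA identifiability result of \citet{Morioka2021a}, in the form collected in Assumption~\ref{assumption:supp_iia_transition_noise_identification}. The observation model is $\vx_t=g(\vz_t)$, and by the DAG assumption the structural equations define an invertible map $(\vz_t,\vz_{t-1})\mapsto(\veps_t,\vz_{t-1})$. Together these mean $(\vx_t,\vx_{t-1})$ is an invertible mixing of $(\veps_t,\vz_{t-1})$, and the mixing does not depend on $u_t$. The contrastive estimator with the full-rank natural-parameter variation then yields a permutation $\pi_\epsilon$ and univariate $C^2$ diffeomorphisms $q_i$ with $\hat\epsilon_{t,\pi_\epsilon(i)}=q_i(\epsilon_{t,i})$.

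\textbf{Stage 2: matching $\pi_\epsilon$ to $\pi_z$.} By Theorem~\ref{thm:elementwise_identification}, $\hat z_{t,\pi_z(j)}=\xi_j(z_{t,j})$. Let $r=(r_i)$ denote the true inverse transition, $\epsilon_{t,i}=r_i(\vz_t,\vz_{t-1})$. Then
\begin{equation}
\hat r_{\pi_\epsilon(i)}(\hat\vz_t,\hat\vz_{t-1})
=
q_i\Bigl(r_i\bigl(\xi^{-1}(\hat\vz_t),\xi^{-1}(\hat\vz_{t-1})\bigr)\Bigr),
\label{eq:plan_rhat_composition}
\end{equation}
where $\xi^{-1}$ acts componentwise after undoing $\pi_z$. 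The chain rule gives
\[
\frac{\partial\hat r_{\pi_\epsilon(i)}}{\partial\hat z_{t-\ell,\pi_z(j)}}
= q_i'(\epsilon_{t,i})\,\frac{\partial r_i}{\partial z_{t-\ell,j}}\,(\xi_j^{-1})'.
\]
Both $q_i'$ and $(\xi_j^{-1})'$ are nonvanishing, so the Jacobian supports coincide up to permutation: $\partial\hat r_{\pi_\epsilon(i)}/\partial\hat z_{t-\ell,\pi_z(j)}\not\equiv0$ if and only if $\partial r_i/\partial z_{t-\ell,j}\not\equiv0$.

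For the matching itself, the instantaneous Jacobian $\partial r/\partial\vz_t$, in a causal order, is triangular with nonzero diagonal. The diagonal is nonzero because $\partial r_i/\partial z_{t,i}\neq0$, which follows from invertibility of $f_i$ in $\epsilon_{t,i}$. Hence the support of $\partial\hat r/\partial\hat\vz_t$ equals a row-permuted triangular pattern. Since a DAG support pattern admits a unique perfect matching, the bijection from noise indices to latent indices is forced: the pairing $\pi_\epsilon(i)\leftrightarrow\pi_z(i)$ is the unique permutation placing nonzeros on the diagonal. We would argue this by induction: a sink row (a component with no instantaneous parents) has exactly one nonzero instantaneous entry, which fixes its pair; removing it and repeating fixes the rest.

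\textbf{Stage 3: relating $r_i$ to $f_i$.} This is the step we expect to be the main obstacle. Since $z_{t,i}=f_i(\mathrm{pa},\epsilon_{t,i})$ is invertible in $\epsilon_{t,i}$, we have $\epsilon_{t,i}=r_i(\vz_t,\vz_{t-1})$ with $r_i$ depending only on $z_{t,i}$ and the parents of $z_{t,i}$. Implicit differentiation of $z_{t,i}-f_i(\cdot,r_i)=0$ gives
\[
\frac{\partial r_i}{\partial z_{t-\ell,j}}
=-\frac{\partial f_i/\partial z_{t-\ell,j}}{\partial f_i/\partial\epsilon_{t,i}} \qquad (j\ne i \text{ if } \ell=0).
\]
This requires $\partial f_i/\partial\epsilon_{t,i}\neq0$, which we would justify from Condition~\ref{cond:structural-function-regularity} together with the invertibility implicit in Stage 1. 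Two subtleties remain. First, a derivative that is ``not identically zero'' must be transferred through the composition in~\eqref{eq:plan_rhat_composition}: because the composition maps are diffeomorphisms, nonvanishing on an open set is preserved. Second, we must confirm that no diagonal ($i=j$, $\ell=0$) terms are misread as edges. Combining Stages 2 and 3 gives~\eqref{eq:transition_noise_jacobian_and_causal_edge}.
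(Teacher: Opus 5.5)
Your proposal is correct and follows the paper's proof in Appendix~\ref{app:causal_graph_identification}: IIA for the transition noise, then the chain-rule correspondence between the Jacobian supports of $\hat r$ and $r$, then uniqueness of the perfect matching from acyclicity, and finally differentiation of $z_{t,i}=f_i(\cdot,r_i(\vz_t,\vz_{t-1}))$ to relate the off-diagonal and lagged derivatives of $r_i$ to those of $f_i$. The differences are minor. You prove uniqueness of the matching by repeatedly peeling off parentless (source, not sink) components, whereas the paper uses a cycle-contradiction argument. In addition, the invertibility of $(\vz_t,\vz_{t-1})\mapsto(\veps_t,\vz_{t-1})$ and the differentiability of $r$ come from the diffeomorphism assumption on $\rho_{\mathrm{aug}}$ and $g$ in Assumption~\ref{assumption:supp_iia_transition_noise_identification}, not from the DAG assumption, and with $f_i\in C^1$ they give $1=(\partial f_i/\partial\epsilon_{t,i})(\partial r_i/\partial z_{t,i})$, which is exactly the nonvanishing you need.
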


We discuss the scope of the conditions and theoretical setting in Theorem~\ref{thm:causal_graph_identification}.
Condition~\ref{cond:transition-noise-identification}, inherited from the IIA identification result~\citep{Morioka2021a}, requires the conditional transition-noise distributions to vary sufficiently across auxiliary-variable values and therefore does not cover data for which these distributions are invariant across those values.
The smoothness in Conditions~\ref{cond:transition-noise-identification} and~\ref{cond:structural-function-regularity} is used to derive the equivalence in~\eqref{eq:transition_noise_jacobian_and_causal_edge}.
Using this equivalence to identify causal edges from the Jacobian support of $\hat r$ applies the approach of \citet{Reizinger2023a} to latent transition dynamics.
Appendix~\ref{app:transition_noise_identification} establishes transition-noise identification under the sufficient setting $\mathcal X=\mathbb R^{d_z}$ used in the IIA argument, and Appendix~\ref{app:causal_graph_identification} provides the complete causal-graph identification proof.
The latent-component result relies on the $C^2$ diffeomorphism $H$ in Condition~\ref{cond:latent-regularity}, whereas the causal-graph result additionally uses the transition-noise correspondence; neither identification argument otherwise depends on the dimension of the observations.
Related invertibility assumptions between learned and true latent states are also used in CRL analyses~\citep{Lippe2023a,Morioka2024a,Li2025a}.
For higher-dimensional observations, the same arguments would apply if the learned representation admitted such an $H$ and allowed the transition noise to be identified; however, we do not establish conditions under which such a representation can be learned.

\section{Learning Framework}
\label{sec:learning-method}

This section describes the learning framework, summarized in Figure~\ref{fig:learning_framework}, and how the causal graph is subsequently estimated from the learned model.
We first apply the shared encoder $\phi$ separately to each observation $\vx_{t-\ell}$ for $\ell=0,\ldots,L$ and then apply the learned inverse transition function $\hat r$ to the resulting representations.
For contrastive learning, we also use a history feature map $\eta$ to obtain a history feature $\vh_t$ from the past learned representations:
\begin{equation}
    \hat{\vz}_{t-\ell}=\phi(\vx_{t-\ell}),
    \qquad
    \hat{\veps}_t=\hat r(\hat{\vz}_t,\hat{\vz}_{t-1},\ldots,\hat{\vz}_{t-L}),
    \qquad
    \vh_t=\eta(\hat{\vz}_{t-1},\ldots,\hat{\vz}_{t-L}).
    \label{eq:learning_latent_and_noise}
\end{equation}
Further parameterization details are provided in Appendix~\ref{app:learning-model-parameterization}.

\paragraph{Contrastive learning with auxiliary variables.}
Based on IIA~\citep{Morioka2021a}, we consider two variants of contrastive learning: TCL for discrete auxiliary variables, such as labels indicating an environment, experimental condition, motion phase, season, or time of day, and GCL for general auxiliary variables, including continuous values such as time.
TCL and GCL use the following regression functions, respectively:
\begin{equation}
 \begin{aligned}
 T_u^{\mathrm{TCL}}(\hat{\veps}_t,\vh_t)
 &={}
 \sum_{i=1}^{d_z}\sum_{k=1}^{K}
 w_{uik}\psi_{ik}(\hat\epsilon_{t,i})
 +\varphi(\vh_t,u)+b_u,\\
 T^{\mathrm{GCL}}(\hat{\veps}_t,\vh_t,u_t)
 &={}
 \sum_{i=1}^{d_z}\sum_{k=1}^{K}
 \psi_{ik}(\hat\epsilon_{t,i})
 \mu_{ik}(u_t)
 +\varphi(\vh_t,u_t)
 +B(\hat{\veps}_t,\vh_t).
 \end{aligned}
 \label{eq:learning_contrastive_scores}
\end{equation}
Here, $K$ is the number of feature functions for each transition-noise component, and $\psi_{ik}$ denotes the $k$-th scalar-valued feature function applied to $\hat\epsilon_{t,i}$.
The feature functions may be learned jointly with the remaining model components or, when the conditional noise family is specified in advance, fixed to include its sufficient statistics.
For TCL, $u$ indexes a class, $T_u^{\mathrm{TCL}}$ is its logit, and $w_{uik}$ and $b_u$ are learned separately for each class.
For GCL, the learned function $\mu_{ik}$ determines the coefficient of the $k$-th feature of component $i$ from the auxiliary variable.
The learned function $\varphi$ models the relation between the history feature and the auxiliary variable, while $B$ collects terms independent of the auxiliary variable.
In both regression functions, the terms that depend jointly on transition noise and the auxiliary variable decompose additively across components, as required by IIA.

The TCL logits over all possible values of the discrete auxiliary variable define a multinomial logistic regression model, which is trained using softmax cross-entropy with the observed $u_t$ as the target.
In GCL, $T^{\mathrm{GCL}}$ serves as the logit of a binary logistic regression model trained using binary cross-entropy, with matched tuples $(\hat{\veps}_t,\vh_t,u_t)$ as positive examples and mismatched tuples $(\hat{\veps}_t,\vh_t,u'_t)$ as negative examples; $u'_t$ is obtained by permuting auxiliary-variable values within each minibatch.
We write $\mathcal L_{\mathrm{IIA}}$ for the selected TCL or GCL loss; see Appendix~\ref{app:learning-contrastive} for details.

\begin{wrapfigure}{r}{0.35\columnwidth}
    \vspace{-1.3\baselineskip}
    \centering
    \includegraphics[width=\linewidth]{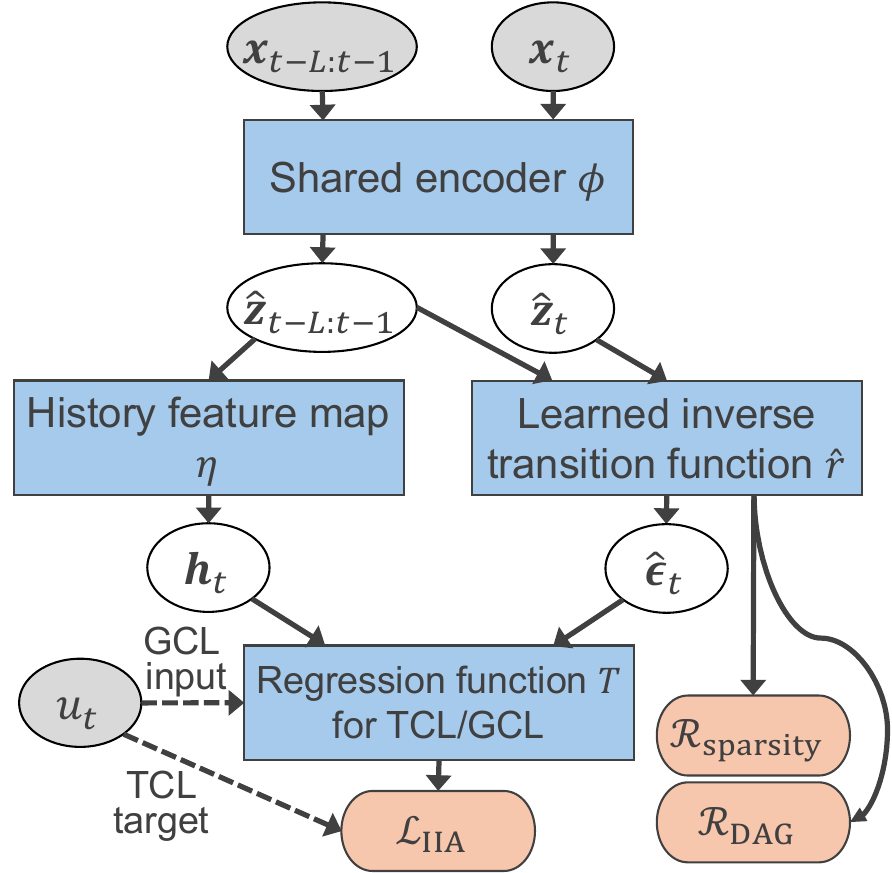}
    \vspace{-1.5\baselineskip}
    \caption{Learning framework. Gray nodes denote observed variables; white nodes denote learned representations, estimated transition noise, and history features. Dashed arrows denote auxiliary-variable inputs to TCL/GCL. The encoder $\phi$ is shared across time, and blue modules are trained jointly using the orange objectives.}
    \label{fig:learning_framework}
\end{wrapfigure}

\paragraph{Structural regularization and training objective.}
Motivated by the bounds on learned edge sets in Condition~\ref{cond:learned-edge-bounds}, we impose sparsity regularization on the instantaneous and lagged input dependencies of $\hat r$.
Following IDOL's use of sparsity regularization~\citep{Li2025a}, we use $\ell_1$-based regularizers $\mathcal R_{\mathrm{inst}}$ and $\mathcal R_{\mathrm{lag}}$ to encourage sparse instantaneous and lagged input dependencies of $\hat r$, respectively.
We denote their sum by $\mathcal R_{\mathrm{sparsity}}$.
Because latent components are identifiable only up to permutation, any acyclic instantaneous structure can be represented under a fixed triangular mask after an appropriate permutation of the latent coordinates. 
However, this requires the encoder to align the learned components with the fixed order encoded by the mask. 
We therefore also consider a variant using the NOTEARS acyclicity regularizer $\mathcal R_{\mathrm{DAG}}$~\citep{Zheng2018a}, which does not fix an order during training.
We optimize all learnable parameters using the objective $\mathcal L$:
\begin{equation}
    \mathcal L
    :=
    \mathcal L_{\mathrm{IIA}}
    +\lambda_{\mathrm{sparsity}}\mathcal R_{\mathrm{sparsity}}
    +\lambda_{\mathrm{DAG}}\mathcal R_{\mathrm{DAG}}.
    \label{eq:learning_full_objective}
\end{equation}
Here, $\lambda_{\mathrm{sparsity}}$ and $\lambda_{\mathrm{DAG}}$ are nonnegative coefficients controlling the strengths of the sparsity and acyclicity regularization, respectively.
When the fixed triangular mask is used, the $\mathcal R_{\mathrm{DAG}}$ term can be omitted.
Further details of the regularizers and training objective are provided in Appendix~\ref{app:learning-structural-regularization}.

\paragraph{Causal-graph estimation.}
After learning, we estimate a causal graph over the learned latent coordinates from the Jacobian support of $\hat r$; under the conditions of Theorem~\ref{thm:causal_graph_identification}, the estimated edges correspond to the true causal edges up to a component permutation.
For the case $L=1$ considered in the theory, we estimate that an instantaneous edge $\hat z_{t,j}\to\hat z_{t,i}$ is present for $i\neq j$ when $\partial\hat r_i/\partial\hat z_{t,j}\not\equiv0$, and that a lagged edge $\hat z_{t-1,j}\to\hat z_{t,i}$ is present when $\partial\hat r_i/\partial\hat z_{t-1,j}\not\equiv0$.
The same rule can be applied to each lag $\ell=1,\ldots,L$.

\section{Experiments}
\label{sec:experiments}

We evaluate latent-state and causal-graph recovery on synthetic data and the utility of learned representations for downstream forecasting on real-world data.
The evaluated CRL baselines are IDOL~\citep{Li2025a}, G-CaRL~\citep{Morioka2024a}, CtrlNS~\citep{Song2024b}, and LEAP~\citep{Yao2022b}.
iCITRIS~\citep{Lippe2023a} is not evaluated because the experimental settings provide neither its required intervention targets nor interventions that remove incoming instantaneous edges.
In addition, iVAE~\citep{Khemakhem2020a} and $\beta$-VAE~\citep{Higgins2017a} are included as representation-learning baselines that use and do not use the auxiliary variable, respectively. 
iVAE is evaluated only on synthetic data because, unlike the other methods, its inference model conditions on $u$ when encoding observations at downstream inference.

\subsection{Experiments on Synthetic Data}
\label{sec:exp_synth}

We generate synthetic data with nonlinear structural and observation functions defined by three-layer neural networks that remain invariant across auxiliary-variable values, while Gaussian transition-noise distributions vary discretely across environments or continuously over time.
The main comparison uses $\vx_t,\vz_t\in\mathbb R^5$ and $L=1$, with $2^{16}$ samples from 256 environments for discrete-variation data and a sequence of length $2^{18}$ with time as the auxiliary variable for continuous-variation data.
We measure latent-state recovery by the mean correlation coefficient (MCC) and graph recovery by instantaneous and lagged F1 scores.
Each ranges from 0 to 1, with higher values indicating better recovery.
Details of data generation and the evaluation protocol are provided in Appendix~\ref{app:synthetic_common_experimental_settings}.

We evaluate \methodTCL{} and \methodGCL{} under both transition-noise variation settings.
For discrete-variation data, both use the observed environment labels; for continuous-variation data, \methodGCL{} uses continuous time and \methodTCL{} uses its discretization into 256 bins.
The results in Table~\ref{tab:synthetic_main_bns} show that \methodTCL{} attains the highest mean score for every metric in both settings.
\methodGCL{} attains comparable MCC and lagged F1 scores, but its instantaneous F1 is lower on discrete-variation data.
The sample-size analysis in Appendix~\ref{app:synthetic_sample_size_sensitivity} indicates that \methodGCL{} is more sensitive to sample size than \methodTCL{} and that its lower instantaneous F1 on discrete-variation data is partly a finite-sample effect.
Reducing the depth of the observation function from three layers to one increases mean MCC for every evaluated method, with the largest increases for IDOL and LEAP, which remain below both proposed methods, as shown in Appendix~\ref{app:synthetic_mapping_complexity}.
Appendix~\ref{app:experimental-details} additionally reports sensitivity analyses of latent causal structure and latent dimension, along with ablations of the objective components.

\begin{table}[tbp]
    \centering
    \caption{Comparison of the proposed methods with baselines on synthetic data. The left and right blocks report results for discrete-variation data and continuous-variation data, respectively. Each entry is the mean $\pm$ standard deviation over 10 seeds; boldface indicates the best value in each column, and \texttt{--} indicates a metric not applicable to the method.}
    \label{tab:synthetic_main_bns}
    \small
    \setlength{\tabcolsep}{3pt}
    \resizebox{\textwidth}{!}{
    \begin{tabular}{l|ccc|ccc}
        \hline
        & \multicolumn{3}{c|}{Discrete variation} & \multicolumn{3}{c}{Continuous variation} \\
        \cline{2-7}
        Method & MCC & Inst. F1 & Lagged F1 & MCC & Inst. F1 & Lagged F1 \\
        \hline
        \methodTCL{} & $\mathbf{0.951\pm0.011}$ & $\mathbf{0.950\pm0.158}$ & $\mathbf{0.980\pm0.063}$ & $\mathbf{0.959\pm0.010}$ & $\mathbf{0.975\pm0.079}$ & $\mathbf{0.970\pm0.095}$ \\
        \methodGCL{} & $0.940\pm0.010$ & $0.700\pm0.158$ & $\mathbf{0.980\pm0.042}$ & $0.945\pm0.013$ & $\mathbf{0.975\pm0.079}$ & $0.950\pm0.071$ \\
        \hline
        IDOL & $0.468\pm0.046$ & $0.225\pm0.142$ & $0.480\pm0.079$ & $0.534\pm0.090$ & $0.225\pm0.142$ & $0.520\pm0.123$ \\
        G-CaRL & $0.764\pm0.065$ & \texttt{--} & $0.440\pm0.117$ & $0.762\pm0.047$ & \texttt{--} & $0.340\pm0.084$ \\
        CtrlNS & $0.432\pm0.071$ & \texttt{--} & $0.600\pm0.000$ & $0.398\pm0.102$ & \texttt{--} & $0.600\pm0.000$ \\
        LEAP & $0.373\pm0.107$ & \texttt{--} & $0.380\pm0.092$ & $0.365\pm0.086$ & \texttt{--} & $0.380\pm0.063$ \\
        iVAE & $0.230\pm0.106$ & \texttt{--} & \texttt{--} & $0.373\pm0.110$ & \texttt{--} & \texttt{--} \\
        $\beta$-VAE & $0.482\pm0.067$ & \texttt{--} & \texttt{--} & $0.473\pm0.075$ & \texttt{--} & \texttt{--} \\
        \hline
    \end{tabular}
    }
\end{table}

\subsection{Experiments on Real-world Data}
\label{sec:exp_real}

Because ground-truth latent states and causal structures are unavailable for the real-world datasets, we assess the utility of learned representations through two downstream forecasting tasks under held-out conditions: future-motion prediction at a walking speed and weather forecasting in a later period.
For each task, we construct discrete auxiliary labels and use \methodTCL{} with an observation reconstruction loss during pretraining; we then freeze each encoder and train a forecasting head.

\paragraph{Treadmill motion forecasting.}
Using the treadmill walking data of \citet{Riglet2024}, we evaluate future-motion forecasting at the Comfortable speed after pretraining and model selection on the Slow and Fast speeds.
Participants are split among the training, validation, and test sets without overlap; during pretraining, \methodname{} and LEAP use gait phase divided into 12 classes as the auxiliary variable.
With each encoder frozen, a forecasting head is trained and selected on the Slow and Fast data to predict the following 24 frames (1.92 seconds) from the latent representations of the preceding 12 frames (0.96 seconds).
The head is then evaluated without updates on the Comfortable condition, which is excluded from training and model selection.
We report mean marker error, defined as the three-dimensional Euclidean distance between predicted and true marker positions averaged over test windows and markers.
The results in Table~\ref{tab:real_world_forecasting} show that \methodname{} achieves the lowest error at every reported horizon, suggesting that the learned representation supports forecasting at a walking speed excluded from training and model selection.
Details of the experimental setup and results with standard deviations are provided in Appendix~\ref{app:supp_treadmill_motion_forecasting}.

\paragraph{Weather forecasting.}
Using the ERA5 global reanalysis~\citep{Hersbach2020}, we examine whether representations pretrained on historical weather observations support forecasting of relative humidity in a later period across eight regions.
We construct three-hourly observations from five cities in each region, using 2-m temperature, mean sea-level pressure, 10-m zonal and meridional wind components, and precipitation over the preceding three hours.
Relative humidity is excluded from representation pretraining and serves as the downstream prediction target.
For \methodname{} and LEAP, the auxiliary variable combines 12 months, four periods within each month, and eight UTC times at three-hour intervals, yielding 384 classes.
Each method is pretrained and selected using data from 1981--2013; with the selected encoder frozen, a forecasting head is trained on 2014--2016, selected using 2017 validation data, and evaluated on 2018--2020.
The head predicts relative humidity three hours ahead from the latent representations at the preceding eight time points.
Performance is evaluated using standardized mean squared error (MSE).
Among the compared representation-learning methods, \methodname{} achieves the lowest mean standardized MSE in five regions and the second-lowest in the remaining three, suggesting that the learned representations support forecasting in a later evaluation period across diverse regions.
Details of the experimental setup and results with standard deviations are provided in Appendix~\ref{app:supp_weather_forecasting}.

\begin{table*}[tbp]
    \centering
    \caption{Forecasting results on real-world data. Entries are means across runs; lower is better. Treadmill reports mean marker error in mm, and weather reports standardized MSE for three-hour-ahead relative-humidity forecasting. Weather columns are southeastern Australia (AU), Canada Great Lakes--St.~Lawrence (CAN), China (CN), France (FR), Germany (GER), Japan (JP), South Korea (KR), and US Atlantic (US). Boldface marks the best result in each column.}
    \label{tab:real_world_forecasting}
    \footnotesize
    \setlength{\tabcolsep}{2pt}
    \begin{tabular}{l|cccc|cccccccc}
        \hline
        & \multicolumn{4}{c|}{Treadmill: marker error [mm] $\downarrow$}
        & \multicolumn{8}{c}{Weather: standardized MSE $\downarrow$} \\
        Method & 0.48 s & 0.96 s & 1.44 s & 1.92 s & AU & CAN & CN & FR & GER & JP & KR & US \\
        \hline
        \methodname{}
        & $\mathbf{42.060}$ & $\mathbf{58.650}$ & $\mathbf{82.034}$ & $\mathbf{107.309}$
        & $\mathbf{0.352}$ & $0.481$ & $\mathbf{0.546}$ & $\mathbf{0.422}$ & $0.474$ & $\mathbf{0.414}$ & $\mathbf{0.291}$ & $0.427$ \\
        IDOL
        & $44.076$ & $62.400$ & $86.791$ & $115.146$
        & $0.359$ & $0.484$ & $0.555$ & $0.438$ & $\mathbf{0.469}$ & $0.440$ & $0.304$ & $0.440$ \\
        G-CaRL
        & $52.197$ & $75.790$ & $103.133$ & $135.169$
        & $0.649$ & $0.673$ & $0.664$ & $0.750$ & $0.754$ & $0.615$ & $0.535$ & $0.554$ \\
        CtrlNS
        & $43.593$ & $62.337$ & $86.040$ & $117.059$
        & $0.385$ & $0.522$ & $0.618$ & $0.459$ & $0.504$ & $0.501$ & $0.382$ & $0.462$ \\
        LEAP$^{*}$
        & $44.220$ & $63.431$ & $87.338$ & $113.210$
        & $0.396$ & $0.524$ & $0.591$ & $0.486$ & $0.566$ & $0.482$ & $0.327$ & $0.460$ \\
        $\beta$-VAE
        & $43.391$ & $63.305$ & $91.089$ & $123.960$
        & $0.357$ & $\mathbf{0.475}$ & $0.561$ & $0.445$ & $0.484$ & $0.426$ & $0.328$ & $\mathbf{0.416}$ \\
        \hline
    \end{tabular}

    \vspace{2pt}
    \parbox{0.96\linewidth}{\footnotesize $^{*}$For LEAP, pretraining on treadmill data was unstable for two seeds; results are based on eight runs.}
\end{table*}

\section{Discussion and Conclusion}
\label{sec:discussion-conclusion}

The following discussion compares our framework with related methods and considers the limitations of our analysis and experiments.
Our framework requires an observed auxiliary variable associated with sufficient variation in transition-noise distributions, but not the intervention-target labels required by iCITRIS~\citep{Lippe2023a}.
IIA-HMM~\citep{Morioka2021a} may offer a route to relaxing the requirement for an observed auxiliary variable by modeling nonstationarity with a hidden Markov model.
For latent-component identification, our analysis draws on the identification argument of IDOL~\citep{Li2025a} by combining variation across auxiliary-variable values and past states; transition-noise identification builds on IIA~\citep{Morioka2021a}.
The full-rank requirement in Condition~\ref{cond:instantaneous-variation} does not cover models with instantaneous relations when the transition noise can be expressed as a sum of functions of the current and past latent states, including linear transition models.
Consistent with this limitation, Appendix~\ref{app:synthetic_mapping_complexity} shows lower instantaneous F1 for linear transitions.
Finally, because ground-truth latent states and causal structures are unavailable for the real-world datasets, we evaluate predictive utility rather than recovery of these quantities.
Validating the learned states and causal relations on real-world data would require additional evidence, such as domain knowledge or interventional information.
\par
We established sufficient conditions for identifying latent states and their instantaneous and lagged causal structures from time-series observations, using an observed auxiliary variable associated with changes in transition-noise distributions.
Based on these results, we developed \methodname{}, a contrastive learning framework that accommodates discrete and continuous auxiliary variables.
Experiments demonstrated accurate recovery of latent states and their causal structures on synthetic data and the utility of the learned representations for downstream forecasting on real-world data.
Our results show how transition-noise nonstationarity associated with observed auxiliary variables can support CRL with both instantaneous and lagged causal relations.
\subsection*{AI use statement}

In this work, we used generative AI tools to assist with
(i) theoretical work,
including derivations and proof checking, as well as finding potentially relevant literature;
(ii) experimental work and implementation, including experimental design, implementation and review of models and code for synthetic-data generation and real-world-data preprocessing, and interpretation of results; and
(iii) manuscript preparation, including drafting, translation, and editing for clarity and readability.
We have reviewed all AI-assisted work and take responsibility for the final
content of this work.

\subsection*{Ethics statement}

This work involved no new recruitment of human participants or collection of
human-subject data. The motion-prediction experiment used the previously
published, publicly available motion-capture dataset of
\citet{Riglet2024}. We used only the released kinematic measurements and
pseudonymous subject identifiers and did not attempt to identify participants.

\subsection*{Reproducibility statement}

The assumptions and complete proofs underlying our identifiability results are
provided in Appendix~\ref{app:proofs}.
Appendix~\ref{app:learning-framework-details} specifies the model
parameterizations, training objectives, structural regularization, and
joint learning procedure. Appendices~\ref{app:experimental-details}
and~\ref{app:additional-results} document the data generation and preprocessing,
optimization settings, data splits, auxiliary variables, method-specific inputs, model selection,
evaluation protocols, sensitivity analyses, and ablations.

\section*{Acknowledgements}

This work was supported by JSPS KAKENHI Grant Numbers JP26H02498, JP24H02177, and JP26K02971.
It was also supported by JST PRESTO, Japan, Grant Number JPMJPR24KC, and JST BOOST Program, Japan, Grant Number JPMJBY24H4.

\bibliography{references}
\bibliographystyle{iclr2027_conference}

\appendix
\startcontents[appendices]
\section*{Appendix Contents}
\printcontents[appendices]{}{1}[2]{}
\section{Proofs}
\label{app:proofs}

\subsection{Identification of the Instantaneous Markov Network}
\label{app:instantaneous_markov_identification}

This section establishes an isomorphism between the true and learned instantaneous Markov networks.
The argument draws on the Markov-network identification step of IDOL~\citep{Li2025a}, using a variability condition that incorporates variation across both past states and values of the auxiliary variable.
Suppose that their latent representations are related by a $C^2$ diffeomorphism $H$ that is independent of time and the value of the auxiliary variable:
\begin{equation*}
\vz_t=H(\hat{\vz}_t),
\qquad
\vz_{t-1}=H(\hat{\vz}_{t-1}).
\end{equation*}
We use nonedges in the learned instantaneous network to derive constraints on $J_H$ and combine them with instantaneous edge-count minimality to establish an isomorphism between the true and learned instantaneous Markov networks.
Let $\mathcal M_z$ and $\mathcal M_{\hat z}$ denote the pairwise Markov networks over the components of $\vz_t$ and $\hat{\vz}_t$, respectively, defined by the instantaneous conditional-independence relations in the true and learned transition distributions, and write their edge sets as
\begin{equation*}
\Einst:=\mathcal E(\mathcal M_z),
\qquad
\widehatEinst:=\mathcal E(\mathcal M_{\hat z}).
\end{equation*}
For $i\neq j$ and $k\neq l$, their nonedges are characterized by
\begin{align*}
\{z_{t,i},z_{t,j}\}\notin \Einst
&\quad\Longleftrightarrow\quad
z_{t,i}\perp z_{t,j}
\mid \vz_{t,\setminus\{i,j\}},\vz_{t-1},u_t,\\
\{\hat z_{t,k},\hat z_{t,l}\}\notin \widehatEinst
&\quad\Longleftrightarrow\quad
\hat z_{t,k}\perp\hat z_{t,l}
\mid \hat{\vz}_{t,\setminus\{k,l\}},\hat{\vz}_{t-1},u_t.
\end{align*}
Here and below, we write each edge as an unordered pair of its latent-variable nodes.

A nonedge in the learned instantaneous network implies that the corresponding mixed derivative of the log-density ratio in the learned coordinates is zero.
After expansion through $H$, however, this condition is a linear combination with Jacobian products as coefficients.
These products need not vanish individually because nonzero terms may cancel.
To rule out such cancellation and additional learned edges, we impose sufficient variability and instantaneous edge-count minimality, respectively.

\begin{assumption}[Sufficient variability and instantaneous edge-count minimality]
\label{assumption:supp_instantaneous_identification}
The following conditions hold.

\appendixcondition{appcond:instantaneous-regularity}{Regularity.}
For every auxiliary-variable value $u_t^{(n)}$, the density $p(\vz_t\mid \vz_{t-1},u_t^{(n)})$ is positive on $\mathbb R^{d_z}\times\mathbb R^{d_z}$, and its logarithm is twice continuously differentiable in $(\vz_t,\vz_{t-1})$.

\medskip
\appendixcondition{appcond:instantaneous-variation}{Sufficient variability across auxiliary-variable values and past states.}
Choose a reference value $u_t^{(0)}$, let $N_{\mathrm{aux}}$ be the number of comparison values, and define, for $n\in\{1,\ldots,N_{\mathrm{aux}}\}$,
\begin{equation*}
 \Delta_n(\vz_t,\vz_{t-1})
 :=
 \log p(\vz_t\mid \vz_{t-1},u_t^{(n)})
 -
 \log p(\vz_t\mid \vz_{t-1},u_t^{(0)}).
\end{equation*}
Set $N_{\mathrm{inst}}:=2d_z+|\Einst|$ and define
\begin{equation*}
\begin{aligned}
\vw_n(\vz_t,\vz_{t-1}):={}&
\left(
\frac{\partial^2\Delta_n}{\partial z_{t,i}^2}
\right)_{i=1}^{d_z}
\oplus
\left(
\frac{\partial^2\Delta_n}{\partial z_{t,i}\partial z_{t,j}}
\right)_{\{z_{t,i},z_{t,j}\}\in \Einst}
\oplus
\left(
\frac{\partial\Delta_n}{\partial z_{t,i}}
\right)_{i=1}^{d_z}
\in\mathbb R^{N_{\mathrm{inst}}}.
\end{aligned}
\end{equation*}
For every $\vz_t$, suppose that there exist pairs of auxiliary-variable indices and past-state values
\begin{equation*}
\bigl(n_1,\bar{\vz}_{t-1}^{(1)}\bigr),
\ldots,
\bigl(n_{N_{\mathrm{inst}}},\bar{\vz}_{t-1}^{(N_{\mathrm{inst}})}\bigr)
\in
\{1,\ldots,N_{\mathrm{aux}}\}\times\mathbb R^{d_z}.
\end{equation*}
Here, the overbar indicates past-state evaluation values selected with $\vz_t$ fixed, rather than the random variable $\vz_{t-1}$; the indices $n_1,\ldots,n_{N_{\mathrm{inst}}}$ need not be distinct.
Using these pairs, define
\begin{equation*}
\mW(\vz_t)
:=
\left[
\vw_{n_1}(\vz_t,\bar{\vz}_{t-1}^{(1)}),
\ldots,
\vw_{n_{N_{\mathrm{inst}}}}(\vz_t,\bar{\vz}_{t-1}^{(N_{\mathrm{inst}})})
\right]
\in\mathbb R^{N_{\mathrm{inst}}\times N_{\mathrm{inst}}},
\end{equation*}
and assume that $\mW(\vz_t)$ has full rank.

\medskip
\appendixcondition{appcond:instantaneous-edge-minimality}{Instantaneous edge-count minimality.}
The learned instantaneous edge set satisfies
\begin{equation*}
|\widehatEinst|
\leq
|\Einst|.
\end{equation*}
\end{assumption}

We first fix two nonadjacent components in $\mathcal M_{\hat z}$ and derive Jacobian conditions that prevent a true component, or the endpoints of a true instantaneous edge, from depending simultaneously on these two learned components.

\begin{lemma}[Jacobian conditions implied by a learned instantaneous nonedge]
\label{lemma:instantaneous_jacobian_conditions}
Consider the learned representation above satisfying $\vz_t=H(\hat{\vz}_t)$, and suppose that \assumptionconditionref{assumption:supp_instantaneous_identification}{appcond:instantaneous-regularity}--\ref{appcond:instantaneous-variation} holds.
Fix distinct $k,l$ such that $\{\hat z_{t,k},\hat z_{t,l}\}\notin\widehatEinst$.
Then, at every $\hat{\vz}_t$ in the domain of $H$,
\begin{align}
\forall i,\quad
 \frac{\partial z_{t,i}}{\partial\hat z_{t,k}}
 \frac{\partial z_{t,i}}{\partial\hat z_{t,l}}
&=0,
 \label{eq:markov_network_node_identification}
 \\
\forall \{z_{t,i},z_{t,j}\}\in \Einst,\quad
 \frac{\partial z_{t,i}}{\partial\hat z_{t,k}}
 \frac{\partial z_{t,j}}{\partial\hat z_{t,l}}
&=0.
 \label{eq:markov_network_edge_identification}
\end{align}
\end{lemma}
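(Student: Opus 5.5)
We follow the IDOL-style argument: turn the learned nonedge into a vanishing mixed derivative of the learned log-density ratio, expand it through $H$ by the chain rule, and use the full-rank matrix $\mW(\vz_t)$ to force every coefficient in the expansion to vanish.

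\emph{Step 1: a vanishing mixed derivative in learned coordinates.} Since $\{\hat z_{t,k},\hat z_{t,l}\}\notin\widehatEinst$ and the learned conditional density is positive, the conditional density of $\hat{\vz}_t$ given $(\hat{\vz}_{t-1},u_t)$ factorizes as $a(\hat z_{t,k},\hat{\vz}_{t,\setminus\{k,l\}},\hat{\vz}_{t-1},u_t)\,b(\hat z_{t,l},\hat{\vz}_{t,\setminus\{k,l\}},\hat{\vz}_{t-1},u_t)$. Taking logarithms, $\partial^2\log p(\hat{\vz}_t\mid\hat{\vz}_{t-1},u_t)/\partial\hat z_{t,k}\partial\hat z_{t,l}=0$ for every value of $u_t$. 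Subtracting the identity for $u_t^{(0)}$ from that for $u_t^{(n)}$ gives $\partial^2\widehat\Delta_n/\partial\hat z_{t,k}\partial\hat z_{t,l}=0$ for all $n$ and all $(\hat{\vz}_t,\hat{\vz}_{t-1})$.

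\emph{Step 2: chain-rule expansion.} By change of variables, $\widehat\Delta_n(\hat{\vz}_t,\hat{\vz}_{t-1})=\Delta_n(H(\hat{\vz}_t),H(\hat{\vz}_{t-1}))$; the Jacobian terms cancel because $H$ does not depend on $u$. Write $\vz_{t-1}=H(\hat{\vz}_{t-1})$, which depends only on $\hat{\vz}_{t-1}$. Differentiating twice in $\hat z_{t,k}$ and $\hat z_{t,l}$ gives
\begin{equation*}
0=\sum_i \frac{\partial^2\Delta_n}{\partial z_{t,i}^2}\frac{\partial z_{t,i}}{\partial\hat z_{t,k}}\frac{\partial z_{t,i}}{\partial\hat z_{t,l}}
+\sum_{i\neq j}\frac{\partial^2\Delta_n}{\partial z_{t,i}\partial z_{t,j}}\frac{\partial z_{t,i}}{\partial\hat z_{t,k}}\frac{\partial z_{t,j}}{\partial\hat z_{t,l}}
+\sum_i\frac{\partial\Delta_n}{\partial z_{t,i}}\frac{\partial^2 z_{t,i}}{\partial\hat z_{t,k}\partial\hat z_{t,l}}.
\end{equation*}
For a true nonedge $\{z_{t,i},z_{t,j}\}\notin\Einst$, the same factorization argument applied to the true density gives $\partial^2\Delta_n/\partial z_{t,i}\partial z_{t,j}=0$. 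Hence only ordered pairs from $\Einst$ survive in the middle sum. Grouping $(i,j)$ with $(j,i)$, the coefficient attached to an edge $\{i,j\}\in\Einst$ is $c_{ij}:=\frac{\partial z_{t,i}}{\partial\hat z_{t,k}}\frac{\partial z_{t,j}}{\partial\hat z_{t,l}}+\frac{\partial z_{t,j}}{\partial\hat z_{t,k}}\frac{\partial z_{t,i}}{\partial\hat z_{t,l}}$. The identity is therefore $\vw_n(\vz_t,\vz_{t-1})^\top\vc(\hat{\vz}_t)=0$, where the coefficient vector $\vc$ collects the diagonal products, the $c_{ij}$, and the second derivatives $\partial^2 z_{t,i}/\partial\hat z_{t,k}\partial\hat z_{t,l}$. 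The key point is that $\vc$ depends only on $\hat{\vz}_t$, not on $\hat{\vz}_{t-1}$ or $n$.

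\emph{Step 3: full rank forces $\vc=0$.} Fix $\hat{\vz}_t$ and set $\vz_t=H(\hat{\vz}_t)$. Since $H$ is a bijection, each chosen $\bar{\vz}_{t-1}^{(q)}$ is attained as $H(\hat{\vz}_{t-1})$ for some $\hat{\vz}_{t-1}$. Stacking the identity over the $N_{\mathrm{inst}}$ pairs $(n_q,\bar{\vz}_{t-1}^{(q)})$ gives $\mW(\vz_t)^\top\vc=0$, so $\vc=0$ by the full-rank condition. The diagonal entries of $\vc$ vanishing is exactly \eqref{eq:markov_network_node_identification}.

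\emph{Step 4: main obstacle, splitting the symmetrized edge coefficient.} The vanishing of $c_{ij}$ only gives that the sum of the two cross products is zero. This is the delicate step. From Step 3 we also know, pointwise, that for each $i$ at most one of $\partial z_{t,i}/\partial\hat z_{t,k}$ and $\partial z_{t,i}/\partial\hat z_{t,l}$ is nonzero. Suppose $\frac{\partial z_{t,i}}{\partial\hat z_{t,k}}\frac{\partial z_{t,j}}{\partial\hat z_{t,l}}\neq0$. Then node identification gives $\partial z_{t,i}/\partial\hat z_{t,l}=0$, so the second product in $c_{ij}$ vanishes. This forces the first product to be zero, a contradiction. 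Hence both cross products vanish individually, which gives \eqref{eq:markov_network_edge_identification} for both orientations. Since this argument holds at every $\hat{\vz}_t$, the lemma follows.
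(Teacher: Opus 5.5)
Your proposal is correct and follows essentially the same route as the paper: the learned pairwise Markov factorization gives a vanishing mixed derivative of $\widehat\Delta_n$, the chain-rule expansion through $H$ yields $\vw_n^\top\kappa=0$ with a coefficient vector depending only on $\hat{\vz}_t$, full rank of $\mW(\vz_t)$ forces that vector to vanish, and the edge products are then split using the node condition. Your Step~4 contradiction argument is equivalent to the paper's observation that the two cross products have zero sum and zero product, and your remark that each $\bar{\vz}_{t-1}^{(q)}$ is attained through the bijectivity of $H$ makes explicit a step the paper leaves implicit.
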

The relation in~\eqref{eq:markov_network_node_identification} states that a true component cannot depend simultaneously on two nonadjacent components of $\mathcal M_{\hat z}$.
The relation in~\eqref{eq:markov_network_edge_identification} states that, for a true instantaneous edge, its two endpoints cannot separately depend on the two nonadjacent learned components.

\begin{proof}
Define the learned log-density ratio by
\begin{equation*}
\widehat\Delta_n(\hat{\vz}_t,\hat{\vz}_{t-1})
:=
\log p(\hat{\vz}_t\mid\hat{\vz}_{t-1},u_t^{(n)})
-
\log p(\hat{\vz}_t\mid\hat{\vz}_{t-1},u_t^{(0)}).
\end{equation*}
The density transformation induced by $\vz_t=H(\hat{\vz}_t)$ and $\vz_{t-1}=H(\hat{\vz}_{t-1})$ gives
\begin{align*}
p(\hat{\vz}_t,\hat{\vz}_{t-1},u_t)
={}&p(\vz_t,\vz_{t-1},u_t)
\left|\det J_H(\hat{\vz}_t)\right|
\left|\det J_H(\hat{\vz}_{t-1})\right|,\\
p(\hat{\vz}_{t-1},u_t)
={}&p(\vz_{t-1},u_t)
\left|\det J_H(\hat{\vz}_{t-1})\right|.
\end{align*}
Dividing the first equality by the second gives the conditional-density transformation.
Taking logarithms and subtracting the reference expression from each comparison expression cancels the Jacobian term because it does not depend on the auxiliary variable, yielding
\begin{equation}
\label{eq:a1_1_loglikelihood_difference}
\widehat\Delta_n(\hat{\vz}_t,\hat{\vz}_{t-1})
=
\Delta_n\bigl(H(\hat{\vz}_t),H(\hat{\vz}_{t-1})\bigr).
\end{equation}

Because $\{\hat z_{t,k},\hat z_{t,l}\}\notin\widehatEinst$, the pairwise Markov property gives
\begin{align*}
\log p(\hat{\vz}_t\mid\hat{\vz}_{t-1},u_t)
={}&
\log p(\hat z_{t,k}\mid
\hat{\vz}_{t,\setminus\{k,l\}},\hat{\vz}_{t-1},u_t)\\
&+
\log p(\hat z_{t,l}\mid
\hat{\vz}_{t,\setminus\{k,l\}},\hat{\vz}_{t-1},u_t)
+
\log p(\hat{\vz}_{t,\setminus\{k,l\}}\mid\hat{\vz}_{t-1},u_t).
\end{align*}
No term on the right-hand side depends on both $\hat z_{t,k}$ and $\hat z_{t,l}$, so its mixed derivative with respect to these variables is zero for every auxiliary-variable value.
Taking the difference between the comparison and reference values therefore gives $\partial^2\widehat\Delta_n/\partial\hat z_{t,k}\partial\hat z_{t,l}=0$.
Substituting this result into the left-hand side of~\eqref{eq:a1_1_loglikelihood_difference} and applying the chain rule to the right-hand side gives
\begin{align*}
0={}&
\sum_{a=1}^{d_z}
\frac{\partial^2\Delta_n}{\partial z_{t,a}^2}
\frac{\partial z_{t,a}}{\partial\hat z_{t,k}}
\frac{\partial z_{t,a}}{\partial\hat z_{t,l}}
+
\sum_{\{z_{t,a},z_{t,b}\}\in \Einst}
\frac{\partial^2\Delta_n}{\partial z_{t,a}\partial z_{t,b}}
\left(
\frac{\partial z_{t,a}}{\partial\hat z_{t,k}}
\frac{\partial z_{t,b}}{\partial\hat z_{t,l}}
+
\frac{\partial z_{t,b}}{\partial\hat z_{t,k}}
\frac{\partial z_{t,a}}{\partial\hat z_{t,l}}
\right)\\
&+
\sum_{a=1}^{d_z}
\frac{\partial\Delta_n}{\partial z_{t,a}}
\frac{\partial^2z_{t,a}}
{\partial\hat z_{t,k}\partial\hat z_{t,l}}.
\end{align*}
Here, all derivatives of $\Delta_n$ are evaluated at $(H(\hat{\vz}_t),H(\hat{\vz}_{t-1}))$.
In the second sum, only pairs in $\Einst$ remain because, by the same pairwise Markov property, the mixed derivative is zero for every true instantaneous nonedge.

Denote the coefficients of the derivatives of $\Delta_n$ in this expansion by
\begin{align*}
\kappa^{\mathrm{diag}}_a
&:=
\frac{\partial z_{t,a}}{\partial\hat z_{t,k}}
\frac{\partial z_{t,a}}{\partial\hat z_{t,l}},
&
\kappa^{\mathrm{edge}}_{\{a,b\}}
&:=
\frac{\partial z_{t,a}}{\partial\hat z_{t,k}}
\frac{\partial z_{t,b}}{\partial\hat z_{t,l}}
+
\frac{\partial z_{t,b}}{\partial\hat z_{t,k}}
\frac{\partial z_{t,a}}{\partial\hat z_{t,l}},
&
\kappa^{\mathrm{grad}}_a
&:=
\frac{\partial^2z_{t,a}}
{\partial\hat z_{t,k}\partial\hat z_{t,l}}.
\end{align*}
Stack them in the same order as $\vw_n$ into $\kappa\in\mathbb R^{N_{\mathrm{inst}}}$.
Then $\vw_n(\vz_t,\vz_{t-1})^\top\kappa=0$ for all $n$ and $\vz_{t-1}$.
For fixed $\vz_t$, $\kappa$ is independent of the past state and auxiliary-variable value.
Stacking the equations for the pairs selected in \assumptionconditionref{assumption:supp_instantaneous_identification}{appcond:instantaneous-variation} gives
\begin{equation*}
\mW(\vz_t)^\top\kappa=0,
\qquad
\operatorname{rank}\mW(\vz_t)=N_{\mathrm{inst}}
\quad\Longrightarrow\quad
\kappa=0.
\end{equation*}
In particular, $\kappa_i^{\mathrm{diag}}=0$, which proves~\eqref{eq:markov_network_node_identification}.

For $\{z_{t,i},z_{t,j}\}\in \Einst$, let
\begin{equation*}
A_{ij}:=
\frac{\partial z_{t,i}}{\partial\hat z_{t,k}}
\frac{\partial z_{t,j}}{\partial\hat z_{t,l}},
\qquad
B_{ij}:=
\frac{\partial z_{t,j}}{\partial\hat z_{t,k}}
\frac{\partial z_{t,i}}{\partial\hat z_{t,l}}.
\end{equation*}
The equalities $\kappa^{\mathrm{edge}}_{\{i,j\}}=0$ and~\eqref{eq:markov_network_node_identification} imply
\begin{equation*}
A_{ij}+B_{ij}=0,
\qquad
A_{ij}B_{ij}=0.
\end{equation*}
Thus, $A_{ij}=B_{ij}=0$, proving~\eqref{eq:markov_network_edge_identification}.
\end{proof}

We next use invertibility of the Jacobian of $H$ to match every true latent component locally to a distinct learned coordinate.

\begin{lemma}[Local component matching from an invertible Jacobian]
\label{lemma:local_coordinate_correspondence}
For every $\hat{\vz}_t$, there exists a permutation $\pi_{\hat{\vz}_t}$ such that
\begin{equation*}
\frac{\partial z_{t,i}}
{\partial\hat z_{t,\pi_{\hat{\vz}_t}(i)}}\ne0
\qquad(\forall i).
\end{equation*}
\end{lemma}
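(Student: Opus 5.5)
The plan is to use the Leibniz expansion of the determinant of the Jacobian $J_H(\hat{\vz}_t)$. Condition~\ref{cond:latent-regularity} says $H$ is a $C^2$ diffeomorphism, so $J_H(\hat{\vz}_t)$ is invertible at every point $\hat{\vz}_t$ of the domain. Hence
\begin{equation*}
0\neq\det J_H(\hat{\vz}_t)
=\sum_{\sigma\in S_{d_z}}\operatorname{sgn}(\sigma)\prod_{i=1}^{d_z}
\frac{\partial z_{t,i}}{\partial\hat z_{t,\sigma(i)}},
\end{equation*}
where the entry in row $i$ and column $k$ of $J_H$ is $\partial z_{t,i}/\partial\hat z_{t,k}$.

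The key steps, in order, are as follows.

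\textbf{Step 1.} Since the sum above is nonzero, at least one of its terms is nonzero. Fix a permutation $\sigma$ whose term $\operatorname{sgn}(\sigma)\prod_{i}\partial z_{t,i}/\partial\hat z_{t,\sigma(i)}$ is nonzero.

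\textbf{Step 2.} The sign is $\pm1$, so the product itself is nonzero. A finite product of real numbers is nonzero only if every factor is nonzero. Therefore $\partial z_{t,i}/\partial\hat z_{t,\sigma(i)}\neq0$ for every $i$.

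\textbf{Step 3.} Set $\pi_{\hat{\vz}_t}:=\sigma$. This choice depends on the point $\hat{\vz}_t$, which the statement allows, since the permutation is only claimed to exist locally.

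An equivalent route is to view the support pattern of $J_H$ as a bipartite graph between the true components and the learned coordinates. A nonzero determinant rules out the failure of Hall's condition, so the graph has a perfect matching. The determinant argument is shorter, and I will use it.

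There is no substantive obstacle. The only point needing care is that invertibility of $J_H$ holds at every $\hat{\vz}_t$, not just almost everywhere. This follows because $H$ is a diffeomorphism: the chain rule applied to $H\circ H^{-1}=\mathrm{id}$ shows the Jacobian is invertible everywhere.

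Continuity of $J_H$ is not needed for this lemma. It will matter only later, when the pointwise permutations are combined with Lemma~\ref{lemma:instantaneous_jacobian_conditions} and shown to be locally constant, and hence globally constant by connectedness.
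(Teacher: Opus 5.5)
Your proposal is correct and is essentially the paper's own proof: both use that $H$ is a diffeomorphism, so $\det J_H(\hat{\vz}_t)\neq 0$ at every point. Both then expand the determinant by the Leibniz formula and take as $\pi_{\hat{\vz}_t}$ the permutation of any nonzero term, all of whose factors must then be nonzero.
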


\begin{proof}
Since $H$ is a diffeomorphism, $\det J_H(\hat{\vz}_t)\ne0$ at every $\hat{\vz}_t$.
% Expanding the determinant as
The determinant expands as
\begin{equation*}
\det J_H(\hat{\vz}_t)
=
\sum_{\pi\in\mathfrak S_{d_z}}
\operatorname{sgn}(\pi)
\prod_{i=1}^{d_z}
\frac{\partial z_{t,i}}
{\partial\hat z_{t,\pi(i)}},
\end{equation*}
where $\mathfrak S_{d_z}$ denotes the set of all permutations of $\{1,\ldots,d_z\}$.
Since the determinant is nonzero, at least one product in the sum is nonzero.
Choosing its associated permutation as $\pi_{\hat{\vz}_t}$ proves the claim.
\end{proof}

The Jacobian conditions and local component matching now show that every true instantaneous edge is preserved in the learned representation; instantaneous edge-count minimality then yields network isomorphism.

\begin{theorem}[Identification of the instantaneous Markov network]
\label{theorem:local_markov_network_isomorphism}
Fix $\hat{\vz}_t$ and take a local permutation $\pi_{\hat{\vz}_t}$ from Lemma~\ref{lemma:local_coordinate_correspondence}.
Under Assumption~\ref{assumption:supp_instantaneous_identification}, for every $i\ne j$,
\begin{equation*}
\{z_{t,i},z_{t,j}\}\in \Einst
\quad\Longleftrightarrow\quad
\{\hat z_{t,\pi_{\hat{\vz}_t}(i)},
\hat z_{t,\pi_{\hat{\vz}_t}(j)}\}\in\widehatEinst.
\end{equation*}
Thus, $\mathcal M_z$ and $\mathcal M_{\hat z}$ are isomorphic under $\pi_{\hat{\vz}_t}$.
At this stage, $\pi_{\hat{\vz}_t}$ may depend on the fixed point $\hat{\vz}_t$.
\end{theorem}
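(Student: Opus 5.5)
The plan is to prove the forward implication, that every true instantaneous edge is carried by $\pi_{\hat{\vz}_t}$ to a learned edge, and then to obtain the converse by counting with \assumptionconditionref{assumption:supp_instantaneous_identification}{appcond:instantaneous-edge-minimality}. Throughout we fix $\hat{\vz}_t$ and write $\pi:=\pi_{\hat{\vz}_t}$. By Lemma~\ref{lemma:local_coordinate_correspondence}, $\partial z_{t,i}/\partial\hat z_{t,\pi(i)}\neq0$ for all $i$.

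\emph{Forward direction.} Take $\{z_{t,i},z_{t,j}\}\in\Einst$ with $i\neq j$, and suppose for contradiction that $\{\hat z_{t,\pi(i)},\hat z_{t,\pi(j)}\}\notin\widehatEinst$. Since $\pi$ is a permutation, $\pi(i)\neq\pi(j)$, so Lemma~\ref{lemma:instantaneous_jacobian_conditions} applies with $k=\pi(i)$ and $l=\pi(j)$. Because $\{z_{t,i},z_{t,j}\}$ is a true edge, \eqref{eq:markov_network_edge_identification} gives
\begin{equation*}
\frac{\partial z_{t,i}}{\partial\hat z_{t,\pi(i)}}\,
\frac{\partial z_{t,j}}{\partial\hat z_{t,\pi(j)}}=0
\end{equation*}
at the fixed point $\hat{\vz}_t$. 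Both factors are nonzero by the choice of $\pi$, which is a contradiction. Hence the map $\{i,j\}\mapsto\{\pi(i),\pi(j)\}$ sends $\Einst$ into $\widehatEinst$. Only the edge relation~\eqref{eq:markov_network_edge_identification} is needed here; the node relation~\eqref{eq:markov_network_node_identification} is not.

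\emph{Converse.} The induced map on unordered pairs is injective because $\pi$ is a bijection on indices. Therefore $|\Einst|\le|\widehatEinst|$. Combined with the minimality assumption $|\widehatEinst|\le|\Einst|$, this gives equality of cardinalities. An injective map between finite sets of equal size is a bijection, so every learned edge is the image of a true edge. This yields the reverse implication and hence the isomorphism of $\mathcal M_z$ and $\mathcal M_{\hat z}$ under $\pi_{\hat{\vz}_t}$.

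\emph{Main subtlety.} The point that needs care is pointwise validity. Lemma~\ref{lemma:instantaneous_jacobian_conditions} holds at every $\hat{\vz}_t$, and the local permutation is chosen at that same point, so the contradiction is obtained pointwise. The edge sets themselves do not depend on $\hat{\vz}_t$, so the conclusion is legitimate for each fixed point, even though $\pi$ may vary with $\hat{\vz}_t$, as the statement notes.
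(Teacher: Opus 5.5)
Your proposal is correct and follows the paper's proof essentially step for step. A true edge sent by $\pi_{\hat{\vz}_t}$ to a learned nonedge would, by the edge relation of Lemma~\ref{lemma:instantaneous_jacobian_conditions}, force to zero a product of two Jacobian entries that Lemma~\ref{lemma:local_coordinate_correspondence} makes nonzero; injectivity of the induced map on pairs together with instantaneous edge-count minimality then gives the reverse implication, and your pointwise caveat matches the paper's remark that $\pi_{\hat{\vz}_t}$ may still depend on $\hat{\vz}_t$.
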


\begin{proof}
Take any $\{z_{t,i},z_{t,j}\}\in \Einst$.
If its image were a learned nonedge, Lemma~\ref{lemma:instantaneous_jacobian_conditions} would give
\begin{equation*}
\frac{\partial z_{t,i}}
{\partial\hat z_{t,\pi_{\hat{\vz}_t}(i)}}
\frac{\partial z_{t,j}}
{\partial\hat z_{t,\pi_{\hat{\vz}_t}(j)}}
=0,
\end{equation*}
contradicting Lemma~\ref{lemma:local_coordinate_correspondence}.
Thus, every true instantaneous edge maps injectively to a learned instantaneous edge, and
\begin{equation*}
|\Einst|
\leq
|\widehatEinst|,
\qquad
|\widehatEinst|
\leq
|\Einst|.
\end{equation*}
The first inequality follows from edge preservation and injectivity of $\pi_{\hat{\vz}_t}$, and the second from \assumptionconditionref{assumption:supp_instantaneous_identification}{appcond:instantaneous-edge-minimality}.
The edge sets therefore have equal cardinality, and the images of the true edges exhaust the learned edge set, proving the isomorphism.
\end{proof}

The following remark examines the full-rank condition in \assumptionconditionref{assumption:supp_instantaneous_identification}{appcond:instantaneous-variation} for additively separable inverse transitions.

\begin{remark}[Rank bound for an additively separable inverse transition function]
\label{remark:supp_instantaneous_additive_separation_rank_bound}
Suppose that the inverse transition function satisfies
$\epsilon_{t,i}=r_i(\vz_t,\vz_{t-1})=a_i(\vz_t)+b_i(\vz_{t-1})$,
where $a_i$ depends only on $\vz_t$ and $b_i$ only on $\vz_{t-1}$.
Define the component-wise log-density ratio of the transition noise by
\begin{equation*}
\delta_{i,n}(\epsilon_{t,i})
:=
\log p_i(\epsilon_{t,i}\mid u_t^{(n)})
-
\log p_i(\epsilon_{t,i}\mid u_t^{(0)}).
\end{equation*} 
Conditioning on $\vz_{t-1}$ and $u_t$, component-wise independence of the transition noise and the change of variables
$\veps_t=r(\vz_t,\vz_{t-1})$ give
\begin{equation*}
p(\vz_t\mid \vz_{t-1},u_t)
=
\prod_{i=1}^{d_z}
p_i\bigl(r_i(\vz_t,\vz_{t-1})\mid u_t\bigr)
\left|
\det\frac{\partial r(\vz_t,\vz_{t-1})}{\partial \vz_t}
\right|.
\end{equation*}
Because $r$ is independent of the auxiliary-variable value, its Jacobian term cancels in the log-density ratio, yielding
\begin{equation*}
\Delta_n(\vz_t,\vz_{t-1})
=
\sum_{i=1}^{d_z}
\delta_{i,n}\bigl(r_i(\vz_t,\vz_{t-1})\bigr).
\end{equation*}
For all $j,k\in\{1,\ldots,d_z\}$, evaluating the derivatives of $\delta_{i,n}$ at $\epsilon_{t,i}=r_i(\vz_t,\vz_{t-1})$ gives
\begin{align*}
\frac{\partial\Delta_n}{\partial z_{t,j}}
&=
\sum_{i=1}^{d_z}
\frac{\partial\delta_{i,n}}{\partial\epsilon_{t,i}}
\frac{\partial a_i}{\partial z_{t,j}},\\
\frac{\partial^2\Delta_n}{\partial z_{t,j}\partial z_{t,k}}
&=
\sum_{i=1}^{d_z}
\left\{
\frac{\partial^2\delta_{i,n}}{\partial\epsilon_{t,i}^2}
\frac{\partial a_i}{\partial z_{t,j}}
\frac{\partial a_i}{\partial z_{t,k}}
+
\frac{\partial\delta_{i,n}}{\partial\epsilon_{t,i}}
\frac{\partial^2a_i}{\partial z_{t,j}\partial z_{t,k}}
\right\}.
\end{align*}
For fixed $\vz_t$, all derivatives of $a_i$ are fixed, and variation with the auxiliary-variable value and past state appears only through the two scalar coefficients $\partial\delta_{i,n}/\partial\epsilon_{t,i}$ and $\partial^2\delta_{i,n}/\partial\epsilon_{t,i}^2$ for each $i$.
Thus, every $\vw_n$ is a linear combination of at most $2d_z$ fixed directions, and
\begin{equation*}
\operatorname{rank}\mW(\vz_t)\leq2d_z.
\end{equation*}
If $|\Einst|>0$, then $N_{\mathrm{inst}}=2d_z+|\Einst|>2d_z$, so the required full-rank condition cannot hold.
A linear inverse transition function is a special case of this additively separable form, so the theorem does not cover such models when instantaneous edges are present.

Mixed derivatives with respect to a current and past component, however, may retain terms such as
\begin{equation*}
\frac{\partial^2\delta_{i,n}}{\partial\epsilon_{t,i}^2}
\frac{\partial a_i}{\partial z_{t,j}}
\frac{\partial b_i}{\partial z_{t-1,m}}.
\end{equation*}
Therefore, when the transition-noise variance changes with the auxiliary-variable value, cross-time sufficient variability may hold even for an additively separable model.
\end{remark}

\subsection{Identification of Latent Components}
\label{app:latent_component_identification}

This section builds on the local isomorphism between the instantaneous Markov networks established in Appendix~\ref{app:instantaneous_markov_identification} to identify the latent components.
We first show that any remaining component mixing is restricted to intimate neighbors in the instantaneous Markov network.
When such neighbors exist, we eliminate the remaining mixing using cross-time relations and their variation across auxiliary-variable values.
Finally, we show that the resulting local permutations reduce to a single permutation independent of state and time, yielding component-wise identification.
Our argument draws on the intimate-neighbor proof strategy of IDOL~\citep{Li2025a}, while treating the instantaneous and cross-time relations separately and using changes across values of the auxiliary variable for the latter.

Throughout this section, suppose that the true and learned representations are related by the same $C^2$ diffeomorphism $H$, independent of time and the value of the auxiliary variable, and that Assumption~\ref{assumption:supp_instantaneous_identification} holds.

\subsubsection{Definitions and Notation}

To describe component mixing that cannot be ruled out by the isomorphism of the instantaneous Markov networks alone, we define neighborhoods and intimate neighbors in $\mathcal M_z$.
The neighborhood of $z_{t,i}$ in $\mathcal M_z$ is
\begin{equation*}
\mathcal N_{\mathcal M_z}(z_{t,i})
:=
\left\{
z_{t,j}
\;\middle|\;
\{z_{t,i},z_{t,j}\}\in \Einst
\right\}.
\end{equation*}

\begin{definition}[Intimate neighbor~\citep{Zhang2024a}]
In the pairwise Markov network $\mathcal M_z$, the intimate-neighbor set of each node $z_{t,i}$ is
\begin{equation*}
\Psi_{\mathcal M_z}(z_{t,i})
:=
\left\{
z_{t,j}
\;\middle|\;
z_{t,j}\in\mathcal N_{\mathcal M_z}(z_{t,i})
\ \text{and}\
\mathcal N_{\mathcal M_z}(z_{t,i})\setminus\{z_{t,j}\}
\subseteq
\mathcal N_{\mathcal M_z}(z_{t,j})\setminus\{z_{t,i}\}
\right\}.
\end{equation*}
Thus, $z_{t,j}$ is an intimate neighbor of $z_{t,i}$ if they are adjacent in $\mathcal M_z$ and every neighbor of $z_{t,i}$ other than $z_{t,j}$ is also a neighbor of $z_{t,j}$ other than $z_{t,i}$.
This relation is generally asymmetric: $z_{t,j}\in\Psi_{\mathcal M_z}(z_{t,i})$ does not imply $z_{t,i}\in\Psi_{\mathcal M_z}(z_{t,j})$.
\end{definition}

Even after the instantaneous Markov networks have been shown to be isomorphic, mixing may remain between components corresponding to intimate neighbors.
We express this possibility using off-diagonal entries of the Jacobian of the mapping $H:\hat{\vz}_t\mapsto \vz_t$.

\begin{definition}[Residual mixing]
\label{definition:residual_mixing}
Fix a point $\hat{\vz}_t$ and choose the local permutation $\pi_{\hat{\vz}_t}$ given by Lemma~\ref{lemma:local_coordinate_correspondence}.
For $i\neq j$, we say that residual mixing remains between $z_{t,i}$ and $\hat z_{t,\pi_{\hat{\vz}_t}(j)}$ under $H$ if
\begin{equation*}
\frac{\partial z_{t,i}}
{\partial\hat z_{t,\pi_{\hat{\vz}_t}(j)}}
\neq0.
\end{equation*}
\end{definition}

The following lemma shows that residual mixing is restricted to components corresponding to intimate neighbors.

\begin{lemma}[Residual mixing is restricted to intimate neighbors]
\label{lemma:residual_mixing_is_intimate}
Fix a point $\hat{\vz}_t$ and choose the local permutation $\pi_{\hat{\vz}_t}$ given by Lemma~\ref{lemma:local_coordinate_correspondence}.
For every $i\ne j$,
\begin{equation*}
\frac{\partial z_{t,i}}
{\partial\hat z_{t,\pi_{\hat{\vz}_t}(j)}}\ne0
\quad\Longrightarrow\quad
z_{t,j}\in\Psi_{\mathcal M_z}(z_{t,i}).
\end{equation*}
\end{lemma}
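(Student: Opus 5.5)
The plan is to argue by contraposition, using Lemma~\ref{lemma:instantaneous_jacobian_conditions} together with the local isomorphism of Theorem~\ref{theorem:local_markov_network_isomorphism}, both taken at the same fixed $\hat{\vz}_t$ and the same local permutation $\pi:=\pi_{\hat{\vz}_t}$ from Lemma~\ref{lemma:local_coordinate_correspondence}. By Lemma~\ref{lemma:local_coordinate_correspondence}, every diagonal entry $\partial z_{t,m}/\partial\hat z_{t,\pi(m)}$ is nonzero. Each of the two Jacobian identities in Lemma~\ref{lemma:instantaneous_jacobian_conditions} has the form ``product of two Jacobian entries $=0$''. The idea is to choose the learned nonedge so that one factor is a diagonal entry; the vanishing of the product then forces the off-diagonal entry $\partial z_{t,i}/\partial\hat z_{t,\pi(j)}$ to vanish. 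So assume $\partial z_{t,i}/\partial\hat z_{t,\pi(j)}\neq0$ with $i\neq j$. It must be shown that (a) $z_{t,j}$ is adjacent to $z_{t,i}$, and (b) every neighbor of $z_{t,i}$ other than $z_{t,j}$ is adjacent to $z_{t,j}$.

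For (a), suppose $\{z_{t,i},z_{t,j}\}\notin\Einst$. By Theorem~\ref{theorem:local_markov_network_isomorphism}, $\{\hat z_{t,\pi(i)},\hat z_{t,\pi(j)}\}\notin\widehatEinst$. Apply \eqref{eq:markov_network_node_identification} with $k=\pi(i)$ and $l=\pi(j)$ to the true component $z_{t,i}$. This gives
$\frac{\partial z_{t,i}}{\partial\hat z_{t,\pi(i)}}\frac{\partial z_{t,i}}{\partial\hat z_{t,\pi(j)}}=0$.
The first factor is nonzero, so the second vanishes, which is a contradiction.

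For (b), take $z_{t,k}\in\mathcal N_{\mathcal M_z}(z_{t,i})$ with $k\neq j$; note $k\neq i$ automatically. Suppose $\{z_{t,k},z_{t,j}\}\notin\Einst$. Then the isomorphism gives the learned nonedge $\{\hat z_{t,\pi(k)},\hat z_{t,\pi(j)}\}\notin\widehatEinst$. Now apply \eqref{eq:markov_network_edge_identification} to the true edge $\{z_{t,k},z_{t,i}\}\in\Einst$, with the endpoint $z_{t,k}$ in the first slot paired with $\hat z_{t,\pi(k)}$ and the endpoint $z_{t,i}$ in the second slot paired with $\hat z_{t,\pi(j)}$. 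This yields
$\frac{\partial z_{t,k}}{\partial\hat z_{t,\pi(k)}}\frac{\partial z_{t,i}}{\partial\hat z_{t,\pi(j)}}=0$.
Again the diagonal factor is nonzero, which contradicts the hypothesis. Hence $\mathcal N_{\mathcal M_z}(z_{t,i})\setminus\{z_{t,j}\}\subseteq\mathcal N_{\mathcal M_z}(z_{t,j})\setminus\{z_{t,i}\}$, and combined with (a) this gives $z_{t,j}\in\Psi_{\mathcal M_z}(z_{t,i})$.

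The only point needing care is the orientation in step (b). Condition \eqref{eq:markov_network_edge_identification} is stated for an ordered assignment of the two learned coordinates $(k,l)$ to the edge endpoints $(i,j)$. Both the true edge and the learned nonedge are unordered, however, and the proof of Lemma~\ref{lemma:instantaneous_jacobian_conditions} shows that $A_{ij}$ and $B_{ij}$ both vanish. So the identity holds for either assignment, and I may pick the one that pairs $z_{t,k}$ with $\hat z_{t,\pi(k)}$. Everything else is a direct substitution, and all quantities are evaluated at the single fixed point $\hat{\vz}_t$, so the possible dependence of $\pi$ on $\hat{\vz}_t$ causes no difficulty.
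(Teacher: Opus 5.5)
Your proof is correct and matches the paper's argument. The paper also assumes $\partial z_{t,i}/\partial\hat z_{t,\pi_{\hat{\vz}_t}(j)}\neq0$ and derives a contradiction in two cases: it uses \eqref{eq:markov_network_node_identification} when $z_{t,i}$ and $z_{t,j}$ are nonadjacent, and otherwise \eqref{eq:markov_network_edge_identification} on the true edge $\{z_{t,i},z_{t,k}\}$ for a witness $k$ nonadjacent to $z_{t,j}$, each time pairing the off-diagonal entry with a diagonal entry that is nonzero by Lemma~\ref{lemma:local_coordinate_correspondence}. Your steps (a) and (b) are exactly these two cases stated contrapositively, and your remark on the orientation of the edge condition makes explicit an assignment the paper uses without comment.
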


\begin{proof}
Suppose, for contradiction, that residual mixing occurs, so that
\begin{equation}
\label{eq:residual_mixing}
\frac{\partial z_{t,i}}
{\partial\hat z_{t,\pi_{\hat{\vz}_t}(j)}}\ne0,
\end{equation}
while $z_{t,j}\notin\Psi_{\mathcal M_z}(z_{t,i})$.

First consider the case $\{z_{t,i},z_{t,j}\}\notin \Einst$.
Theorem~\ref{theorem:local_markov_network_isomorphism} implies
\begin{equation*}
\{\hat z_{t,\pi_{\hat{\vz}_t}(i)},
\hat z_{t,\pi_{\hat{\vz}_t}(j)}\}
\notin\widehatEinst.
\end{equation*}
The relation in~\eqref{eq:markov_network_node_identification} from Lemma~\ref{lemma:instantaneous_jacobian_conditions} therefore gives
\begin{equation*}
\frac{\partial z_{t,i}}
{\partial\hat z_{t,\pi_{\hat{\vz}_t}(i)}}
\frac{\partial z_{t,i}}
{\partial\hat z_{t,\pi_{\hat{\vz}_t}(j)}}=0.
\end{equation*}
The two factors are nonzero by Lemma~\ref{lemma:local_coordinate_correspondence} and~\eqref{eq:residual_mixing}, respectively, yielding a contradiction.

Next consider $\{z_{t,i},z_{t,j}\}\in \Einst$.
Because $z_{t,j}$ is not an intimate neighbor of $z_{t,i}$, there exists an index $k$ such that
\begin{equation*}
\{z_{t,i},z_{t,k}\}\in \Einst,
\qquad
\{z_{t,j},z_{t,k}\}\notin \Einst.
\end{equation*}
Theorem~\ref{theorem:local_markov_network_isomorphism} then implies
\begin{equation*}
\{\hat z_{t,\pi_{\hat{\vz}_t}(j)},
\hat z_{t,\pi_{\hat{\vz}_t}(k)}\}
\notin\widehatEinst.
\end{equation*}
Applying~\eqref{eq:markov_network_edge_identification} in Lemma~\ref{lemma:instantaneous_jacobian_conditions} to the true edge $\{z_{t,i},z_{t,k}\}$ yields
\begin{equation*}
\frac{\partial z_{t,i}}
{\partial\hat z_{t,\pi_{\hat{\vz}_t}(j)}}
\frac{\partial z_{t,k}}
{\partial\hat z_{t,\pi_{\hat{\vz}_t}(k)}}=0.
\end{equation*}
Again, the two factors are nonzero by~\eqref{eq:residual_mixing} and Lemma~\ref{lemma:local_coordinate_correspondence}, respectively, which is a contradiction.
Thus, whether or not $z_{t,i}$ and $z_{t,j}$ are adjacent in $\mathcal M_z$, assuming $z_{t,j}\notin\Psi_{\mathcal M_z}(z_{t,i})$ leads to a contradiction.
Hence $z_{t,j}\in\Psi_{\mathcal M_z}(z_{t,i})$.
\end{proof}

\subsubsection{Empty Intimate-Neighbor Sets}
\label{subsubsec:element_wise_identifiability_empty_case}

When every intimate-neighbor set is empty, Lemma~\ref{lemma:residual_mixing_is_intimate} immediately rules out all residual mixing.
Specifically, suppose
\begin{equation*}
\Psi_{\mathcal M_z}(z_{t,i})=\emptyset
\qquad(\forall i),
\end{equation*}
and fix any point $\hat{\vz}_t$ and the local permutation $\pi_{\hat{\vz}_t}$ given by Lemma~\ref{lemma:local_coordinate_correspondence}.
By Lemma~\ref{lemma:residual_mixing_is_intimate}, a nonzero off-diagonal entry of the locally permuted Jacobian would imply the existence of an intimate neighbor, contradicting the assumption.
Therefore,
\begin{equation*}
\frac{\partial z_{t,i}}
{\partial\hat z_{t,\pi_{\hat{\vz}_t}(j)}}
=0
\qquad(\forall i,\ \forall j\ne i).
\end{equation*}
After its columns are arranged according to $\pi_{\hat{\vz}_t}$, the local Jacobian is thus diagonal with nonzero diagonal entries.

\subsubsection{Nonempty Intimate-Neighbor Sets: Using Cross-Time Relations}
\label{subsubsec:element_wise_identifiability_non_empty_time_relation}

When intimate neighbors remain as candidates for residual mixing after considering the local structure of $\mathcal M_z$, we use conditional-dependence relations with the past and future time steps to eliminate residual mixing.
We first define the cross-time relations and their changes across auxiliary-variable values, derive Jacobian conditions from learned cross-time nonedges, and then combine these conditions with cross-time edge-count minimality to match the true and learned cross-time relations.

\begin{definition}[Cross-time conditional-dependence relations]
Within this section, combine the latent variables at two time steps as
\begin{equation*}
\vc_t:=(\vz_{t-1},\vz_t),
\qquad
\hat{\vc}_t:=(\hat{\vz}_{t-1},\hat{\vz}_t).
\end{equation*}
Let $\mathcal M_{\vc_t}$ be the pairwise Markov network for the conditional joint distribution $p(\vz_{t-1},\vz_t\mid u_t)$, and define $\mathcal M_{\hat{\vc}_t}$ analogously for $p(\hat{\vz}_{t-1},\hat{\vz}_t\mid u_t)$.
The current--current part of $\mathcal M_{\vc_t}$ represents the instantaneous conditional-independence relations given the past state and therefore coincides with $\mathcal M_z$ and its edge set $\Einst$ defined in Appendix~\ref{app:instantaneous_markov_identification}.
Here, we use only the past--current part of these Markov networks.

Define the relations between the past and current time steps by
\begin{equation*}
\Elag^-
:=
\left\{
\{z_{t-1,a},z_{t,i}\}
\;\middle|\;
z_{t-1,a}\not\!\perp z_{t,i}
\mid \vz_{t-1,\setminus\{a\}},\vz_{t,\setminus\{i\}},u_t
\right\}.
\end{equation*}
For the learned representation, define
\begin{equation*}
\widehatElag^-
:=
\left\{
\{\hat z_{t-1,a},\hat z_{t,i}\}
\;\middle|\;
\hat z_{t-1,a}\not\!\perp\hat z_{t,i}
\mid
\hat{\vz}_{t-1,\setminus\{a\}},
\hat{\vz}_{t,\setminus\{i\}},u_t
\right\}.
\end{equation*}
Thus, $\Elag^-$ and $\widehatElag^-$ are the past--current parts of $\mathcal M_{\vc_t}$ and $\mathcal M_{\hat{\vc}_t}$, respectively.
They correspond to $\Elag$ and $\widehatElag$ in the main text.
The future-directed sets $\Elag^+$ and $\widehatElag^+$ are obtained by applying the same definitions to $\vc_{t+1}=(\vz_t,\vz_{t+1})$ and $\hat{\vc}_{t+1}=(\hat{\vz}_t,\hat{\vz}_{t+1})$.
\end{definition}

For a reference value $u_t^{(0)}$ and comparison value $u_t^{(n)}$, define the log-density ratio between the corresponding transition densities by
\begin{equation*}
\Delta_n^-(\vz_t,\vz_{t-1})
:=
\log p(\vz_t\mid \vz_{t-1},u_t^{(n)})
-
\log p(\vz_t\mid \vz_{t-1},u_t^{(0)}).
\end{equation*}
For each current--past component pair $(i,a)$, collect these mixed derivatives over the comparison values as
\begin{equation*}
\vv^-_{ia}(\vz_t,\vz_{t-1})
:=
\left(
\frac{\partial^2\Delta_n^-(\vz_t,\vz_{t-1})}
{\partial z_{t,i}\partial z_{t-1,a}}
\right)_{n=1}^{N_{\mathrm{aux}}}
\in\mathbb R^{N_{\mathrm{aux}}}.
\end{equation*}
The quantities $\Delta_n$ and $\vv_{ia}$ in the main text correspond to $\Delta_n^-$ and $\vv^-_{ia}$, respectively.
Define $\widehat\Delta_n^-$ and $\hat{\vv}^-_{ia}$ analogously for the learned representation.
Analogously, the future-directed quantities $\Delta_n^+$, $\vv^+_{bi}$, and their learned counterparts are obtained by shifting these definitions forward by one time step.
The following lemma relates cross-time nonedges to zeros of the corresponding derivative vectors.

\begin{lemma}[A cross-time nonedge implies zero mixed derivatives of the log-density ratio]
\label{lemma:time_nonedge_zero_variation}
For the true representation, the past- and future-directed statements are, respectively,
\begin{align*}
\{z_{t-1,a},z_{t,i}\}\notin \Elag^-
&\Longrightarrow \vv^-_{ia}(\vz_t,\vz_{t-1})=0,\\
\{z_{t,i},z_{t+1,b}\}\notin \Elag^+
&\Longrightarrow \vv^+_{bi}(\vz_{t+1},\vz_t)=0.
\end{align*}
For the learned representation, the corresponding statements are
\begin{align*}
\{\hat z_{t-1,m},\hat z_{t,k}\}\notin \widehatElag^-
&\Longrightarrow \hat{\vv}^-_{km}(\hat{\vz}_t,\hat{\vz}_{t-1})=0,\\
\{\hat z_{t,k},\hat z_{t+1,m}\}\notin \widehatElag^+
&\Longrightarrow \hat{\vv}^+_{mk}(\hat{\vz}_{t+1},\hat{\vz}_t)=0.
\end{align*}
\end{lemma}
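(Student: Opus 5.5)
The plan mirrors the argument used for learned instantaneous nonedges in the proof of Lemma~\ref{lemma:instantaneous_jacobian_conditions}. A pairwise conditional independence will be turned into an additive factorization of the joint log density. That factorization then kills the relevant mixed derivative for every auxiliary-variable value. Subtracting the reference value finally gives the zero mixed derivative of the log-density ratio.

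\textbf{Past-directed statement for the true representation.} Fix $u_t$ and write $\vc_t=(\vz_{t-1},\vz_t)$. Suppose $\{z_{t-1,a},z_{t,i}\}\notin\Elag^-$, that is, $z_{t-1,a}\perp z_{t,i}\mid \vz_{t-1,\setminus\{a\}},\vz_{t,\setminus\{i\}},u_t$. Since the density is positive (Assumption~\ref{assumption:supp_instantaneous_identification}\ref{appcond:instantaneous-regularity}), we can factor
\begin{equation*}
\log p(\vz_{t-1},\vz_t\mid u_t)=\log p(z_{t-1,a}\mid \vs,u_t)+\log p(z_{t,i}\mid\vs,u_t)+\log p(\vs\mid u_t),
\end{equation*}
where $\vs$ collects the remaining $2d_z-2$ variables. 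No term depends on both $z_{t-1,a}$ and $z_{t,i}$, so $\partial^2\log p(\vz_{t-1},\vz_t\mid u_t)/\partial z_{t,i}\partial z_{t-1,a}=0$. Next we use
\begin{equation*}
\log p(\vz_t\mid\vz_{t-1},u_t)=\log p(\vz_{t-1},\vz_t\mid u_t)-\log p(\vz_{t-1}\mid u_t).
\end{equation*}
The subtracted term does not depend on $\vz_t$, so the same mixed derivative of the transition log density also vanishes. This holds for each $u_t^{(n)}$ and for $u_t^{(0)}$. Differencing gives $\partial^2\Delta_n^-/\partial z_{t,i}\partial z_{t-1,a}=0$ for every $n$, hence $\vv^-_{ia}=0$. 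The argument needs twice differentiability of the marginal $\log p(\vz_{t-1}\mid u_t)$ only to justify splitting the derivative, and even that is unnecessary, because we differentiate in $z_{t,i}$ first, which eliminates the marginal before any second derivative is taken.

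\textbf{Future-directed and learned statements.} The future-directed statement is identical after shifting indices to $\vc_{t+1}=(\vz_t,\vz_{t+1})$ and conditioning on $u_{t+1}$. The learned statements follow verbatim with $\hat{\vz}$ in place of $\vz$. The learned densities are positive and $C^2$ because they are obtained from the true ones through the $C^2$ diffeomorphism $H$ via the change of variables already used in the proof of Lemma~\ref{lemma:instantaneous_jacobian_conditions}.

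\textbf{Main obstacle.} The only delicate point is justifying the additive factorization from the pairwise independence. The pairwise Markov property is used here only for the single pair, so no global Markov property or Hammersley--Clifford argument is required. The factorization uses the chain rule $p(z_{t-1,a},z_{t,i},\vs)=p(z_{t-1,a}\mid z_{t,i},\vs)\,p(z_{t,i}\mid\vs)\,p(\vs)$. Conditional independence then lets us replace $p(z_{t-1,a}\mid z_{t,i},\vs)$ by $p(z_{t-1,a}\mid\vs)$. Positivity guarantees that all logarithms are defined everywhere, and we take the smoothness of these conditionals to be inherited from the smoothness of the joint log density.
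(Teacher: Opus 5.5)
Your proof is correct and is essentially the paper's argument. The paper factors the transition density directly as $p(z_{t,i}\mid\vz_{t,\setminus\{i\}},\vz_{t-1,\setminus\{a\}},u_t)\,p(\vz_{t,\setminus\{i\}}\mid\vz_{t-1},u_t)$, whereas you factor the joint and then subtract $\log p(\vz_{t-1}\mid u_t)$; this is the same idea, but the paper's version needs only the positivity of the transition density that the regularity condition actually assumes. The remaining steps coincide with the paper's proof: the mixed derivative vanishes for every value of $u_t$, you difference against the reference value, and you obtain the future-directed and learned statements by a time shift and by replacing $\vz$ with $\hat{\vz}$.
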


\begin{proof}
We first consider the true distribution in the past case.
Suppose $\{z_{t-1,a},z_{t,i}\}\notin \Elag^-$.
By the definition of $\Elag^-$, for every auxiliary-variable value $u_t$,
\begin{equation*}
z_{t-1,a}\perp z_{t,i}
\mid \vz_{t-1,\setminus\{a\}},\vz_{t,\setminus\{i\}},u_t.
\end{equation*}
The transition density therefore factorizes as
\begin{align*}
p(\vz_t\mid \vz_{t-1},u_t)
={}&
p\bigl(z_{t,i}\mid \vz_{t,\setminus\{i\}},\vz_{t-1,\setminus\{a\}},u_t\bigr)
p(\vz_{t,\setminus\{i\}}\mid \vz_{t-1},u_t).
\end{align*}
The first factor does not depend on $z_{t-1,a}$, and the second does not depend on $z_{t,i}$.
Because the transition density is positive, taking its logarithm gives
\begin{equation*}
\frac{\partial^2}{\partial z_{t,i}\partial z_{t-1,a}}
\log p(\vz_t\mid \vz_{t-1},u_t)=0
\end{equation*}
for every auxiliary-variable value.
Taking the difference between each comparison value and the reference value therefore makes every entry of $\vv^-_{ia}(\vz_t,\vz_{t-1})$ zero, and hence
\begin{equation*}
\vv^-_{ia}(\vz_t,\vz_{t-1})=0.
\end{equation*}

For the learned representation, the same conclusion follows directly from the definition of a nonedge in the learned Markov network.
In particular, if
\begin{equation*}
\{\hat z_{t-1,m},\hat z_{t,k}\}\notin\widehatElag^-,
\end{equation*}
then, for every $u_t$,
\begin{equation*}
\hat z_{t-1,m}\perp\hat z_{t,k}
\mid
\hat{\vz}_{t-1,\setminus\{m\}},
\hat{\vz}_{t,\setminus\{k\}},u_t.
\end{equation*}
The same factorization argument gives
\begin{equation*}
\frac{\partial^2}
{\partial\hat z_{t,k}\partial\hat z_{t-1,m}}
\log p(\hat{\vz}_t\mid\hat{\vz}_{t-1},u_t)=0,
\end{equation*}
and hence $\hat{\vv}^-_{km}(\hat{\vz}_t,\hat{\vz}_{t-1})=0$.
Shifting time forward by one step proves the statements for the future case for both representations.
\end{proof}

The Jacobian conditions in Appendix~\ref{app:instantaneous_markov_identification} restrict which true components can have nonzero derivatives with respect to each learned coordinate according to the local structure of $\mathcal M_z$.
To express this restriction at each time step, let $[d_z]:=\{1,\ldots,d_z\}$ and, for each time $s$ and index $a\in[d_z]$, define the candidate component index set
\begin{equation*}
\mathcal C_s(a)
:=
\left\{
i\in[d_z]
\;\middle|\;
i=a
\ \text{or}\
z_{s,a}\in\Psi_{\mathcal M_z}(z_{s,i})
\right\}.
\end{equation*}
By Lemma~\ref{lemma:residual_mixing_is_intimate},
\begin{equation*}
\frac{\partial z_{t,i}}
{\partial\hat z_{t,\pi_{\hat{\vz}_t}(a)}}\ne0
\quad\Longrightarrow\quad
i\in\mathcal C_t(a),
\end{equation*}
and the analogous implication holds after shifting time by one step.
Thus, for the learned current and past coordinates indexed by $j$ and $a$, respectively, only true components indexed by $\mathcal C_t(j)$ and $\mathcal C_{t-1}(a)$ can have nonzero derivatives.

Among the index pairs in $\mathcal C_t(j)\times\mathcal C_{t-1}(a)$, only those corresponding to true cross-time relations need to be retained, because Lemma~\ref{lemma:time_nonedge_zero_variation} makes the derivative vector zero for every cross-time nonedge.
We therefore define the past-directed candidate block as the following set of index pairs:
\begin{equation*}
\mathcal B^-(j,a)
:=
\left\{
(k,m)\in[d_z]^2
\;\middle|\;
k\in\mathcal C_t(j),\ m\in\mathcal C_{t-1}(a),\
\{z_{t-1,m},z_{t,k}\}\in \Elag^-
\right\}.
\end{equation*}
In the future direction, define
\begin{equation*}
\mathcal B^+(j,b)
:=
\left\{
(k,m)\in[d_z]^2
\;\middle|\;
k\in\mathcal C_t(j),\ m\in\mathcal C_{t+1}(b),\
\{z_{t,k},z_{t+1,m}\}\in \Elag^+
\right\}.
\end{equation*}
Here, $\mathcal B^-(j,a)$ contains index pairs, whereas edges in $\Elag^-$ are written as unordered pairs of the corresponding latent-variable nodes.
Using these candidate blocks, we next express each learned cross-time derivative vector as a linear combination of the corresponding true derivative vectors.

\begin{lemma}[Transformation of cross-time derivatives over local candidate blocks]
\label{lemma:time_variation_transformation}
Fix $(\hat{\vz}_t,\hat{\vz}_{t-1})$ and choose the corresponding local permutations $\pi_{\hat{\vz}_t}$ and $\pi_{\hat{\vz}_{t-1}}$ from Lemma~\ref{lemma:local_coordinate_correspondence}.
For any $j,a\in[d_z]$,
\begin{equation*}
\hat{\vv}^-_{\pi_{\hat{\vz}_t}(j),\pi_{\hat{\vz}_{t-1}}(a)}
(\hat{\vz}_t,\hat{\vz}_{t-1})
=
\sum_{(k,m)\in\mathcal B^-(j,a)}
\chi_{km}\vv^-_{km}(\vz_t,\vz_{t-1}),
\end{equation*}
where $\vz_t=H(\hat{\vz}_t)$, $\vz_{t-1}=H(\hat{\vz}_{t-1})$, and
\begin{equation*}
\chi_{km}
:=
\frac{\partial z_{t,k}}
{\partial\hat z_{t,\pi_{\hat{\vz}_t}(j)}}
\frac{\partial z_{t-1,m}}
{\partial\hat z_{t-1,\pi_{\hat{\vz}_{t-1}}(a)}}.
\end{equation*}
The analogous equality holds in the future direction after shifting time by one step.
\end{lemma}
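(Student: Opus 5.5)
We start from the identity~\eqref{eq:a1_1_loglikelihood_difference}, $\widehat\Delta_n^-(\hat{\vz}_t,\hat{\vz}_{t-1})=\Delta_n^-(H(\hat{\vz}_t),H(\hat{\vz}_{t-1}))$, established in the proof of Lemma~\ref{lemma:instantaneous_jacobian_conditions}. We differentiate it once with respect to the learned current coordinate $\hat z_{t,\pi_{\hat{\vz}_t}(j)}$ and once with respect to the learned past coordinate $\hat z_{t-1,\pi_{\hat{\vz}_{t-1}}(a)}$.

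Since $H$ acts separately on $\hat{\vz}_t$ and $\hat{\vz}_{t-1}$, the map $\hat{\vz}_t\mapsto\vz_t$ does not depend on $\hat{\vz}_{t-1}$, and vice versa. So the first derivative with respect to the current coordinate is $\sum_k (\partial\Delta_n^-/\partial z_{t,k})\,\partial z_{t,k}/\partial\hat z_{t,\pi_{\hat{\vz}_t}(j)}$. In this expression the Jacobian factor depends only on $\hat{\vz}_t$.

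Differentiating again with respect to the past coordinate acts only on $\partial\Delta_n^-/\partial z_{t,k}$. The chain rule gives $\sum_m (\partial^2\Delta_n^-/\partial z_{t,k}\partial z_{t-1,m})\,\partial z_{t-1,m}/\partial\hat z_{t-1,\pi_{\hat{\vz}_{t-1}}(a)}$. Because there are no second derivatives of $H$ mixing time steps, no extra terms appear. Stacking over $n=1,\ldots,N_{\mathrm{aux}}$ yields
\[
\hat{\vv}^-_{\pi_{\hat{\vz}_t}(j),\pi_{\hat{\vz}_{t-1}}(a)}=\sum_{k=1}^{d_z}\sum_{m=1}^{d_z}\chi_{km}\,\vv^-_{km}(\vz_t,\vz_{t-1}),
\]
with $\chi_{km}$ as in the statement. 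The $C^2$ regularity in Assumption~\ref{assumption:supp_instantaneous_identification} justifies the interchange and the continuity of all terms.

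It remains to restrict the full double sum to $\mathcal B^-(j,a)$. This takes three observations, each giving a vanishing factor.
\begin{itemize}
\item[(a)] If $k\notin\mathcal C_t(j)$, then $\partial z_{t,k}/\partial\hat z_{t,\pi_{\hat{\vz}_t}(j)}=0$. For $k\neq j$ this is the contrapositive of Lemma~\ref{lemma:residual_mixing_is_intimate}; $k=j$ always lies in $\mathcal C_t(j)$.
\item[(b)] If $m\notin\mathcal C_{t-1}(a)$, then $\partial z_{t-1,m}/\partial\hat z_{t-1,\pi_{\hat{\vz}_{t-1}}(a)}=0$. This is the same lemma applied at time $t-1$ with the local permutation $\pi_{\hat{\vz}_{t-1}}$, which is legitimate because $H$ is time-invariant and the Markov network structure is the same at every time step.
\item[(c)] If $\{z_{t-1,m},z_{t,k}\}\notin\Elag^-$, then $\vv^-_{km}=0$ by Lemma~\ref{lemma:time_nonedge_zero_variation}.
\end{itemize}
Every term outside $\mathcal B^-(j,a)$ therefore vanishes, which gives the claimed formula. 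The future-direction statement follows by the identical argument applied to $\Delta_n^+$ on $(\vz_t,\vz_{t+1})$, with $\mathcal C_{t+1}(b)$ and $\Elag^+$ in place of their past counterparts.

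The only delicate point is step (b). Applying Lemma~\ref{lemma:residual_mixing_is_intimate} at time $t-1$ requires that the instantaneous network and the intimate-neighbor sets at $t-1$ coincide with $\mathcal M_z$. It also requires that the Jacobian conditions hold at the point $\hat{\vz}_{t-1}$. Both follow because the variability condition holds for every $\vz_t$ and $H$ is independent of time. I would state this explicitly rather than leave it implicit.
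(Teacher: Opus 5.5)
Your proof is correct and follows the paper's argument. You cancel the $\log|\det J_H|$ term to obtain $\widehat\Delta_n^-=\Delta_n^-(H(\hat{\vz}_t),H(\hat{\vz}_{t-1}))$, use that $H$ acts separately at each time step so the mixed derivative yields only the products $\chi_{km}$ of first-order Jacobian entries, and restrict the double sum to $\mathcal B^-(j,a)$ via Lemma~\ref{lemma:residual_mixing_is_intimate} (at both time steps) and Lemma~\ref{lemma:time_nonedge_zero_variation}. Your worry in step (b) is milder than you suggest: the Jacobian conditions are pointwise statements about the single map $H$ that hold at every point of its domain, so evaluating them at $\hat{\vz}_{t-1}$ is immediate.
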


\begin{proof}
We prove the past case.
Transforming the conditional density under $\vz_s=H(\hat{\vz}_s)$ gives
\begin{equation*}
p(\hat{\vz}_t\mid\hat{\vz}_{t-1},u_t)
=
p\bigl(H(\hat{\vz}_t)\mid H(\hat{\vz}_{t-1}),u_t\bigr)
\left|\det J_H(\hat{\vz}_t)\right|.
\end{equation*}
Applying this equality under the reference and comparison values and taking their difference cancels $\log|\det J_H(\hat{\vz}_t)|$ because $H$ is independent of the auxiliary variable.
Thus, for every $n$,
\begin{equation*}
\widehat\Delta_n^-(\hat{\vz}_t,\hat{\vz}_{t-1})
=
\Delta_n^-\bigl(H(\hat{\vz}_t),H(\hat{\vz}_{t-1})\bigr).
\end{equation*}
Differentiate this equality once with respect to the learned current coordinate $\hat z_{t,\pi_{\hat{\vz}_t}(j)}$ and once with respect to the learned past coordinate $\hat z_{t-1,\pi_{\hat{\vz}_{t-1}}(a)}$.
Since the transformation acts separately at each time step, the chain rule leaves only products of first-order Jacobian entries:
\begin{equation*}
\frac{\partial^2\widehat\Delta_n^-(\hat{\vz}_t,\hat{\vz}_{t-1})}
{\partial\hat z_{t,\pi_{\hat{\vz}_t}(j)}
 \partial\hat z_{t-1,\pi_{\hat{\vz}_{t-1}}(a)}}
=
\sum_{k=1}^{d_z}\sum_{m=1}^{d_z}
\chi_{km}
\frac{\partial^2\Delta_n^-(\vz_t,\vz_{t-1})}
{\partial z_{t,k}\partial z_{t-1,m}}.
\end{equation*}
Collecting this equality over $n=1,\ldots,N_{\mathrm{aux}}$ gives the corresponding equality for $\vv^-$.
By Lemma~\ref{lemma:residual_mixing_is_intimate}, $\chi_{km}$ can be nonzero only if $k\in\mathcal C_t(j)$ and $m\in\mathcal C_{t-1}(a)$.
Moreover, Lemma~\ref{lemma:time_nonedge_zero_variation} gives $\vv^-_{km}=0$ whenever $\{z_{t-1,m},z_{t,k}\}\notin \Elag^-$.
The sum can therefore be restricted to $\mathcal B^-(j,a)$.
The equality for the future case follows by shifting time forward by one step.
\end{proof}

Figure~\ref{fig:intimate_neighbor_example} illustrates how differences in cross-time relations can resolve intimate-neighbor ambiguities that remain after considering the instantaneous Markov network alone.
Based on the preceding results, we impose the following three conditions to eliminate the remaining residual mixing.
The first is a local linear-independence condition that prevents cancellation among the cross-time derivative vectors in the preceding transformation.
The second is a cross-time edge-count minimality condition that rules out additional learned cross-time edges.
The third requires each intimate-neighbor ambiguity left by the instantaneous Markov network to be resolved by a difference in the cross-time relations with some past or future component.

\begin{figure}[tpb]
    \centering
    \includegraphics[width=0.6\textwidth]{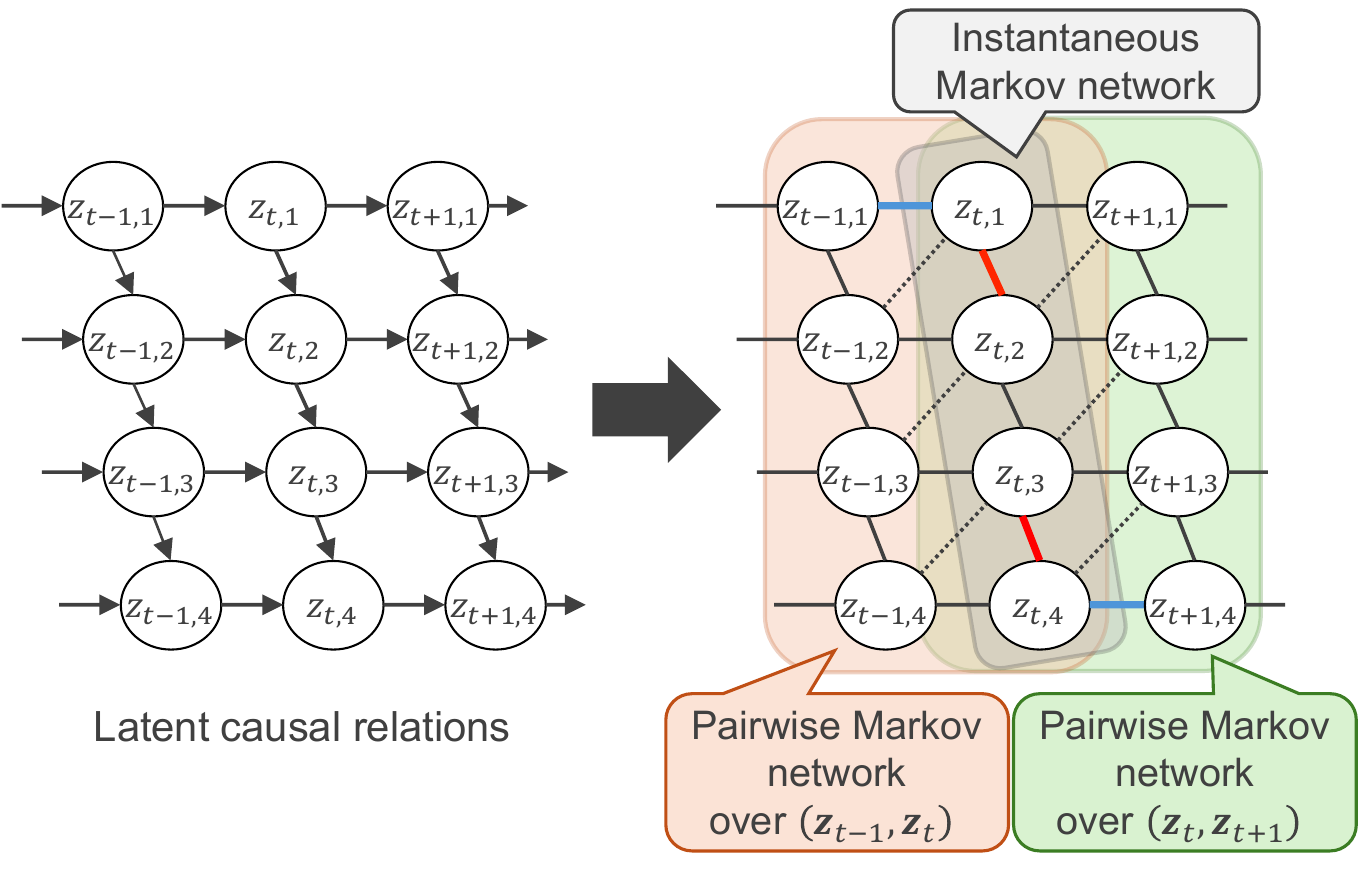}
    \caption{Illustration of how cross-time relations resolve intimate-neighbor ambiguities.
    The left panel shows a four-component latent causal model, and the right panel shows the corresponding instantaneous and cross-time pairwise Markov networks.
    In the instantaneous network, $z_{t,2}\in\Psi_{\mathcal M_z}(z_{t,1})$ and $z_{t,3}\in\Psi_{\mathcal M_z}(z_{t,4})$, as indicated by the red edges.
    The blue cross-time relations resolve the ambiguity of the first pair through $z_{t-1,1}$ and that of the second through $z_{t+1,4}$, illustrating that either a past or a future relation can suffice for each intimate-neighbor pair.
    Dotted edges denote conditional-dependence relations that are not direct causal edges.
    All Markov-network relations are defined conditionally on the auxiliary variable.
    }
    \label{fig:intimate_neighbor_example}
\end{figure}

\begin{assumption}[Cross-time variation, separation, and edge-count minimality]
\label{assumption:supp_cross_time_conditions}
For every time $t$, the following conditions hold.
\medskip
\appendixcondition{appcond:cross-time-variation}{Local cross-time variability.}
For every $j,a\in[d_z]$, at almost every $(\vz_t,\vz_{t-1})$,
\begin{equation*}
\operatorname{rank}
\left[
\vv^-_{km}(\vz_t,\vz_{t-1})
\right]_{(k,m)\in\mathcal B^-(j,a)}
=
|\mathcal B^-(j,a)|,
\end{equation*}
where the vectors $\vv^-_{km}(\vz_t,\vz_{t-1})$ form the columns of the matrix.
\medskip
\appendixcondition{appcond:cross-time-edge-minimality}{Cross-time edge-count minimality.}
The learned cross-time edge set has no greater cardinality than the true one:
\begin{equation*}
|\widehatElag^-|
\leq
|\Elag^-|.
\end{equation*}
\medskip
\appendixcondition{appcond:intimate-neighbor-separation}{Intimate-neighbor separation.}
For every $z_{t,j}\in\Psi_{\mathcal M_z}(z_{t,i})$, at least one of the following conditions holds:
\begin{enumerate}
\item There exists a past component $z_{t-1,a}$ such that
\begin{equation*}
\{z_{t-1,a},z_{t,i}\}\in \Elag^-,
\qquad
\{z_{t-1,a},z_{t,j}\}\notin \Elag^-.
\end{equation*}
\item There exists a future component $z_{t+1,b}$ such that
\begin{equation*}
\{z_{t,i},z_{t+1,b}\}\in \Elag^+,
\qquad
\{z_{t,j},z_{t+1,b}\}\notin \Elag^+.
\end{equation*}
\end{enumerate}
\end{assumption}

We first note a dimensional requirement for the local cross-time variability condition in \assumptionconditionref{assumption:supp_cross_time_conditions}{appcond:cross-time-variation}.
A necessary condition is
\begin{equation*}
N_{\mathrm{aux}}
\geq
\max_{j,a}|\mathcal B^-(j,a)|.
\end{equation*}
This dimensional bound is necessary but does not by itself guarantee linear independence.
Because Assumption~\ref{assumption:supp_cross_time_conditions} holds at every time step, Conditions~\ref{appcond:cross-time-variation} and~\ref{appcond:cross-time-edge-minimality} also apply in the future direction after shifting time forward by one step.
Thus, no separate variability, dimensional, or edge-count condition is required for the future case.
We next combine the transformation in Lemma~\ref{lemma:time_variation_transformation} with Condition~\ref{appcond:cross-time-variation} to derive Jacobian constraints from learned cross-time nonedges.

\begin{lemma}[Jacobian condition implied by a learned cross-time nonedge]
\label{lemma:time_nonedge_jacobian_condition}
Fix $(\hat{\vz}_t,\hat{\vz}_{t-1})$ and choose the corresponding local permutations $\pi_{\hat{\vz}_t}$ and $\pi_{\hat{\vz}_{t-1}}$ from Lemma~\ref{lemma:local_coordinate_correspondence}.
Suppose that $\{z_{t-1,a},z_{t,i}\}\in \Elag^-$ and $i\in\mathcal C_t(j)$.
Then
\begin{equation*}
\{\hat z_{t-1,\pi_{\hat{\vz}_{t-1}}(a)},
\hat z_{t,\pi_{\hat{\vz}_t}(j)}\}
\notin\widehatElag^-
\qquad\Longrightarrow
\frac{\partial z_{t,i}}
{\partial\hat z_{t,\pi_{\hat{\vz}_t}(j)}}
\frac{\partial z_{t-1,a}}
{\partial\hat z_{t-1,\pi_{\hat{\vz}_{t-1}}(a)}}=0.
\end{equation*}
The future-directed result follows by applying the same argument after shifting time forward by one step.
\end{lemma}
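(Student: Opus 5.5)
We combine Lemma~\ref{lemma:time_nonedge_zero_variation} on the learned side with the expansion of Lemma~\ref{lemma:time_variation_transformation}, and then use the local linear independence in \assumptionconditionref{assumption:supp_cross_time_conditions}{appcond:cross-time-variation}. Throughout, fix $(\hat{\vz}_t,\hat{\vz}_{t-1})$, write $\vz_t=H(\hat{\vz}_t)$ and $\vz_{t-1}=H(\hat{\vz}_{t-1})$, and abbreviate $\hat k:=\pi_{\hat{\vz}_t}(j)$ and $\hat m:=\pi_{\hat{\vz}_{t-1}}(a)$.

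\textbf{Step 1: the learned vector vanishes.} The hypothesis says $\{\hat z_{t-1,\hat m},\hat z_{t,\hat k}\}\notin\widehatElag^-$. By the learned-side statement of Lemma~\ref{lemma:time_nonedge_zero_variation}, $\hat{\vv}^-_{\hat k\hat m}(\hat{\vz}_t,\hat{\vz}_{t-1})=0$.

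\textbf{Step 2: linear independence kills every coefficient.} Lemma~\ref{lemma:time_variation_transformation} then gives
\begin{equation*}
0=\sum_{(k,m)\in\mathcal B^-(j,a)}\chi_{km}\,\vv^-_{km}(\vz_t,\vz_{t-1}).
\end{equation*}
By Condition~\ref{appcond:cross-time-variation}, the vectors $\vv^-_{km}$ with $(k,m)\in\mathcal B^-(j,a)$ are linearly independent. Hence $\chi_{km}=0$ for every $(k,m)\in\mathcal B^-(j,a)$.

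\textbf{Step 3: the target pair lies in the block.} We have $i\in\mathcal C_t(j)$ by hypothesis. We have $a\in\mathcal C_{t-1}(a)$ trivially, since $m=a$ is allowed in the definition. Finally, $\{z_{t-1,a},z_{t,i}\}\in\Elag^-$ by hypothesis. So $(i,a)\in\mathcal B^-(j,a)$, and $\chi_{ia}=0$ is exactly the claimed product identity. The future-directed version follows verbatim by shifting time forward one step, using the future-directed parts of Lemmas~\ref{lemma:time_nonedge_zero_variation} and~\ref{lemma:time_variation_transformation} together with Condition~\ref{appcond:cross-time-variation} applied at time $t+1$.

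\textbf{Main obstacle: the qualifier ``almost everywhere''.} Condition~\ref{appcond:cross-time-variation} holds only at almost every $(\vz_t,\vz_{t-1})$, so Step~2 directly yields $\chi_{ia}=0$ only on the image under $H\times H$ of a full-measure set. To extend this to every point, I would argue as follows.
\begin{itemize}
\item $H$ is a diffeomorphism, so it maps null sets to null sets. The exceptional set in learned coordinates is therefore null, and its complement is dense.
\item $\chi_{ia}$ is a product of first-order Jacobian entries of the $C^2$ map $H$, so it is continuous in $(\hat{\vz}_t,\hat{\vz}_{t-1})$.
\end{itemize}
The subtlety is that the local permutations $\pi_{\hat{\vz}_t}$ and $\pi_{\hat{\vz}_{t-1}}$ may vary with the base point. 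I would handle this by fixing the column indices $\hat k,\hat m$ once and running the argument for those fixed learned coordinates. The block expansion of Lemma~\ref{lemma:time_variation_transformation} rests on Lemma~\ref{lemma:residual_mixing_is_intimate}, which only uses that these particular columns carry nonzero diagonal entries. By continuity of $J_H$, those entries stay nonzero on a neighbourhood, so the same permutation is admissible there. Density of the good set inside that neighbourhood, combined with continuity of $\chi_{ia}$, then gives $\chi_{ia}=0$ at the chosen point.
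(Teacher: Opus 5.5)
Your proof is correct and follows the paper's argument essentially step for step. The learned nonedge kills $\hat{\vv}^-$ via Lemma~\ref{lemma:time_nonedge_zero_variation}, and the block expansion of Lemma~\ref{lemma:time_variation_transformation} plus linear independence forces $\chi_{ia}=0$ because $(i,a)\in\mathcal B^-(j,a)$. You handle the almost-everywhere qualifier as the paper does: keep the local permutations fixed on a neighbourhood where the selected Jacobian entries stay nonzero, then pass to the point by continuity of $\chi_{ia}$. Your observation that $H$ maps null sets to null sets simply makes explicit the density claim that the paper uses without comment.
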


\begin{proof}
We prove the past case.
The assumed learned nonedge and Lemma~\ref{lemma:time_nonedge_zero_variation} imply
\begin{equation*}
\hat{\vv}^-_{\pi_{\hat{\vz}_t}(j),\pi_{\hat{\vz}_{t-1}}(a)}
(\hat{\vz}_t,\hat{\vz}_{t-1})=0.
\end{equation*}
Applying Lemma~\ref{lemma:time_variation_transformation} gives
\begin{equation*}
0=
\sum_{(k,m)\in\mathcal B^-(j,a)}
\chi_{km}\vv^-_{km}(\vz_t,\vz_{t-1}).
\end{equation*}
Since $a\in\mathcal C_{t-1}(a)$ by definition, the conditions $i\in\mathcal C_t(j)$ and $\{z_{t-1,a},z_{t,i}\}\in\Elag^-$ imply $(i,a)\in\mathcal B^-(j,a)$.
By the linear independence in \assumptionconditionref{assumption:supp_cross_time_conditions}{appcond:cross-time-variation}, all coefficients in this linear combination must be zero.
In particular, $\chi_{ia}=0$, which gives the claimed Jacobian condition.

The rank condition in \assumptionconditionref{assumption:supp_cross_time_conditions}{appcond:cross-time-variation} holds almost everywhere, so it remains to consider points where this condition may fail.
At such a point, the Jacobian entries associated with the chosen local permutations are nonzero by Lemma~\ref{lemma:local_coordinate_correspondence}.
By continuity, these entries remain nonzero in a sufficiently small neighborhood, so the same local permutations can be used there.
Since the rank condition holds almost everywhere, every sufficiently small neighborhood contains points where it holds.
At such points, the preceding argument gives $\chi_{ia}=0$, and continuity then gives $\chi_{ia}=0$ at the original point as well.
The future-directed result follows by applying the same argument after shifting time forward by one step.
\end{proof}

Lemma~\ref{lemma:time_nonedge_jacobian_condition} shows that every true cross-time edge is preserved under the local permutations.
Combined with cross-time edge-count minimality, this yields the local correspondence between the true and learned cross-time edge sets stated next.

\begin{lemma}[Local isomorphism of cross-time conditional-dependence relations]
\label{lemma:local_time_relation_isomorphism}
Fix $(\hat{\vz}_t,\hat{\vz}_{t-1})$ and choose the corresponding local permutations $\pi_{\hat{\vz}_t}$ and $\pi_{\hat{\vz}_{t-1}}$ from Lemma~\ref{lemma:local_coordinate_correspondence}.
Under Assumptions~\ref{assumption:supp_instantaneous_identification} and~\ref{assumption:supp_cross_time_conditions}, for every $i,a\in[d_z]$,
\begin{align*}
\{z_{t-1,a},z_{t,i}\}\in \Elag^-
\quad\Longleftrightarrow\quad
\{\hat z_{t-1,\pi_{\hat{\vz}_{t-1}}(a)},
\hat z_{t,\pi_{\hat{\vz}_t}(i)}\}
\in\widehatElag^-.
\end{align*}
The future-directed result follows by applying the same argument after shifting time forward by one step.
\end{lemma}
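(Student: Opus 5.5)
The argument mirrors the proof of Theorem~\ref{theorem:local_markov_network_isomorphism}: show that every true cross-time edge is preserved under the pair of local permutations, and then use edge-count minimality to rule out extra learned edges.

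\emph{Forward direction (edge preservation).} Fix $(\hat{\vz}_t,\hat{\vz}_{t-1})$ together with the local permutations $\pi_{\hat{\vz}_t}$ and $\pi_{\hat{\vz}_{t-1}}$ from Lemma~\ref{lemma:local_coordinate_correspondence}. Take any $\{z_{t-1,a},z_{t,i}\}\in\Elag^-$, and suppose for contradiction that its image $\{\hat z_{t-1,\pi_{\hat{\vz}_{t-1}}(a)},\hat z_{t,\pi_{\hat{\vz}_t}(i)}\}$ is a learned nonedge. Since $i\in\mathcal C_t(i)$ by definition of the candidate set, Lemma~\ref{lemma:time_nonedge_jacobian_condition} applies with $j=i$. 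It gives
\begin{equation*}
\frac{\partial z_{t,i}}{\partial\hat z_{t,\pi_{\hat{\vz}_t}(i)}}
\frac{\partial z_{t-1,a}}{\partial\hat z_{t-1,\pi_{\hat{\vz}_{t-1}}(a)}}=0.
\end{equation*}
Both factors are nonzero by Lemma~\ref{lemma:local_coordinate_correspondence}, applied at $\hat{\vz}_t$ and at $\hat{\vz}_{t-1}$ respectively. This is a contradiction, so the image is a learned edge.

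\emph{Converse (counting).} The map $(a,i)\mapsto(\pi_{\hat{\vz}_{t-1}}(a),\pi_{\hat{\vz}_t}(i))$ is injective on past--current index pairs, because it is a product of two permutations. Cross-time edges are unordered pairs, but each one has exactly one past and one current endpoint, so they correspond bijectively to ordered pairs $(a,i)$. The forward direction therefore embeds $\Elag^-$ injectively into $\widehatElag^-$, giving $|\Elag^-|\le|\widehatElag^-|$. Condition~\ref{appcond:cross-time-edge-minimality} gives the reverse inequality, so the two cardinalities are equal. Hence the images of the true edges exhaust $\widehatElag^-$: any learned edge whose preimage under the product permutation were a true nonedge would be an extra edge and would violate the equality. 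This proves the reverse implication.

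\emph{Future direction.} The same argument applies after shifting time forward by one step. Here we use Lemma~\ref{lemma:time_nonedge_jacobian_condition} in its future form, with the local permutations at $\hat{\vz}_t$ and $\hat{\vz}_{t+1}$. We also use the time-shifted minimality condition $|\widehatElag^-|\le|\Elag^-|$ at time $t+1$, which equals $|\widehatElag^+|\le|\Elag^+|$ at time $t$ since Assumption~\ref{assumption:supp_cross_time_conditions} holds for every $t$.

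The main point needing care is the bookkeeping that allows Lemma~\ref{lemma:time_nonedge_jacobian_condition} to be invoked with $j=i$. The lemma already handles the almost-everywhere caveat of the rank condition through continuity, so nothing further is required there. The rest of the argument is a direct analogue of the instantaneous case.
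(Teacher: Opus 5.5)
Your proposal is correct and follows the paper's proof step for step. You preserve each true cross-time edge by applying Lemma~\ref{lemma:time_nonedge_jacobian_condition} with $j=i$ together with the nonzero local Jacobian entries from Lemma~\ref{lemma:local_coordinate_correspondence}, then combine injectivity of the product permutation with \assumptionconditionref{assumption:supp_cross_time_conditions}{appcond:cross-time-edge-minimality} to exclude extra learned edges, and handle the future case with the time-shifted conditions, exactly as the paper does.
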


\begin{proof}
We first show that every true cross-time relation is preserved under the local permutations.
Take $\{z_{t-1,a},z_{t,i}\}\in \Elag^-$.
Suppose, for contradiction, that the corresponding learned pair is a nonedge:
\begin{equation*}
\{\hat z_{t-1,\pi_{\hat{\vz}_{t-1}}(a)},
\hat z_{t,\pi_{\hat{\vz}_t}(i)}\}
\notin\widehatElag^-.
\end{equation*}
Applying Lemma~\ref{lemma:time_nonedge_jacobian_condition} with $j=i$ then gives
\begin{equation*}
\frac{\partial z_{t,i}}{\partial\hat z_{t,\pi_{\hat{\vz}_t}(i)}}
\frac{\partial z_{t-1,a}}{\partial\hat z_{t-1,\pi_{\hat{\vz}_{t-1}}(a)}}
=0.
\end{equation*}
Both factors are nonzero by Lemma~\ref{lemma:local_coordinate_correspondence}, yielding a contradiction.
Therefore,
\begin{equation*}
\{z_{t-1,a},z_{t,i}\}\in\Elag^-
\quad\Longrightarrow\quad
\{\hat z_{t-1,\pi_{\hat{\vz}_{t-1}}(a)},
\hat z_{t,\pi_{\hat{\vz}_t}(i)}\}
\in\widehatElag^-.
\end{equation*}

Because $\pi_{\hat{\vz}_t}$ and $\pi_{\hat{\vz}_{t-1}}$ are permutations, this edge-preserving map is injective, and hence
\begin{equation*}
|\Elag^-|\leq|\widehatElag^-|.
\end{equation*}
\assumptionconditionref{assumption:supp_cross_time_conditions}{appcond:cross-time-edge-minimality} gives the reverse inequality, so the two edge sets have equal cardinality.
Since every true cross-time relation is already mapped to a learned one, the learned edge set contains no additional relations, proving the equivalence.
The future-directed result follows by applying the same argument after shifting time forward by one step.
\end{proof}

Lemma~\ref{lemma:residual_mixing_is_intimate} leaves only intimate neighbors as candidates for residual mixing.
\assumptionconditionref{assumption:supp_cross_time_conditions}{appcond:intimate-neighbor-separation} requires each such pair to differ in its cross-time relation with at least one past or future component.
The local cross-time correspondence established in Lemma~\ref{lemma:local_time_relation_isomorphism} then allows this difference to rule out the remaining residual mixing, as shown next.

\begin{lemma}[Elimination of residual mixing associated with intimate neighbors]
\label{lemma:eliminate_residual_mixing_by_time_relation}
Under Assumptions~\ref{assumption:supp_instantaneous_identification} and~\ref{assumption:supp_cross_time_conditions}, for every $z_{t,j}\in\Psi_{\mathcal M_z}(z_{t,i})$,
\begin{equation*}
\frac{\partial z_{t,i}}
{\partial\hat z_{t,\pi_{\hat{\vz}_t}(j)}}=0.
\end{equation*}
\end{lemma}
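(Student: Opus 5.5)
The plan is to combine the intimate-neighbor separation condition, \assumptionconditionref{assumption:supp_cross_time_conditions}{appcond:intimate-neighbor-separation}, with the local cross-time isomorphism of Lemma~\ref{lemma:local_time_relation_isomorphism}. Together they produce a \emph{learned} cross-time nonedge that involves the learned coordinate $\hat z_{t,\pi_{\hat{\vz}_t}(j)}$. Lemma~\ref{lemma:time_nonedge_jacobian_condition} then converts this nonedge into a vanishing product of Jacobian entries. One factor of that product is the residual-mixing entry in the statement, and the other is nonzero by Lemma~\ref{lemma:local_coordinate_correspondence}.

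First I fix the point. I take $\hat{\vz}_t$ together with neighboring points $\hat{\vz}_{t-1}$ and $\hat{\vz}_{t+1}$, and I use the same local permutation $\pi_{\hat{\vz}_t}$ from Lemma~\ref{lemma:local_coordinate_correspondence} in both the past-directed and future-directed statements. I also fix $\pi_{\hat{\vz}_{t-1}}$ and $\pi_{\hat{\vz}_{t+1}}$. Because $z_{t,j}\in\Psi_{\mathcal M_z}(z_{t,i})$, the definition of $\mathcal C_t(j)$ gives $i\in\mathcal C_t(j)$. This membership is the hypothesis that Lemma~\ref{lemma:time_nonedge_jacobian_condition} needs.

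\emph{Past case.} Suppose there is $a$ with $\{z_{t-1,a},z_{t,i}\}\in\Elag^-$ and $\{z_{t-1,a},z_{t,j}\}\notin\Elag^-$. By the ``$\Leftarrow$'' direction of Lemma~\ref{lemma:local_time_relation_isomorphism}, which holds thanks to cross-time edge-count minimality, the true nonedge implies $\{\hat z_{t-1,\pi_{\hat{\vz}_{t-1}}(a)},\hat z_{t,\pi_{\hat{\vz}_t}(j)}\}\notin\widehatElag^-$. Lemma~\ref{lemma:time_nonedge_jacobian_condition}, applied with the true edge $\{z_{t-1,a},z_{t,i}\}$ and with $i\in\mathcal C_t(j)$, then gives
\begin{equation*}
\frac{\partial z_{t,i}}{\partial\hat z_{t,\pi_{\hat{\vz}_t}(j)}}
\frac{\partial z_{t-1,a}}{\partial\hat z_{t-1,\pi_{\hat{\vz}_{t-1}}(a)}}=0.
\end{equation*}
The second factor is nonzero by Lemma~\ref{lemma:local_coordinate_correspondence}, so the first factor vanishes.

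\emph{Future case.} Suppose instead there is $b$ with $\{z_{t,i},z_{t+1,b}\}\in\Elag^+$ and $\{z_{t,j},z_{t+1,b}\}\notin\Elag^+$. I apply the time-shifted versions of Lemmas~\ref{lemma:time_variation_transformation}, \ref{lemma:time_nonedge_jacobian_condition} and~\ref{lemma:local_time_relation_isomorphism}. In these, $\vz_t$ plays the role of the past state and $\vz_{t+1}$ the role of the current state, so the candidate block is $\mathcal B^+(j,b)$. The pair $(i,b)$ lies in this block because $i\in\mathcal C_t(j)$, $b\in\mathcal C_{t+1}(b)$, and $\{z_{t,i},z_{t+1,b}\}\in\Elag^+$. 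The same argument then yields
\begin{equation*}
\frac{\partial z_{t,i}}{\partial\hat z_{t,\pi_{\hat{\vz}_t}(j)}}
\frac{\partial z_{t+1,b}}{\partial\hat z_{t+1,\pi_{\hat{\vz}_{t+1}}(b)}}=0,
\end{equation*}
and again the second factor is nonzero.

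The main obstacle I expect is bookkeeping in the future case. The shifted lemmas differentiate with respect to the earlier-time coordinate $\hat z_{t}$ as the ``past'' variable, so I must check that the Jacobian factor they produce is exactly $\partial z_{t,i}/\partial\hat z_{t,\pi_{\hat{\vz}_t}(j)}$ and not a transposed index. I must also check that the future-direction candidate set is indexed by $\mathcal C_t(j)$ on the $t$-side; the definition of $\mathcal B^+(j,b)$ confirms this. The almost-everywhere caveat in the rank condition is already absorbed by the continuity argument inside Lemma~\ref{lemma:time_nonedge_jacobian_condition}, so no further measure-zero argument is needed.
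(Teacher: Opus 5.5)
Your proposal is correct and follows the paper's proof essentially step for step. In both the past and future cases, it obtains $i\in\mathcal C_t(j)$ from the intimate-neighbor relation, turns the true cross-time nonedge into a learned nonedge via the local cross-time isomorphism, applies Lemma~\ref{lemma:time_nonedge_jacobian_condition}, and cancels the factor that is nonzero by Lemma~\ref{lemma:local_coordinate_correspondence}. The differences are cosmetic: you argue directly rather than by contradiction, and you spell out the future-case bookkeeping (the relevant block is $\mathcal B^+(j,b)$, with the candidate set on the $t$-side), which the paper covers only with ``by the same argument.''
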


\begin{proof}
Fix $z_{t,j}\in\Psi_{\mathcal M_z}(z_{t,i})$ and suppose, for contradiction, that
\begin{equation*}
\frac{\partial z_{t,i}}
{\partial\hat z_{t,\pi_{\hat{\vz}_t}(j)}}\ne0.
\end{equation*}
By the definition of $\mathcal C_t(j)$, $z_{t,j}\in\Psi_{\mathcal M_z}(z_{t,i})$ implies $i\in\mathcal C_t(j)$.

First suppose that the past case in \assumptionconditionref{assumption:supp_cross_time_conditions}{appcond:intimate-neighbor-separation} holds for some component $z_{t-1,a}$:
\begin{equation*}
\{z_{t-1,a},z_{t,i}\}\in \Elag^-,
\qquad
\{z_{t-1,a},z_{t,j}\}\notin \Elag^-.
\end{equation*}
By Lemma~\ref{lemma:local_time_relation_isomorphism}, the second relation implies
\begin{equation*}
\{\hat z_{t-1,\pi_{\hat{\vz}_{t-1}}(a)}, 
\hat z_{t,\pi_{\hat{\vz}_t}(j)}\}
\notin\widehatElag^-.
\end{equation*}
Together with
$\{z_{t-1,a},z_{t,i}\}\in\Elag^-$ and $i\in\mathcal C_t(j)$, Lemma~\ref{lemma:time_nonedge_jacobian_condition} then gives
\begin{equation*}
\frac{\partial z_{t,i}}
{\partial\hat z_{t,\pi_{\hat{\vz}_t}(j)}}
\frac{\partial z_{t-1,a}}
{\partial\hat z_{t-1,\pi_{\hat{\vz}_{t-1}}(a)}}=0.
\end{equation*}
The second factor is nonzero by Lemma~\ref{lemma:local_coordinate_correspondence}, so
\begin{equation*}
\frac{\partial z_{t,i}}
{\partial\hat z_{t,\pi_{\hat{\vz}_t}(j)}}=0,
\end{equation*}
contradicting the assumed residual mixing.

If instead the future case in \assumptionconditionref{assumption:supp_cross_time_conditions}{appcond:intimate-neighbor-separation} holds for some $z_{t+1,b}$, then
\begin{equation*}
\{z_{t,i},z_{t+1,b}\}\in \Elag^+,
\qquad
\{z_{t,j},z_{t+1,b}\}\notin \Elag^+.
\end{equation*}
Applying Lemma~\ref{lemma:local_time_relation_isomorphism} and the future-directed form of Lemma~\ref{lemma:time_nonedge_jacobian_condition} gives, by the same argument,
\begin{equation*}
\frac{\partial z_{t,i}}
{\partial\hat z_{t,\pi_{\hat{\vz}_t}(j)}}=0,
\end{equation*}
again contradicting the assumed residual mixing.
Thus, residual mixing is ruled out for every intimate-neighbor pair.
\end{proof}

\subsubsection{From Local Jacobian Conditions to Component-Wise Transformations}

With residual mixing eliminated, we now use the resulting local Jacobian structure to obtain a global component-wise transformation.
Lemma~\ref{lemma:residual_mixing_is_intimate} restricts potentially nonzero off-diagonal Jacobian entries to intimate-neighbor pairs, while Lemma~\ref{lemma:eliminate_residual_mixing_by_time_relation} shows that these entries are also zero.
Indeed, fix $\hat{\vz}_t$ and its local permutation $\pi_{\hat{\vz}_t}$.
If, for some $i\ne j$,
\begin{equation*}
\frac{\partial z_{t,i}}
{\partial\hat z_{t,\pi_{\hat{\vz}_t}(j)}}\ne0,
\end{equation*}
then Lemma~\ref{lemma:residual_mixing_is_intimate} gives $z_{t,j}\in\Psi_{\mathcal M_z}(z_{t,i})$, whereas Lemma~\ref{lemma:eliminate_residual_mixing_by_time_relation} forces the same derivative to be zero.
Therefore,
\begin{equation*}
\frac{\partial z_{t,i}}
{\partial\hat z_{t,\pi_{\hat{\vz}_t}(j)}}=0
\qquad(i\ne j).
\end{equation*}
The local Jacobian is thus diagonal with nonzero diagonal entries after its columns are arranged according to $\pi_{\hat{\vz}_t}$.
The next lemma makes this pointwise permutation independent of the state.

\begin{lemma}[Globalizing the local latent-component permutation]
\label{lemma:global_permutation_from_local_jacobian}
Let $H$ be a $C^2$ diffeomorphism from the range of the learned representation onto $\mathbb R^{d_z}$.
Suppose that for every $\hat{\vz}_t$ in the domain of $H$ there exists a permutation $\pi_{\hat{\vz}_t}$ such that, for every $i$,
\begin{equation*}
\frac{\partial z_{t,i}}
{\partial\hat z_{t,k}}=0
\quad\left(k\ne\pi_{\hat{\vz}_t}(i)\right),
\qquad
\frac{\partial z_{t,i}}
{\partial\hat z_{t,\pi_{\hat{\vz}_t}(i)}}\ne0.
\end{equation*}
Then there exist a permutation $\pi_z$ independent of $\hat{\vz}_t$ and univariate $C^2$ diffeomorphisms $\xi_i$ such that
\begin{equation*}
\hat z_{t,\pi_z(i)}=\xi_i(z_{t,i})
\qquad(\forall i).
\end{equation*}
\end{lemma}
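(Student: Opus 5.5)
The plan is to show that the pointwise permutation $\pi_{\hat{\vz}_t}$ is locally constant, globalize it using connectedness, and then read off the component-wise maps. For locally constant, note that at each point the Jacobian $J_H(\hat{\vz}_t)$ has exactly one nonzero entry in each row, located at column $\pi_{\hat{\vz}_t}(i)$. Because $H$ is $C^2$, each entry $\partial z_{t,i}/\partial\hat z_{t,\pi_{\hat{\vz}_t}(i)}$ that is nonzero at $\hat{\vz}_t$ stays nonzero on an open neighborhood $U$. At any $\hat{\vz}'\in U$, row $i$ still has only one nonzero entry by hypothesis. That entry must therefore be at column $\pi_{\hat{\vz}_t}(i)$, so $\pi_{\hat{\vz}'}=\pi_{\hat{\vz}_t}$ on $U$. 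Hence the map $\hat{\vz}_t\mapsto\pi_{\hat{\vz}_t}$, taking values in the finite set $\mathfrak S_{d_z}$, is locally constant.

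To globalize, I would use connectedness of the domain of $H$. Since $H$ is a diffeomorphism onto $\mathbb R^{d_z}$, its domain is homeomorphic to $\mathbb R^{d_z}$ and hence connected. A locally constant map on a connected set is constant, which gives a single permutation $\pi_z$. The same $\pi_z$ works at every time $t$, because $H$ is time-independent and the argument uses only $H$.

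Next, I would pass to the inverse and construct the univariate maps. With $\pi_z$ fixed, $z_{t,i}$ has vanishing partial derivatives in every $\hat z_{t,k}$ with $k\neq\pi_z(i)$ throughout the domain. I would then argue that $z_{t,i}=h_i(\hat z_{t,\pi_z(i)})$ for some univariate $C^2$ function $h_i$ with $h_i'\neq0$. The subtle point is that vanishing partials only give dependence on one coordinate when the domain's slices in the other coordinates are connected. This is the step I expect to be the main obstacle.

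To handle it, I would work through the image rather than the domain. The inverse $H^{-1}$ is a $C^2$ diffeomorphism of $\mathbb R^{d_z}$ onto the domain. Its Jacobian is the inverse of a permuted diagonal matrix, so it is also permuted diagonal: $\partial\hat z_{t,\pi_z(i)}/\partial z_{t,k}=0$ for $k\ne i$. The domain of $H^{-1}$ is all of $\mathbb R^{d_z}$, which is convex, so $\hat z_{t,\pi_z(i)}=\xi_i(z_{t,i})$ follows by integrating along coordinate segments. Here $\xi_i$ is $C^2$ and satisfies $\xi_i'\neq0$ everywhere, so it is strictly monotone and hence a diffeomorphism onto its image, an open interval. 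Finally, I would confirm that $H^{-1}$ is the product map $(\xi_i)$ up to the permutation, which is consistent with invertibility, and this completes the statement.
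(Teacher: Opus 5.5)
Your proposal is correct and follows the paper's proof essentially step for step. It shows local constancy of the pointwise permutation via continuity of the selected Jacobian entries, globalizes it by connectedness of the domain of $H$, and reads off the univariate maps from the permuted-diagonal Jacobian of $H^{-1}$ on $\mathbb R^{d_z}$. Your remark that this last step needs convexity (or connected coordinate slices), not mere connectedness, is a slightly sharper justification than the paper's appeal to connectedness; it changes nothing here because the domain of $H^{-1}$ is all of $\mathbb R^{d_z}$.
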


\begin{proof}
For each $\hat{\vz}_t$ in the domain of $H$, the assumed derivative pattern uniquely determines $\pi_{\hat{\vz}_t}$:
\begin{equation*}
\frac{\partial z_{t,i}}{\partial\hat z_{t,k}}=0
\quad\text{if }k\ne\pi_{\hat{\vz}_t}(i),
\qquad
\frac{\partial z_{t,i}}
{\partial\hat z_{t,\pi_{\hat{\vz}_t}(i)}}\ne0.
\end{equation*}
By continuity of $J_H$, the $d_z$ selected nonzero entries at any fixed point remain nonzero in a common neighborhood.
Since the assumption allows only one nonzero entry in each row at every point, the same permutation holds throughout this neighborhood.
Thus, $\pi_{\hat{\vz}_t}$ is locally constant.

The domain of $H$ is connected because $H$ is a diffeomorphism from that domain onto the connected space $\mathbb R^{d_z}$.
Therefore, the permutation is constant over the domain; denote it by $\pi_z$.

Differentiating the identity $H(H^{-1}(\vz))=\vz$ gives
\begin{equation*}
J_{H^{-1}}(\vz_t)
=
J_H\!\left(H^{-1}(\vz_t)\right)^{-1}.
\end{equation*}
Since $J_H$ has exactly one nonzero entry in each row and column under the fixed permutation $\pi_z$, its inverse has the corresponding pattern.
Consequently, writing $(H^{-1})_k$ for the $k$-th component of $H^{-1}$, for every $i$ and every $\vz_t$,
\begin{equation*}
\frac{\partial (H^{-1})_{\pi_z(i)}(\vz_t)}{\partial z_{t,j}}=0
\quad(j\ne i),
\qquad
\frac{\partial (H^{-1})_{\pi_z(i)}(\vz_t)}{\partial z_{t,i}}\ne0.
\end{equation*}
Since the domain of $H^{-1}$ is the connected set $\mathbb R^{d_z}$, the zero cross-component derivatives imply that $(H^{-1})_{\pi_z(i)}$ depends only on $z_i$.
Denote the resulting univariate function by $\xi_i$.
Because $H^{-1}$ is bijective and has this component-wise form, each $\xi_i$ is bijective between the corresponding coordinate ranges.
Since $H^{-1}$ is $C^2$ and the derivative of each $\xi_i$ is everywhere nonzero, each $\xi_i$ is a univariate $C^2$ diffeomorphism.
Thus,
\begin{equation*}
\hat z_{t,\pi_z(i)}=\xi_i(z_{t,i})
\qquad(i=1,\ldots,d_z).
\end{equation*}
\end{proof}

The preceding results yield the following latent-component identification theorem.
The first case shows that no cross-time conditions are needed when all intimate-neighbor sets are empty, whereas the second case gives the general result used in the main theorem.

\begin{theorem}[Identification of latent components]
\label{theorem:element_wise_identifiability}
Consider a learned representation related to the true representation by a $C^2$ diffeomorphism $H$, independent of time and the value of the auxiliary variable, from the range of the learned representation onto $\mathbb R^{d_z}$, with $\vz_t=H(\hat{\vz}_t)$, and suppose Assumption~\ref{assumption:supp_instantaneous_identification} holds.
Suppose further that either
\begin{enumerate}
\item $\Psi_{\mathcal M_z}(z_{t,i})=\emptyset$ for every $i$; or
\item Assumption~\ref{assumption:supp_cross_time_conditions} holds.
\end{enumerate}
Then there exists a permutation $\pi_z$, independent of time and the values of the latent variables, under which $\mathcal M_z$ and $\mathcal M_{\hat z}$ are isomorphic via the correspondence $z_{t,i}\mapsto\hat z_{t,\pi_z(i)}$.
Moreover, there exist univariate $C^2$ diffeomorphisms $\xi_1,\ldots,\xi_{d_z}$ such that
\begin{equation*}
\hat z_{t,\pi_z(i)}=\xi_i(z_{t,i})
\qquad(i=1,\ldots,d_z).
\end{equation*}
\end{theorem}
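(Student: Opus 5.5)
The proof assembles the lemmas already established; it has no new analytic content. We first fix an arbitrary point $\hat{\vz}_t$ in the domain of $H$ and take the local permutation $\pi_{\hat{\vz}_t}$ from Lemma~\ref{lemma:local_coordinate_correspondence}. Under Assumption~\ref{assumption:supp_instantaneous_identification}, Theorem~\ref{theorem:local_markov_network_isomorphism} shows that $\pi_{\hat{\vz}_t}$ maps $\mathcal M_z$ isomorphically onto $\mathcal M_{\hat z}$. Lemma~\ref{lemma:residual_mixing_is_intimate} then shows that any nonzero off-diagonal entry $\partial z_{t,i}/\partial\hat z_{t,\pi_{\hat{\vz}_t}(j)}$ with $i\neq j$ forces $z_{t,j}\in\Psi_{\mathcal M_z}(z_{t,i})$.

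We then split into the two cases of the statement. In case 1, every intimate-neighbor set is empty, so the argument of Section~\ref{subsubsec:element_wise_identifiability_empty_case} applies directly: all off-diagonal entries vanish. In case 2, Assumption~\ref{assumption:supp_cross_time_conditions} holds. We invoke Lemma~\ref{lemma:eliminate_residual_mixing_by_time_relation}, which relies on Lemmas~\ref{lemma:time_nonedge_jacobian_condition} and~\ref{lemma:local_time_relation_isomorphism}, to kill the entries associated with intimate neighbors. Combining this with Lemma~\ref{lemma:residual_mixing_is_intimate} again shows that all off-diagonal entries vanish. In both cases the following holds at every point: after columns are ordered by $\pi_{\hat{\vz}_t}$, $J_H(\hat{\vz}_t)$ is diagonal with nonzero diagonal entries, the latter by Lemma~\ref{lemma:local_coordinate_correspondence}. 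This is exactly the hypothesis of Lemma~\ref{lemma:global_permutation_from_local_jacobian}, which yields a point-independent permutation $\pi_z$ and univariate $C^2$ diffeomorphisms $\xi_i$ with $\hat z_{t,\pi_z(i)}=\xi_i(z_{t,i})$.

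It remains to show that the permutation is independent of time and that the isomorphism holds under $\pi_z$ itself. Time independence follows because $H$ is the same map at every $t$, so the argument above produces the same $\pi_z$ at each time step. For the isomorphism, we note that the pointwise permutation $\pi_{\hat{\vz}_t}$ is uniquely determined by the diagonal support pattern, so $\pi_{\hat{\vz}_t}=\pi_z$ at every point. Theorem~\ref{theorem:local_markov_network_isomorphism}, evaluated at any point, then gives the isomorphism $z_{t,i}\mapsto\hat z_{t,\pi_z(i)}$.

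The step needing the most care is case 2. Lemma~\ref{lemma:eliminate_residual_mixing_by_time_relation} uses the local permutations at two different time points, $\hat{\vz}_t$ together with $\hat{\vz}_{t-1}$ or $\hat{\vz}_{t+1}$. It also uses the cross-time isomorphism of Lemma~\ref{lemma:local_time_relation_isomorphism}, whose proof handles the almost-everywhere rank condition by a continuity argument. We must check that these lemmas are applied with the same local permutation chosen at $\hat{\vz}_t$. This works because every such $\hat{\vz}_t$ can be paired with arbitrary neighboring-time states in the domain of $H$, and the local permutation at each point is chosen once via Lemma~\ref{lemma:local_coordinate_correspondence}.
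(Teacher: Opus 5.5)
Your proposal is correct and follows the paper's proof essentially step for step. You eliminate off-diagonal Jacobian entries case by case via Lemma~\ref{lemma:residual_mixing_is_intimate}, adding Lemma~\ref{lemma:eliminate_residual_mixing_by_time_relation} in case 2, globalize with Lemma~\ref{lemma:global_permutation_from_local_jacobian}, substitute $\pi_{\hat{\vz}_t}=\pi_z$ into Theorem~\ref{theorem:local_markov_network_isomorphism}, and use that $H$ is shared across time; your remarks on the uniqueness of the pointwise permutation and on the cross-time local permutations only make explicit what the paper leaves implicit.
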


\begin{proof}
In the first case, the argument in Section~\ref{subsubsec:element_wise_identifiability_empty_case} shows that, at every point, arranging the local Jacobian columns according to the local permutation produces a diagonal matrix with nonzero diagonal entries.
In the second case, Lemma~\ref{lemma:residual_mixing_is_intimate} restricts all off-diagonal mixing to intimate-neighbor pairs, and Lemma~\ref{lemma:eliminate_residual_mixing_by_time_relation} eliminates this remaining mixing.
Thus, in either case, the local Jacobian becomes diagonal with nonzero diagonal entries at every point after its columns are arranged according to some local permutation $\pi_{\hat{\vz}_t}$.

Lemma~\ref{lemma:global_permutation_from_local_jacobian} gives a permutation $\pi_z$ independent of $\hat{\vz}_t$ and univariate $C^2$ diffeomorphisms $\xi_i$ such that
\begin{equation*}
\hat z_{t,\pi_z(i)}=\xi_i(z_{t,i})
\qquad(\forall i).
\end{equation*}

Substituting $\pi_{\hat{\vz}_t}=\pi_z$ into the local isomorphism of the instantaneous Markov networks established in Theorem~\ref{theorem:local_markov_network_isomorphism} shows that $\mathcal M_z$ and $\mathcal M_{\hat z}$ are isomorphic under the same fixed permutation.
Finally, because the coordinate transformation $H$ is common to all time steps, $\pi_z$ and the functions $\xi_i$ are also independent of time.
\end{proof}

\subsection{Transition-Noise Identification by IIA}
\label{app:transition_noise_identification}

This appendix adapts the transition-noise identification argument of independent innovation analysis (IIA)~\citep{Morioka2021a} to the encoder and learned inverse transition function used in our framework.
Here, transition noise plays the role of the innovations in IIA.
In the sufficient setting proved in this appendix, we take $\mathcal X=\mathbb R^{d_z}$, as in the IIA argument.
We consider two contrastive-learning approaches used in IIA~\citep{Morioka2021a}: time-contrastive learning (TCL) and generalized contrastive learning (GCL).
TCL predicts a discrete auxiliary-variable value from the estimated transition noise and a learned feature of the past observations, whereas GCL distinguishes matched tuples from tuples in which only the auxiliary-variable value is permuted.
Let $\phi$ be the encoder and define the maps $\rho_{\mathrm{aug}},\hat\rho_{\mathrm{aug}} :\mathbb R^{2d_z}\to\mathbb R^{2d_z}$ by
\begin{equation*}
\rho_{\mathrm{aug}}(\vx_t,\vx_{t-1})
:=(\veps_t,\vx_{t-1}),
\qquad
\hat\rho_{\mathrm{aug}}(\vx_t,\vx_{t-1})
:=\left(\hat r\bigl(\phi(\vx_t),\phi(\vx_{t-1})\bigr),\vx_{t-1}\right).
\end{equation*}
For the theoretical setting with $L=1$, the history feature $\vh_t=\eta(\phi(\vx_{t-1}))$ is a deterministic function of $\vx_{t-1}$, so we write the regression functions below in terms of $\vx_{t-1}$.
Accordingly, in the analysis below, we regard the implemented regression function $T(\hat{\veps}_t,\vh_t,u_t)$ as a function of $(\hat{\veps}_t,\vx_{t-1},u_t)$.
For either TCL or GCL, the regression function decomposes over the learned transition-noise components as
\begin{equation}
\label{eq:iia_structured_score}
T(\hat{\veps}_t,\vx_{t-1},u_t)
=
\sum_{i=1}^{d_z}\sum_{k=1}^{K}
\psi_{ik}(\hat\epsilon_{t,i})\mu_{ik}(u_t)
+\varphi(\vx_{t-1},u_t)
+B(\hat{\veps}_t,\vx_{t-1}).
\end{equation}
For TCL, $\mu_{ik}(u)=w_{uik}$, and $b_u$ is absorbed into $\varphi$.
The auxiliary-variable-independent term $B$ can be omitted because it cancels when the class logits are compared.
Here, $\psi_{ik}$ is a scalar-valued function of the estimated transition-noise component $\hat\epsilon_{t,i}$, $\mu_{ik}$ modulates its coefficient with the auxiliary variable, and the remaining terms account for dependence involving the past observation and terms independent of the auxiliary variable.
The corresponding finite-sample TCL and GCL objectives are given in Appendix~\ref{app:learning-framework-details}.
A regression function is population-optimal when it minimizes the expected multiclass cross-entropy for TCL or the expected binary cross-entropy for GCL under the corresponding population distributions.
Let $u_t^{(0)}$ denote a reference auxiliary-variable value and let $u_t^{(n)}$ denote the $n$-th comparison value.
For both TCL and GCL, comparing the reference value with each comparison value gives
\begin{equation}
\label{eq:iia_density_ratio}
T^\star(\hat{\veps}_t,\vx_{t-1},u_t^{(n)})
-T^\star(\hat{\veps}_t,\vx_{t-1},u_t^{(0)})
=
\log\frac{p(\hat{\veps}_t,\vx_{t-1}\mid u_t^{(n)})}
{p(\hat{\veps}_t,\vx_{t-1}\mid u_t^{(0)})}+c_n.
\end{equation}
\begin{assumption}[Conditions for transition-noise identification by IIA]
\label{assumption:supp_iia_transition_noise_identification}
The following conditions hold.

\appendixcondition{appcond:iia-regularity}{Invertibility and regularity.}
The learned inverse transition function $\hat r$ is $C^1$, and $\rho_{\mathrm{aug}}$, $\hat\rho_{\mathrm{aug}}$, and the observation function $g$ are $C^2$ diffeomorphisms on their respective domains.

\appendixcondition{appcond:iia-exponential-family}{Exponential-family modulation.}
There exist $K\ge2$, base measures $Q_i$, normalizing terms $Z_i$, twice continuously differentiable sufficient statistics $s_{ik}$, and natural parameters $\lambda_{ik}$ depending on the auxiliary variable such that
\begin{equation*}
p(\veps_t\mid u_t)
=
\prod_{i=1}^{d_z}
\frac{Q_i(\epsilon_{t,i})}{Z_i(u_t)}
\exp\left[
\sum_{k=1}^{K}s_{ik}(\epsilon_{t,i})\lambda_{ik}(u_t)
\right].
\end{equation*}
For each $i$, the sufficient statistics are nonredundant up to a constant: if $\sum_{k=1}^{K}c_k s_{ik}(e)$ is constant in $e$, then $c_1=\cdots=c_K=0$.
Let $\vlambda(u)$ collect all $d_zK$ natural parameters.
For an arbitrarily selected reference value $u_t^{(0)}$ and distinct comparison values $u_t^{(n_1)},\ldots,u_t^{(n_{d_zK})}$ satisfying $1\le n_1<\cdots<n_{d_zK}\le N_{\mathrm{aux}}$, the matrix
\begin{equation*}
\mLambda
:=
\left[
\vlambda(u_t^{(n_1)})-\vlambda(u_t^{(0)}),
\ldots,
\vlambda(u_t^{(n_{d_zK})})-\vlambda(u_t^{(0)})
\right]
\end{equation*}
has rank $d_zK$.
For a finite-valued auxiliary variable, this condition requires at least $d_zK+1$ distinct values, including the reference.

\appendixcondition{appcond:iia-regression}{Population regression and learned-feature rank.}
The learned regression function attains a population optimum satisfying~\eqref{eq:iia_density_ratio}.
For the feature functions $\psi_{ik}$ in this solution, define
\begin{equation*}
\mD_{\psi,i}(e)
:=
\left[
\frac{\partial\psi_{ik}(e)}{\partial e},
\frac{\partial^2\psi_{ik}(e)}{\partial e^2}
\right]_{k=1}^{K}
\in\mathbb R^{K\times2},
\end{equation*}
and suppose that $\operatorname{rank}\mD_{\psi,i}(e)=2$ for every $i$ and every $e$.
This condition plays the role of the learned-feature nonredundancy condition in the IIA argument of \citet{Morioka2021a}; we state the required derivative-rank property directly for simplicity.
\end{assumption}

Under these conditions, the transition-noise components are identified up to a permutation and component-wise $C^2$ diffeomorphisms.

\begin{theorem}[Transition-noise identification]
\label{theorem:supp_iia_transition_noise_identification}
Under Assumption~\ref{assumption:supp_iia_transition_noise_identification}, there exist a permutation $\pi_\epsilon$ and component-wise $C^2$ diffeomorphisms $q_i$, all independent of time and the auxiliary-variable value, such that
\begin{equation*}
\hat\epsilon_{t,\pi_\epsilon(i)}=q_i(\epsilon_{t,i})
\qquad(i=1,\ldots,d_z).
\end{equation*}
\end{theorem}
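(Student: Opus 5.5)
The plan is to follow the IIA argument of \citet{Morioka2021a}, with the pair $(\veps_t,\vx_{t-1})$ playing the role of the innovation together with the past observation. Define the composite map $\gamma:=\hat\rho_{\mathrm{aug}}\circ\rho_{\mathrm{aug}}^{-1}$. It is a $C^2$ diffeomorphism of $\mathbb R^{2d_z}$ by \assumptionconditionref{assumption:supp_iia_transition_noise_identification}{appcond:iia-regularity}, and its second block is the identity in $\vx_{t-1}$. Thus $\hat{\veps}_t=\bar\gamma(\veps_t,\vx_{t-1})$ for some $\bar\gamma$ that, for each fixed $\vx_{t-1}$, is a diffeomorphism in $\veps_t$.

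The first step is to change variables in the right-hand side of~\eqref{eq:iia_density_ratio}. The Jacobian of $\gamma$ is block triangular, so its determinant does not depend on $u_t$ and cancels in the ratio. By~\eqref{eq:transition_noise_assumptions}, $\veps_t\perp\vx_{t-1}\mid u_t$. Hence
\begin{equation*}
\log\frac{p(\hat{\veps}_t,\vx_{t-1}\mid u^{(n)})}{p(\hat{\veps}_t,\vx_{t-1}\mid u^{(0)})}
=\sum_{i,k}s_{ik}(\epsilon_{t,i})\bigl(\lambda_{ik}(u^{(n)})-\lambda_{ik}(u^{(0)})\bigr)
+\log\frac{p(\vx_{t-1}\mid u^{(n)})}{p(\vx_{t-1}\mid u^{(0)})}-\log\frac{Z(u^{(n)})}{Z(u^{(0)})}.
\end{equation*}
Next, substitute the structured form~\eqref{eq:iia_structured_score} into the left-hand side. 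The $B$ terms cancel, and the $\varphi$ terms are functions of $\vx_{t-1}$ alone. Equating the two sides and stacking over the $d_zK$ comparison values gives
\begin{equation*}
\mLambda^\top\vs(\veps_t)=\mM^\top\vpsi(\hat{\veps}_t)+\vy(\vx_{t-1}),
\end{equation*}
where $\mM$ collects the differences of $\mu_{ik}$. Because $\mLambda$ is invertible, this becomes
\begin{equation*}
\vs(\veps_t)=\mA\,\vpsi(\hat{\veps}_t)+\vy'(\vx_{t-1}).
\end{equation*}
A symmetric argument, exchanging the roles of the true and learned variables, should show that $\mA$ is invertible.

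The main step, and the expected obstacle, is to extract the component-wise structure. Fix $\vx_{t-1}$ and regard $\veps_t$ as a function of $\hat{\veps}_t$. Differentiate the relation once and twice with respect to $\hat\epsilon_{t,j}$ and $\hat\epsilon_{t,j'}$ with $j\ne j'$. Because $\vpsi$ is component-wise, the mixed second derivative of the right-hand side vanishes. On the left, the chain rule gives terms $s_{ik}''\,\partial_j\epsilon_i\,\partial_{j'}\epsilon_i+s_{ik}'\,\partial^2_{jj'}\epsilon_i$.

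The same derivatives can be taken in the reverse direction, using $\vpsi(\hat{\veps})=\mA^{-1}(\vs(\veps)-\vy')$. Here the rank-2 condition on $\mD_{\psi,i}$ (\assumptionconditionref{assumption:supp_iia_transition_noise_identification}{appcond:iia-regression}) and the nonredundancy of $s_{ik}$ with $K\ge2$ are the tools. The aim is to show that each $\hat\epsilon_{t,j}$ has a nonzero derivative with respect to at most one $\epsilon_{t,i}$; this follows the linear-independence step in the proof of Theorem~1 of \citet{Morioka2021a}. I would try first to reproduce the key lemma there: the products $\partial_j\hat\epsilon\,\partial_{j'}\hat\epsilon$ must vanish for $j\ne j'$, by viewing the equations as a full-rank linear system in the pairs $(\psi',\psi'')$.

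Once this holds, the Jacobian $\partial\hat{\veps}_t/\partial\veps_t$ has at most one nonzero entry per row. It is invertible, so it is a generalized permutation matrix at every point. The globalization argument of Lemma~\ref{lemma:global_permutation_from_local_jacobian} then applies: continuity and connectedness give a constant permutation $\pi_\epsilon$. It remains to rule out dependence on $\vx_{t-1}$ and time. Because $\hat{\veps}_t$ is a noise estimate independent of $\vx_{t-1}$ given $u_t$, any residual dependence on $\vx_{t-1}$ would violate the identified relation $\vs(\veps_t)=\mA\vpsi(\hat{\veps}_t)+\vy'$, since $\vs$ is nonredundant and $\vpsi$ depends on $\vx_{t-1}$ only through $\hat{\veps}$. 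This step yields $\hat\epsilon_{t,\pi_\epsilon(i)}=q_i(\epsilon_{t,i})$ with $C^2$ diffeomorphisms $q_i$ shared across $t$ and $u_t$.
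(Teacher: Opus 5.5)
Your overall route is the paper's: the density-ratio identity in augmented coordinates, stacking over the $d_zK$ comparison values, then product-zero conditions obtained by differentiating in the true variables with the rank-2 condition on $\mD_{\psi,i}$, and finally the block-triangular Jacobian and globalization. Your first attempt, differentiating in $\hat\epsilon_{t,j},\hat\epsilon_{t,j'}$, would need a pointwise rank condition on $(s_{ik}',s_{ik}'')$ that is not assumed, but you correctly switch to the reverse direction. Two steps, however, are not established.

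\textbf{Invertibility of $\mA$.} A ``symmetric argument'' exchanging true and learned variables is not available. Nonredundancy is assumed only for the true $s_{ik}$. For the learned features you only have pointwise rank $2$ of $\mD_{\psi,i}$, which for $K>2$ does not even give $K$ independent features. Even with independent learned features, differentiating in $\hat\epsilon_{t,i}$ only relates $\mM$ to derivatives of $\vs(\veps_t(\hat\veps_t))$, whose rank is not known a priori. The invertibility has to come from the true side. Nonredundancy yields points $e_i^{(1)},\ldots,e_i^{(K)}$ at which the vectors $(\partial s_{ik}/\partial e)_{k=1}^{K}$ are linearly independent. Differentiating $\vs(\veps_t)=\mA\vpsi(\hat\veps_t)+\vy'(\vx_{t-1})$ in $\epsilon_{t,i}$ at these points gives a nonsingular block-diagonal matrix equal to $\mA$ times a square matrix, so $\mA$ is nonsingular. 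This is not cosmetic: it is exactly what lets you write $\vpsi(\hat\veps_t)=\mA^{-1}(\vs(\veps_t)-\vy')$ and transfer the vanishing mixed derivatives of the component-wise $\vs$ to $\vpsi(\hat\veps_t)$ in your main step.

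\textbf{Elimination of $\vx_{t-1}$.} This step is circular. You cannot use that $\hat\veps_t$ is independent of $\vx_{t-1}$ given $u_t$. Nothing in the learned model forces this; the term $B(\hat\veps_t,\vx_{t-1})$ even permits arbitrary joint dependence. That independence is a consequence of the theorem, not an input. Nor does nonredundancy of $\vs$, as you invoke it, explain why, for example, $\hat\epsilon_{t,\pi_\epsilon(i)}=q_i(\epsilon_{t,i})+c(\vx_{t-1})$ is impossible. What excludes such dependence is the curvature of the learned features.

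The fix uses the same tool as your main step. Differentiate $\vpsi\bigl(\hat\veps_t(\veps_t,\vx_{t-1})\bigr)=\mA^{-1}\bigl(\vs(\veps_t)-\vy'(\vx_{t-1})\bigr)$ once in $\epsilon_{t,j}$ and once in $x_{t-1,a}$. The right side's mixed derivative vanishes, so the full column rank of $\mD_{\psi,i}$ gives
\[
\frac{\partial\hat\epsilon_{t,i}}{\partial\epsilon_{t,j}}\,
\frac{\partial\hat\epsilon_{t,i}}{\partial x_{t-1,a}}=0 .
\]
Choosing $j$ as the unique index with $\partial\hat\epsilon_{t,i}/\partial\epsilon_{t,j}\neq0$, which the generalized-permutation structure provides, yields $\partial\hat\epsilon_{t,i}/\partial x_{t-1,a}=0$. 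Combined with the within-noise zeros on the connected domain $\mathbb R^{2d_z}$, this gives $\hat\epsilon_{t,\pi_\epsilon(i)}=q_i(\epsilon_{t,i})$. Bijectivity of $\veps_t\mapsto\hat\veps_t$ for fixed $\vx_{t-1}$ then makes each $q_i$ a $C^2$ diffeomorphism.
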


\begin{proof}
The assumptions $\veps_t\perp \vz_{t-1}\mid u_t$ from the problem setup and $\vx_{t-1}=g(\vz_{t-1})$ imply $\veps_t\perp \vx_{t-1}\mid u_t$.
By \assumptionconditionref{assumption:supp_iia_transition_noise_identification}{appcond:iia-exponential-family}, the density ratio between a reference value and a comparison value in the true augmented coordinates $(\veps_t,\vx_{t-1})$ decomposes as
\begin{equation}
\label{eq:iia_true_density_ratio}
\log\frac{p(\veps_t,\vx_{t-1}\mid u_t^{(n)})}
{p(\veps_t,\vx_{t-1}\mid u_t^{(0)})}
=
\sum_{i=1}^{d_z}\sum_{k=1}^{K}
s_{ik}(\epsilon_{t,i})
\left[\lambda_{ik}(u_t^{(n)})-\lambda_{ik}(u_t^{(0)})\right]
+P_n(\vx_{t-1}).
\end{equation}
Here, $P_n(\vx_{t-1})$ collects the density-ratio term for the past observations and the normalizing terms of the exponential-family distributions.

The transformation $\hat\rho_{\mathrm{aug}}\circ\rho_{\mathrm{aug}}^{-1}$ between the true and learned augmented coordinates is a diffeomorphism that does not depend on the auxiliary variable.
Consequently, the Jacobian terms from the change of variables cancel in the density ratio between the reference and comparison values, and at corresponding points,
\begin{equation}
\label{eq:iia_coordinate_density_ratio}
\frac{p(\hat{\veps}_t,\vx_{t-1}\mid u_t^{(n)})}
{p(\hat{\veps}_t,\vx_{t-1}\mid u_t^{(0)})}
=
\frac{p(\veps_t,\vx_{t-1}\mid u_t^{(n)})}
{p(\veps_t,\vx_{t-1}\mid u_t^{(0)})}.
\end{equation}

Let $\vy:=(\veps_t,\vx_{t-1})$, and write the learned transition noise expressed in the true augmented coordinates as $\hat{\veps}_t(\vy)$.
Stack the learned features, sufficient statistics, and learned coefficients into $\vpsi(\hat{\veps}_t)$, $\vs(\veps_t)$, and $\vmu(u_t)$, respectively, and define
\begin{equation*}
\mM
:=
\left[
\vmu(u_t^{(n_1)})-\vmu(u_t^{(0)}),
\ldots,
\vmu(u_t^{(n_{d_zK})})-\vmu(u_t^{(0)})
\right].
\end{equation*}
Taking the difference in~\eqref{eq:iia_density_ratio} eliminates the term $B$ because it does not depend on the auxiliary variable.
Using~\eqref{eq:iia_true_density_ratio} and~\eqref{eq:iia_coordinate_density_ratio} and stacking the resulting equations over the comparison values gives
\begin{equation}
\label{eq:iia_stacked_density_ratio}
\mM^\top
\vpsi\bigl(\hat{\veps}_t(\vy)\bigr)
+\Phi(\vx_{t-1})
=
\mLambda^\top \vs(\veps_t)
+P(\vx_{t-1})+c,
\end{equation}
where $\Phi$ and $P$ depend only on the past observation and $c$ stacks the constant terms.

By nonredundancy of the sufficient statistics, the space spanned by their derivative vectors has dimension $K$.
Hence, there exist evaluation points $e_i^{(1)},\ldots,e_i^{(K)}$ at which these derivative vectors are linearly independent.
Define the matrices obtained by evaluating the derivatives of the true sufficient statistics and learned features at these points as
\begin{equation*}
\begin{aligned}
\mA_i
&:=
\left[
\frac{\partial s_{ik}(e_i^{(m)})}{\partial e}
\right]_{k,m=1}^{K},
&\qquad
\mA&:=\operatorname{diag}(\mA_1,\ldots,\mA_{d_z}),
\\
\widetilde{\mA}
&:=
\left[
\left.
\frac{\partial}{\partial\epsilon_{t,i}}
\vpsi\bigl(\hat{\veps}_t(\vy)\bigr)
\right|_{\epsilon_{t,i}=e_i^{(m)}}
\right]_{(i,m)}.
\end{aligned}
\end{equation*}
Differentiating~\eqref{eq:iia_stacked_density_ratio} with respect to each $\epsilon_{t,i}$ eliminates the terms depending only on $\vx_{t-1}$ and the constants.
Evaluation at the selected points gives $\mM^\top\widetilde{\mA}=\mLambda^\top\mA$.
Since $\mLambda$ and $\mA$ are nonsingular, the equality above implies that $\mM$ is also nonsingular.

For distinct $j_1$ and $j_2$, take the mixed derivative of~\eqref{eq:iia_stacked_density_ratio} with respect to $\epsilon_{t,j_1}$ and $\epsilon_{t,j_2}$.
Since the right-hand side is a sum of univariate functions of the individual transition-noise components, its mixed derivative vanishes; the nonsingularity of $\mM$ therefore gives $\partial^2\vpsi(\hat{\veps}_t(\vy))/\partial\epsilon_{t,j_1}\partial\epsilon_{t,j_2}=0$.
Expanding the $K$ entries corresponding to $\hat\epsilon_{t,i}$ by the chain rule gives
\begin{align}
\mD_{\psi,i}(\hat\epsilon_{t,i})
\begin{pmatrix}
\dfrac{\partial^2\hat\epsilon_{t,i}}
{\partial\epsilon_{t,j_1}\partial\epsilon_{t,j_2}}
\\[2pt]
\dfrac{\partial\hat\epsilon_{t,i}}{\partial\epsilon_{t,j_1}}
\dfrac{\partial\hat\epsilon_{t,i}}{\partial\epsilon_{t,j_2}}
\end{pmatrix}
&=0,
\notag\\
\frac{\partial\hat\epsilon_{t,i}}{\partial\epsilon_{t,j_1}}
\frac{\partial\hat\epsilon_{t,i}}{\partial\epsilon_{t,j_2}}
&=0,
\label{eq:iia_noise_product_zero}
\end{align}
where the second relation follows from the full column rank of $\mD_{\psi,i}$.
Similarly, taking the mixed derivative of~\eqref{eq:iia_stacked_density_ratio} with respect to $\epsilon_{t,j}$ and $x_{t-1,a}$ and using the full column rank of $\mD_{\psi,i}$ gives
\begin{equation}
\frac{\partial\hat\epsilon_{t,i}}{\partial\epsilon_{t,j}}
\frac{\partial\hat\epsilon_{t,i}}{\partial x_{t-1,a}}
=0.
\label{eq:iia_history_product_zero}
\end{equation}

The composition $\hat\rho_{\mathrm{aug}}\circ\rho_{\mathrm{aug}}^{-1}$ has the form $(\veps_t,\vx_{t-1})\mapsto(\hat{\veps}_t(\veps_t,\vx_{t-1}),\vx_{t-1})$, and its Jacobian is block triangular:
\begin{equation}
J_{\hat\rho_{\mathrm{aug}}\circ\rho_{\mathrm{aug}}^{-1}}
=
\begin{pmatrix}
\dfrac{\partial\hat{\veps}_t}{\partial\veps_t}
&
\dfrac{\partial\hat{\veps}_t}{\partial\vx_{t-1}}
\\
0&I
\end{pmatrix}.
\label{eq:iia_augmented_jacobian}
\end{equation}
Since this map is a diffeomorphism, its Jacobian is nonsingular, and~\eqref{eq:iia_augmented_jacobian} therefore implies that $\partial\hat{\veps}_t/\partial\veps_t$ is nonsingular.
\Eqref{eq:iia_noise_product_zero} implies that each row of this Jacobian has at most one nonzero entry, while nonsingularity implies exactly one nonzero entry in each row and column.
Thus, its support defines a permutation of the transition-noise components at every point.
\Eqref{eq:iia_history_product_zero} then implies that each learned transition-noise component does not depend on the past observation.
Continuity makes this permutation locally constant, and connectedness of $\mathbb R^{2d_z}$ therefore fixes it globally to a single permutation $\pi_\epsilon$.
For each $i$, the fixed permutation together with the product-zero conditions implies that $\hat\epsilon_{t,\pi_\epsilon(i)}$ has zero derivatives with respect to all coordinates other than $\epsilon_{t,i}$.
Since the augmented-coordinate domain is $\mathbb R^{2d_z}$, it follows that $\hat\epsilon_{t,\pi_\epsilon(i)}$ is a function only of $\epsilon_{t,i}$, yielding
\begin{equation}
\label{eq:iia_innovation_correspondence}
\hat\epsilon_{t,\pi_\epsilon(i)}=q_i(\epsilon_{t,i})
\qquad(i=1,\ldots,d_z).
\end{equation}
Since $\hat\rho_{\mathrm{aug}}\circ\rho_{\mathrm{aug}}^{-1}$ is a diffeomorphism that preserves the past observation, the map $\veps_t\mapsto\hat{\veps}_t$ is bijective for each fixed $\vx_{t-1}$, and the corresponding derivatives are nonzero.
Thus, every $q_i$ is a $C^2$ diffeomorphism.
\end{proof}

\subsection{Causal-Graph Identification from the Jacobian of the Inverse Transition Function}
\label{app:causal_graph_identification}

This section combines transition-noise identification from Theorem~\ref{theorem:supp_iia_transition_noise_identification} with latent-component identification from Theorem~\ref{theorem:element_wise_identifiability} to identify the true instantaneous and lagged causal edges from the Jacobian support of the learned inverse transition function $\hat r$.
We first relate the Jacobian supports of the true and learned inverse transition functions.
Acyclicity of the instantaneous DAG then determines the correspondence between learned output rows and current latent components.
Finally, the Jacobian support of the true inverse transition function $r$ is related to the direct causal edges defined by the structural functions $f_i$.
Throughout this section, suppose that the conditions of Theorems~\ref{theorem:supp_iia_transition_noise_identification} and~\ref{theorem:element_wise_identifiability} hold.

Let $\pi_\epsilon$ and $\pi_z$ denote the permutations for the transition-noise and latent components, respectively.
Using component-wise $C^2$ diffeomorphisms $q_i$ and $\xi_i$, write
\begin{equation}
\hat\epsilon_{t,\pi_\epsilon(i)}=q_i(\epsilon_{t,i}),
\qquad
\hat z_{t,\pi_z(i)}=\xi_i(z_{t,i}).
\label{eq:supp_noise_and_latent_component_correspondences}
\end{equation}
Here, $\pi_\epsilon$ and $\pi_z$ specify the orderings of the learned output and latent input components, respectively, and are not assumed to be identical.
When describing the functional support of $f_i$, we regard it as a function of all candidate instantaneous and lagged variables, with derivatives with respect to nonparents understood to be identically zero.
We write the true inverse transition function as $r=(r_1,\ldots,r_{d_z})$, where $\epsilon_{t,i}=r_i(\vz_t,\vz_{t-1})$.

\begin{lemma}[Jacobian-support correspondence for inverse transition functions]
\label{lemma:supp_estimated_transition_jacobian_support}
The Jacobian supports of the true and learned inverse transition functions coincide up to the row permutation $\pi_\epsilon$ and column permutation $\pi_z$.
Specifically, for $\ell\in\{0,1\}$ and all $i,j$,
\begin{equation}
\label{eq:supp_estimated_transition_jacobian_support}
\frac{\partial\hat r_{\pi_\epsilon(i)}}
{\partial\hat z_{t-\ell,\pi_z(j)}}
\not\equiv0
\quad\Longleftrightarrow\quad
\frac{\partial r_i}{\partial z_{t-\ell,j}}
\not\equiv0.
\end{equation}
\end{lemma}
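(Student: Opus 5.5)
The plan is to write the learned inverse transition function in true coordinates and differentiate. Assume that $\hat r$ is defined on the range of the learned representation and that $\hat\veps_t=\hat r(\hat\vz_t,\hat\vz_{t-1})$ holds identically there. Combining the two correspondences in~\eqref{eq:supp_noise_and_latent_component_correspondences} gives, for every $i$ and all $(\vz_t,\vz_{t-1})$,
\begin{equation*}
\hat r_{\pi_\epsilon(i)}\bigl(\xi(\vz_t),\xi(\vz_{t-1})\bigr)
=
q_i\bigl(r_i(\vz_t,\vz_{t-1})\bigr).
\end{equation*}
Here $\xi(\vz)$ denotes the vector whose $\pi_z(j)$-th entry is $\xi_j(z_j)$. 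This is an identity of functions on $\mathbb R^{2d_z}$. It holds because $\veps_t=r(\vz_t,\vz_{t-1})$ with $r$ independent of $u_t$, and because Theorems~\ref{theorem:supp_iia_transition_noise_identification} and~\ref{theorem:element_wise_identifiability} give maps that are independent of time and of the auxiliary variable. One point needs care: the identity is first obtained only on the support of $(\vz_t,\vz_{t-1})$. Positivity of the transition densities and of the marginal of $\vz_{t-1}$ makes this support all of $\mathbb R^{2d_z}$, and continuity then removes any remaining measure-zero exceptions.

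Next we differentiate with respect to $z_{t-\ell,j}$ for $\ell\in\{0,1\}$. Since $\xi$ acts component-wise, only the $\pi_z(j)$-th learned coordinate at time $t-\ell$ depends on $z_{t-\ell,j}$. The chain rule therefore gives
\begin{equation*}
\frac{\partial\hat r_{\pi_\epsilon(i)}}{\partial\hat z_{t-\ell,\pi_z(j)}}\bigl(\xi(\vz_t),\xi(\vz_{t-1})\bigr)\,\xi_j'(z_{t-\ell,j})
=
q_i'\bigl(r_i(\vz_t,\vz_{t-1})\bigr)\,\frac{\partial r_i}{\partial z_{t-\ell,j}}(\vz_t,\vz_{t-1}).
\end{equation*}
Because $\xi_j$ and $q_i$ are $C^2$ diffeomorphisms, the factors $\xi_j'$ and $q_i'$ never vanish. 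Hence the two partial derivatives vanish at exactly the corresponding points. Since $(\xi,\xi)$ is a bijection between $\mathbb R^{2d_z}$ and the domain of $\hat r$, vanishing identically on one side is equivalent to vanishing identically on the other, which gives~\eqref{eq:supp_estimated_transition_jacobian_support}.

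The step I expect to be the main obstacle is the regularity bookkeeping, not the computation itself. The derivation uses that $r$ exists and is $C^1$; this follows from the $C^2$ diffeomorphism $\rho_{\mathrm{aug}}$ together with $g$, since $r(\vz_t,\vz_{t-1})$ is the first block of $\rho_{\mathrm{aug}}(g(\vz_t),g(\vz_{t-1}))$. It also uses that $\hat r$ is $C^1$, as given in Assumption~\ref{assumption:supp_iia_transition_noise_identification}. Finally, the domain of $\hat r$ must be matched with the image of $\xi\times\xi$, so that ``$\not\equiv0$'' is understood on that common domain.
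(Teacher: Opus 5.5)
Your proposal is correct and follows the paper's proof. Both compose the two component-wise correspondences into $\hat r_{\pi_\epsilon(i)}(\hat\vz_t,\hat\vz_{t-1})=q_i\bigl(r_i(\vz_t,\vz_{t-1})\bigr)$ and read off the support equivalence from the chain rule, since the univariate factors from $q_i$ and $\xi_j$ never vanish; the differences are cosmetic (you differentiate with respect to $z_{t-\ell,j}$, giving $\xi_j'$ on the left, whereas the paper differentiates with respect to $\hat z_{t-\ell,\pi_z(j)}$, giving $(\xi_j^{-1})'$ on the right), and your domain and $C^1$ bookkeeping is harmless, although the appeal to a positive marginal of $\vz_{t-1}$ is unnecessary because the IIA correspondence is already an identity of functions on the augmented domain.
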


\begin{proof}
Substituting $\veps_t=r(\vz_t,\vz_{t-1})$ and $\hat{\veps}_t=\hat r(\hat{\vz}_t,\hat{\vz}_{t-1})$ into the transition-noise correspondence in~\eqref{eq:supp_noise_and_latent_component_correspondences} gives
\begin{equation*}
\hat r_{\pi_\epsilon(i)}(\hat{\vz}_t,\hat{\vz}_{t-1})
=q_i\bigl(r_i(\vz_t,\vz_{t-1})\bigr).
\end{equation*}
The latent-component correspondence in~\eqref{eq:supp_noise_and_latent_component_correspondences} also gives
$z_{t-\ell,j}=\xi_j^{-1}(\hat z_{t-\ell,\pi_z(j)})$.
Differentiating the preceding identity with respect to $\hat z_{t-\ell,\pi_z(j)}$ and applying the chain rule yields
\begin{equation}
\label{eq:supp_estimated_transition_jacobian_transformation}
\frac{\partial
\hat r_{\pi_\epsilon(i)}(\hat{\vz}_t,\hat{\vz}_{t-1})}
{\partial\hat z_{t-\ell,\pi_z(j)}}
=
\frac{\partial q_i(\epsilon_{t,i})}{\partial\epsilon_{t,i}}
\frac{\partial r_i(\vz_t,\vz_{t-1})}{\partial z_{t-\ell,j}}
\frac{\partial\xi_j^{-1}(\hat z_{t-\ell,\pi_z(j)})}
{\partial\hat z_{t-\ell,\pi_z(j)}}.
\end{equation}
Because $q_i$ and $\xi_j$ are univariate diffeomorphisms, the first and last factors on the right-hand side are nonzero, proving~\eqref{eq:supp_estimated_transition_jacobian_support}.
\end{proof}

We next use acyclicity of the instantaneous DAG to match the learned output rows ordered by $\pi_\epsilon$ uniquely to the current latent components ordered by $\pi_z$.

\begin{lemma}[Matching transition-noise and latent components using the instantaneous DAG]
\label{lemma:supp_unique_transition_state_matching}
Suppose that the instantaneous causal structure is a DAG as in the problem setup and that every structural function $f_i$ is continuously differentiable.
Consider the bipartite graph whose left nodes are $r_i$, whose right nodes are $z_{t,j}$, and which contains an edge whenever $\partial r_i/\partial z_{t,j}\not\equiv0$.
A perfect matching pairs every $r_i$ with exactly one $z_{t,j}$ and every $z_{t,j}$ with exactly one $r_i$.
This graph has a unique perfect matching, consisting of the edges between $r_i$ and $z_{t,i}$.
Consequently, the learned Jacobian support uniquely determines the pairing between each output row $\hat r_{\pi_\epsilon(i)}$ and its corresponding current latent component $\hat z_{t,\pi_z(i)}$.
\end{lemma}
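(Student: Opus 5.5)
The plan is to show first that the \emph{true} bipartite graph has a unique perfect matching, namely the diagonal one $\{r_i,z_{t,i}\}$. Lemma~\ref{lemma:supp_estimated_transition_jacobian_support} then transports this statement to the learned graph. That lemma says $\partial\hat r_{\pi_\epsilon(i)}/\partial\hat z_{t,\pi_z(j)}\not\equiv0$ if and only if $\partial r_i/\partial z_{t,j}\not\equiv0$. Hence the bipartite graph built from the learned instantaneous Jacobian support, with rows $\hat r_k$ and columns $\hat z_{t,l}$, is isomorphic to the true one via $r_i\mapsto\hat r_{\pi_\epsilon(i)}$ and $z_{t,j}\mapsto\hat z_{t,\pi_z(j)}$. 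Perfect matchings correspond under this isomorphism. So the learned graph has exactly one perfect matching, and it pairs $\hat r_{\pi_\epsilon(i)}$ with $\hat z_{t,\pi_z(i)}$. This matching can be read off from the learned support alone, without knowing $\pi_\epsilon$ or $\pi_z$.

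For the true graph, I will establish two support facts.

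\textbf{Support is contained in $\{i\}$ together with the instantaneous parents of $i$.} Since $\epsilon_{t,i}=r_i(\vz_t,\vz_{t-1})$ is recovered from the structural equation $z_{t,i}=f_i(\mathrm{pa}_{\mathrm{inst}}(z_{t,i}),\mathrm{pa}_{\mathrm{lags}}(z_{t,i}),\epsilon_{t,i})$, inverting $f_i$ in its noise argument shows that $r_i$ is a function of $z_{t,i}$, $\mathrm{pa}_{\mathrm{inst}}(z_{t,i})$ and the lagged parents only. The inversion is possible because $r$ exists as a well-defined inverse transition function, as used in Lemma~\ref{lemma:supp_estimated_transition_jacobian_support}. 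Consequently $\partial r_i/\partial z_{t,j}\equiv0$ whenever $j\ne i$ and $z_{t,j}\notin\mathrm{pa}_{\mathrm{inst}}(z_{t,i})$.

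\textbf{The diagonal entries are not identically zero.} Suppose $\partial r_i/\partial z_{t,i}\equiv0$. On the connected domain $\mathbb R^{d_z}\times\mathbb R^{d_z}$, $r_i$ would then not depend on $z_{t,i}$. The identity $z_{t,i}=f_i(\mathrm{pa}_{\mathrm{inst}}(z_{t,i}),\mathrm{pa}_{\mathrm{lags}}(z_{t,i}),r_i(\vz_t,\vz_{t-1}))$ would then express $z_{t,i}$ as a function of the other coordinates of $(\vz_t,\vz_{t-1})$. That contradicts the positivity of the transition density on $\mathbb R^{d_z}\times\mathbb R^{d_z}$ assumed in Condition~\ref{cond:latent-regularity}. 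Continuous differentiability of $f_i$, Condition~\ref{cond:structural-function-regularity}, is what makes these derivative statements meaningful.

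With these two facts, the diagonal matching $r_i\leftrightarrow z_{t,i}$ is a perfect matching. For uniqueness, let $\sigma$ be any perfect matching, viewed as a permutation with $r_i$ matched to $z_{t,\sigma(i)}$. By the first fact, every $i$ with $\sigma(i)\ne i$ satisfies $z_{t,\sigma(i)}\in\mathrm{pa}_{\mathrm{inst}}(z_{t,i})$. If $\sigma\neq\mathrm{id}$, take a nontrivial cycle $i_1\mapsto i_2\mapsto\cdots\mapsto i_m\mapsto i_1$ of $\sigma$. It gives directed instantaneous edges $z_{t,i_2}\to z_{t,i_1}$, $z_{t,i_3}\to z_{t,i_2}$, and so on around the cycle, which is a directed cycle in the instantaneous causal graph. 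This contradicts the DAG assumption, so $\sigma=\mathrm{id}$.

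The main obstacle is making the first fact rigorous. One must justify that the noise inverse $r_i$ inherits exactly the argument set of $f_i$, and that its diagonal derivative cannot vanish identically. The problem setup states invertibility of $f_i$ in $\epsilon_{t,i}$ only implicitly, through the existence of $r$. I would therefore argue directly from the identity $\epsilon_{t,i}=r_i(\vz_t,\vz_{t-1})$ together with $z_{t,i}=f_i(\cdot,\epsilon_{t,i})$, as sketched above, rather than assume an explicit inverse.
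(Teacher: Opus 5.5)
Your skeleton matches the paper's proof. The diagonal pairs $r_i\leftrightarrow z_{t,i}$ form a perfect matching, every off-diagonal support entry of $r_i$ is an instantaneous parent of $z_{t,i}$, and a nontrivial cycle in any other matching becomes a directed instantaneous cycle. The unique matching then transfers to the learned graph through Lemma~\ref{lemma:supp_estimated_transition_jacobian_support}.

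The difference is how you get the two support facts. Your full-support argument for the diagonal is valid and suffices for the $\not\equiv0$ graph. Your inversion argument for the off-diagonal facts is thinner: it needs $\epsilon\mapsto f_i(\mathrm{pa},\epsilon)$ to be injective, and you assert this rather than show it. It does follow from the existence of $r$, but only after an extra step. In a DAG, $\epsilon_{t,i}$ affects $\vz_t$ only through $z_{t,i}$, so a non-injective $f_i$ would let two noise values produce the same $(\vz_t,\vz_{t-1})$.

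The paper avoids this, closing exactly the step you flagged, by differentiating the identity $z_{t,i}=f_i(\mathrm{pa}_{\mathrm{inst}}(z_{t,i}),\mathrm{pa}_{\mathrm{lags}}(z_{t,i}),r_i(\vz_t,\vz_{t-1}))$. With respect to $z_{t,i}$ it gives $1=(\partial f_i/\partial\epsilon_{t,i})(\partial r_i/\partial z_{t,i})$, so both factors are nonzero everywhere. With respect to $z_{t,j}$, $j\neq i$, it gives $\partial r_i/\partial z_{t,j}=-(\partial f_i/\partial\epsilon_{t,i})^{-1}\partial f_i/\partial z_{t,j}$.

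This one computation gives a pointwise nonzero diagonal without appealing to density positivity. It also gives the exact equivalence between off-diagonal support and instantaneous parents, not just the containment you need here. The paper reuses that equivalence in Theorem~\ref{theorem:supp_causal_graph_from_transition_jacobian}.
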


\begin{proof}
We first show that the pairs $(r_i,z_{t,i})$ form a perfect matching and then prove that no other perfect matching exists.
\assumptionconditionref{assumption:supp_iia_transition_noise_identification}{appcond:iia-regularity} ensures that the true inverse transition function $r$ is continuously differentiable.
Substituting $\epsilon_{t,i}=r_i(\vz_t,\vz_{t-1})$ into the structural equation gives
\begin{equation*}
z_{t,i}
=
f_i\!\left(
\mathrm{pa}_{\mathrm{inst}}(z_{t,i}),
\mathrm{pa}_{\mathrm{lags}}(z_{t,i}),
r_i(\vz_t,\vz_{t-1})
\right).
\end{equation*}
Because $z_{t,i}$ is not its own instantaneous parent, differentiating this identity with respect to $z_{t,i}$ produces no direct parent term, leaving only the term through $r_i$.
For $j\ne i$, differentiating the same identity with respect to $z_{t,j}$ gives zero on the left-hand side.
Thus,
\begin{equation*}
1=
\frac{\partial f_i}{\partial\epsilon_{t,i}}
\frac{\partial r_i}{\partial z_{t,i}},
\qquad
0=
\frac{\partial f_i}{\partial z_{t,j}}
+
\frac{\partial f_i}{\partial\epsilon_{t,i}}
\frac{\partial r_i}{\partial z_{t,j}}.
\end{equation*}
For each past component $z_{t-1,j}$, differentiating the same identity gives
\begin{equation*}
0=
\frac{\partial f_i}{\partial z_{t-1,j}}
+
\frac{\partial f_i}{\partial\epsilon_{t,i}}
\frac{\partial r_i}{\partial z_{t-1,j}}.
\end{equation*}
The first equality implies that both factors are nonzero throughout the domain. 
Therefore,
\begin{equation*}
\frac{\partial r_i}{\partial z_{t,i}}
=
\left(\frac{\partial f_i}{\partial\epsilon_{t,i}}\right)^{-1}
\ne0,
\qquad
\frac{\partial r_i}{\partial z_{t,j}}
=-
\left(\frac{\partial f_i}{\partial\epsilon_{t,i}}\right)^{-1}
\frac{\partial f_i}{\partial z_{t,j}}.
\end{equation*}
Similarly, for each past component,
\begin{equation*}
\frac{\partial r_i}{\partial z_{t-1,j}}
=-
\left(\frac{\partial f_i}{\partial\epsilon_{t,i}}\right)^{-1}
\frac{\partial f_i}{\partial z_{t-1,j}}.
\end{equation*}
Thus, the edges joining $r_i$ to $z_{t,i}$ form a perfect matching, while the off-diagonal Jacobian support corresponds to the instantaneous causal edges.
It remains to show that this perfect matching is unique.
Suppose, for contradiction, that a different perfect matching exists.
Relative to the matching $(r_i,z_{t,i})$, its associated permutation contains a nontrivial cycle $i_1,\ldots,i_s$.
Writing $i_{s+1}=i_1$, this matching pairs each row $r_{i_h}$ with the distinct current component $z_{t,i_{h+1}}$.
Hence,
\begin{equation*}
\frac{\partial r_{i_h}}{\partial z_{t,i_{h+1}}}\not\equiv0
\qquad(h=1,\ldots,s).
\end{equation*}
Because $i_h\ne i_{h+1}$, the relation derived above between the derivatives of $r_i$ and $f_i$ implies
\begin{equation*}
\frac{\partial f_{i_h}}{\partial z_{t,i_{h+1}}}\not\equiv0
\qquad(h=1,\ldots,s).
\end{equation*}
These dependencies form the directed instantaneous cycle
\begin{equation*}
z_{t,i_2}\to z_{t,i_1},\quad
z_{t,i_3}\to z_{t,i_2},\quad\ldots,\quad
z_{t,i_1}\to z_{t,i_s},
\end{equation*}
contradicting the instantaneous DAG assumption.
Therefore, the matching between corresponding components is unique.
By Lemma~\ref{lemma:supp_estimated_transition_jacobian_support}, the true and learned supports coincide up to $\pi_\epsilon$ and $\pi_z$; hence, the unique learned matching pairs $\hat r_{\pi_\epsilon(i)}$ with $\hat z_{t,\pi_z(i)}$.
This argument uses only acyclicity of the instantaneous DAG and does not require a unique topological order.
\end{proof}

Together, the two lemmas match each output row of $\hat r$ to its corresponding current latent component.
The Jacobian supports with respect to its current and past inputs can then be interpreted as direct causal edges.

\begin{theorem}[Causal-graph identification from the learned inverse transition function]
\label{theorem:supp_causal_graph_from_transition_jacobian}
Suppose that Theorems~\ref{theorem:supp_iia_transition_noise_identification} and~\ref{theorem:element_wise_identifiability} hold.
Under the instantaneous DAG assumed in the problem setup, also suppose that every structural function $f_i$ is continuously differentiable.
Define the direct causal edges induced by $f_i$ by
\begin{equation*}
z_{t,j}\to z_{t,i}
\quad\Longleftrightarrow\quad
\frac{\partial f_i}{\partial z_{t,j}}\not\equiv0
\quad(j\ne i),
\qquad
z_{t-1,j}\to z_{t,i}
\quad\Longleftrightarrow\quad
\frac{\partial f_i}{\partial z_{t-1,j}}\not\equiv0.
\end{equation*}
Then, for all $i\ne j$,
\begin{equation*}
\frac{\partial\hat r_{\pi_\epsilon(i)}}
{\partial\hat z_{t,\pi_z(j)}}
\not\equiv0
\quad\Longleftrightarrow\quad
\frac{\partial f_i}{\partial z_{t,j}}
\not\equiv0,
\end{equation*}
and, for all $i,j$,
\begin{equation*}
\frac{\partial\hat r_{\pi_\epsilon(i)}}
{\partial\hat z_{t-1,\pi_z(j)}}
\not\equiv0
\quad\Longleftrightarrow\quad
\frac{\partial f_i}{\partial z_{t-1,j}}
\not\equiv0.
\end{equation*}
Consequently, the true instantaneous and one-step lagged causal edges are identifiable up to a single component permutation $\pi_z$ from the off-diagonal instantaneous and one-step lagged Jacobian supports of $\hat r$, respectively.
\end{theorem}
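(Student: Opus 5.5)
The proof will chain the two preceding lemmas with the derivative identities from the proof of Lemma~\ref{lemma:supp_unique_transition_state_matching}. First, by Theorems~\ref{theorem:supp_iia_transition_noise_identification} and~\ref{theorem:element_wise_identifiability}, we have the component-wise correspondences $\hat\epsilon_{t,\pi_\epsilon(i)}=q_i(\epsilon_{t,i})$ and $\hat z_{t,\pi_z(i)}=\xi_i(z_{t,i})$. Lemma~\ref{lemma:supp_estimated_transition_jacobian_support} then gives, for $\ell\in\{0,1\}$, that the support of $\partial\hat r_{\pi_\epsilon(i)}/\partial\hat z_{t-\ell,\pi_z(j)}$ coincides with that of $\partial r_i/\partial z_{t-\ell,j}$. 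By Lemma~\ref{lemma:supp_unique_transition_state_matching}, the row indexed $\pi_\epsilon(i)$ is recovered from the learned Jacobian alone as the row matched to column $\pi_z(i)$ in the unique perfect matching. Hence the statement is meaningful without knowledge of the true labels, and it reduces to comparing the supports of $\partial r_i$ and $\partial f_i$.

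For the second step, I reuse the identities obtained by differentiating $z_{t,i}=f_i(\mathrm{pa}_{\mathrm{inst}}(z_{t,i}),\mathrm{pa}_{\mathrm{lags}}(z_{t,i}),r_i(\vz_t,\vz_{t-1}))$:
\begin{equation*}
\frac{\partial r_i}{\partial z_{t,i}}=\Bigl(\frac{\partial f_i}{\partial\epsilon_{t,i}}\Bigr)^{-1}\neq0,\qquad
\frac{\partial r_i}{\partial z_{t-\ell,j}}=-\Bigl(\frac{\partial f_i}{\partial\epsilon_{t,i}}\Bigr)^{-1}\frac{\partial f_i}{\partial z_{t-\ell,j}},
\end{equation*}
where the second identity holds for $(\ell,j)\neq(0,i)$. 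Here $\partial f_i/\partial z_{t-\ell,j}$ is evaluated at the point $(\mathrm{pa},r_i(\vz_t,\vz_{t-1}))$. Since the prefactor never vanishes, the two derivatives vanish at exactly the same points $(\vz_t,\vz_{t-1})$.

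The main subtlety is converting pointwise equivalence into equivalence of ``$\not\equiv0$'' statements. For $r_i$ this is a statement over $(\vz_t,\vz_{t-1})$. For $f_i$ it is a statement over its arguments (parents, $\epsilon_{t,i}$). I would argue that the map $(\vz_t,\vz_{t-1})\mapsto(\text{parent values},\epsilon_{t,i})$ is surjective onto the domain of $f_i$. Given any parent values and noise value $e$, we can choose the parent coordinates freely and then solve for $z_{t,i}=f_i(\cdot,e)$, so that $r_i$ returns $e$. This uses the fact that $r$ inverts the structural equations, which is ensured by Assumption~\ref{assumption:supp_iia_transition_noise_identification}\ref{appcond:iia-regularity}. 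With surjectivity in hand, $\partial f_i/\partial z_{t-\ell,j}$ vanishes identically exactly when its composition with this map does. The latter vanishes exactly when $\partial r_i/\partial z_{t-\ell,j}$ does.

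Finally, combining this with Lemma~\ref{lemma:supp_estimated_transition_jacobian_support} yields both equivalences. The case $i=j$ at $\ell=0$ is excluded, as in the statement, because that entry is always nonzero. Under the identified correspondence, the off-diagonal instantaneous support and the lagged support of $\hat r$ therefore recover the true edges up to $\pi_z$.
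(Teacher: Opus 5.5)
Your proposal is correct and follows the paper's proof. Lemma~\ref{lemma:supp_estimated_transition_jacobian_support} transfers supports under $\pi_\epsilon$ and $\pi_z$, Lemma~\ref{lemma:supp_unique_transition_state_matching} attaches each learned row to its child component, and the identity $\partial r_i/\partial z_{t-\ell,j}=-(\partial f_i/\partial\epsilon_{t,i})^{-1}\partial f_i/\partial z_{t-\ell,j}$, whose prefactor never vanishes, converts $r$-supports into $f$-supports. Your surjectivity argument for passing from pointwise vanishing to ``$\not\equiv0$'' makes explicit a step the paper simply asserts; it implicitly uses injectivity of $f_i$ in $\epsilon_{t,i}$, which $\partial f_i/\partial\epsilon_{t,i}\neq0$ on an interval of noise values guarantees.
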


\begin{proof}
For $\ell\in\{0,1\}$, Lemma~\ref{lemma:supp_estimated_transition_jacobian_support} gives
\begin{equation*}
\frac{\partial\hat r_{\pi_\epsilon(i)}}
{\partial\hat z_{t-\ell,\pi_z(j)}}
\not\equiv0
\quad\Longleftrightarrow\quad
\frac{\partial r_i}{\partial z_{t-\ell,j}}
\not\equiv0.
\end{equation*}
The unique perfect matching in Lemma~\ref{lemma:supp_unique_transition_state_matching} assigns the learned output row $\hat r_{\pi_\epsilon(i)}$ uniquely to its child component $\hat z_{t,\pi_z(i)}$.
For $\ell=0$ with $j\ne i$, or for $\ell=1$, a nonzero entry in this output row at the column $\hat z_{t-\ell,\pi_z(j)}$ therefore corresponds to the edge
\begin{equation*}
\hat z_{t-\ell,\pi_z(j)}
\to
\hat z_{t,\pi_z(i)}.
\end{equation*}
This assignment does not assume $\pi_\epsilon=\pi_z$; the unique perfect matching associates each output row with its child component indexed according to $\pi_z$.

As derived in the proof of Lemma~\ref{lemma:supp_unique_transition_state_matching}, differentiating the structural equation after substituting $\epsilon_{t,i}=r_i(\vz_t,\vz_{t-1})$ gives, for $\ell=0$ with $j\ne i$, and for $\ell=1$,
\begin{equation*}
\frac{\partial r_i}{\partial z_{t-\ell,j}}
=-
\left(
\frac{\partial f_i}{\partial\epsilon_{t,i}}
\right)^{-1}
\frac{\partial f_i}{\partial z_{t-\ell,j}}.
\end{equation*}
As shown in the proof of
Lemma~\ref{lemma:supp_unique_transition_state_matching},
$\partial f_i/\partial\epsilon_{t,i}$ is nonzero throughout the domain.
Therefore, a nonzero off-diagonal instantaneous component or a nonzero one-step lagged component of the Jacobian of $r_i$ is equivalent to the corresponding partial derivative of $f_i$ being nonzero.
Thus, under the single node permutation $\pi_z$, the off-diagonal instantaneous and one-step lagged Jacobian supports of $\hat r$ correspond exactly to the true instantaneous and one-step lagged causal edges, respectively.
\end{proof}

\begin{remark}[Relation to Jacobian-based causal-graph identification]
The proof follows the idea of \citet{Reizinger2023a}: after identifying transition noise and latent states component-wise, the causal graph is identified from the Jacobian support of the inverse transition function.
We apply this idea to latent transition dynamics, using the instantaneous Jacobian to match transition-noise components to current latent components and the instantaneous and lagged Jacobians to identify their respective causal edges.
\end{remark}

\subsection{Extension to Multiple Lags}
\label{app:multiple_lags}

We extend the preceding identification results from $L=1$ to any finite maximum lag $L$.
The single past state is replaced by a finite history, while cross-time relations and variability are considered separately at each lag.
Write the finite history over the past $L$ time steps as $\vz_t^-:=(\vz_{t-1},\ldots,\vz_{t-L})$.
Let $\mathcal M_z$ denote the pairwise Markov network representing the instantaneous conditional-independence relations in $p(\vz_t\mid\vz_t^-,u_t)$, and let $\Einst$ denote its edge set.
For each $\ell=1,\ldots,L$, let $\Elag^{(\ell)}$ denote the past--current edges between $\vz_{t-\ell}$ and $\vz_t$ in the pairwise Markov network of the conditional joint distribution $p(\vz_t^-,\vz_t\mid u_t)$.
Define the corresponding learned Markov networks analogously, with instantaneous edge set $\widehatEinst$ and lag-specific past--current edge sets $\widehatElag^{(\ell)}$, and let
\begin{equation*}
\begin{aligned}
\Delta_n^{(L)}(\vz_t,\vz_t^-)
&:=\log p(\vz_t\mid \vz_t^-,u_t^{(n)})
-\log p(\vz_t\mid \vz_t^-,u_t^{(0)}),\\
\vv_{ia}^{(\ell)}(\vz_t,\vz_t^-)
&:=\left(
\frac{\partial^2\Delta_n^{(L)}}
{\partial z_{t,i}\partial z_{t-\ell,a}}
\right)_{n=1}^{N_{\mathrm{aux}}}.
\end{aligned}
\end{equation*}
Here, we retain the assumption on $H$ in Condition~\ref{cond:latent-regularity} of Theorem~\ref{thm:elementwise_identification} throughout this section and define $\widehat\Delta_n^{(L)}$ and $\hat{\vv}_{ia}^{(\ell)}$ analogously for the learned representation.
Define the candidate sets $\mathcal C_s$ as in Appendix~\ref{app:latent_component_identification}, using the instantaneous Markov network $\mathcal M_z$ defined above, and let
\begin{equation*}
\mathcal B^{(\ell)}(j,a)
:=
\left\{(k,m)\in[d_z]^2\ \middle|\
k\in\mathcal C_t(j),\ m\in\mathcal C_{t-\ell}(a),\
\{z_{t-\ell,m},z_{t,k}\}\in \Elag^{(\ell)}\right\}.
\end{equation*}
The corresponding future-directed candidate blocks are defined by shifting these quantities forward by $\ell$ time steps.

\begin{assumption}[Finite-lag extension conditions]
\label{assumption:supp_finite_lag_extension}
For every time $t$, the following conditions hold whenever the required past and future variables exist.

\medskip
\appendixcondition{appcond:finite-history-noise-identification}{Finite-history transition-noise identification.}
All conditions in Appendix~\ref{app:transition_noise_identification} hold after replacing the past observation $\vx_{t-1}$ by the finite history $\vx_t^-:=(\vx_{t-1},\ldots,\vx_{t-L})$ in the augmented-coordinate construction.

\medskip
\appendixcondition{appcond:finite-instantaneous-variation}{Instantaneous variability and edge-count minimality.}
The positivity and regularity condition in \assumptionconditionref{assumption:supp_instantaneous_identification}{appcond:instantaneous-regularity} holds with $\vz_t^-$ in place of $\vz_{t-1}$.
Let $N_{\mathrm{inst}}:=2d_z+|\Einst|$ and define
$\vw_n^{(L)}(\vz_t,\vz_t^-)\in\mathbb R^{N_{\mathrm{inst}}}$
from $\Delta_n^{(L)}$ in the same way as $\vw_n$ is defined in \assumptionconditionref{assumption:supp_instantaneous_identification}{appcond:instantaneous-variation}, replacing $\vz_{t-1}$ with $\vz_t^-$.
For every $\vz_t$, suppose that there exist auxiliary-variable indices and finite-history evaluation values
\begin{equation*}
(n_\alpha,\bar{\vz}_t^{-,(\alpha)})
\in
\{1,\ldots,N_{\mathrm{aux}}\}\times\mathbb R^{Ld_z},
\qquad
\alpha=1,\ldots,N_{\mathrm{inst}},
\end{equation*}
such that
\begin{equation*}
\mW^{(L)}(\vz_t)
:=
\left[
\vw_{n_\alpha}^{(L)}(\vz_t,\bar{\vz}_t^{-,(\alpha)})
\right]_{\alpha=1}^{N_{\mathrm{inst}}}
\end{equation*}
has rank $N_{\mathrm{inst}}$.
The indices $n_\alpha$ need not be distinct.
The learned instantaneous edge set also satisfies
\begin{equation*}
|\widehatEinst|\leq|\Einst|.
\end{equation*}

\medskip
\appendixcondition{appcond:finite-cross-time-variation}{Per-lag cross-time variability and edge-count minimality.}
For every $\ell=1,\ldots,L$ and $j,a\in[d_z]$, at almost every $(\vz_t,\vz_t^-)$,
\begin{equation*}
\operatorname{rank}
\left[
\vv_{km}^{(\ell)}(\vz_t,\vz_t^-)
\right]_{(k,m)\in\mathcal B^{(\ell)}(j,a)}
=
|\mathcal B^{(\ell)}(j,a)|.
\end{equation*}
The learned cross-time edge sets satisfy
\begin{equation*}
\sum_{\ell=1}^{L}|\widehatElag^{(\ell)}|
\leq
\sum_{\ell=1}^{L}|\Elag^{(\ell)}|.
\end{equation*}

\medskip
\appendixcondition{appcond:finite-intimate-neighbor-separation}{Intimate-neighbor separation across lags.}
For every $z_{t,j}\in\Psi_{\mathcal M_z}(z_{t,i})$, there exists at least one lag
$\ell\in\{1,\ldots,L\}$ and a past or future component at that lag whose
cross-time relation is present with $z_{t,i}$ but absent with $z_{t,j}$, as in \assumptionconditionref{assumption:supp_cross_time_conditions}{appcond:intimate-neighbor-separation}.
\end{assumption}

This yields the following extension of the $L=1$ result.
\begin{theorem}[Identification for a general finite maximum lag]
\label{theorem:supp_finite_lag_identification}
Suppose that Assumption~\ref{assumption:supp_finite_lag_extension} holds.
Then there exist a time-independent permutation $\pi_z$ and univariate $C^2$ diffeomorphisms $\xi_i$ such that $\hat z_{t,\pi_z(i)}=\xi_i(z_{t,i})$.
Furthermore, under the instantaneous DAG assumed in the problem setup, suppose that every structural function $f_i$ is continuously differentiable.
Let $\pi_\epsilon$ be the transition-noise permutation obtained by IIA.
Then, for all $i\ne j$,
\begin{equation*}
\frac{\partial\hat r_{\pi_\epsilon(i)}}
{\partial\hat z_{t,\pi_z(j)}}\not\equiv0
\quad\Longleftrightarrow\quad
\frac{\partial f_i}{\partial z_{t,j}}\not\equiv0,
\end{equation*}
and, for every $\ell=1,\ldots,L$ and all $i,j$,
\begin{equation}
\label{eq:supp_finite_lag_graph_support}
\frac{\partial\hat r_{\pi_\epsilon(i)}}
{\partial\hat z_{t-\ell,\pi_z(j)}}\not\equiv0
\quad\Longleftrightarrow\quad
\frac{\partial f_i}{\partial z_{t-\ell,j}}\not\equiv0.
\end{equation}
Consequently, the true instantaneous and lagged causal edges are identifiable up to a single component permutation $\pi_z$ from the off-diagonal instantaneous and per-lag Jacobian supports of $\hat r$.
\end{theorem}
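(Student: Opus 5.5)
The plan is to rerun the $L=1$ argument of Appendices~\ref{app:instantaneous_markov_identification}--\ref{app:causal_graph_identification} with the single past state $\vz_{t-1}$ replaced by the finite history $\vz_t^-$, and to treat cross-time relations one lag at a time. The proof has three stages, in this order: component-wise identification of the latents, transition-noise identification, and causal-graph identification.

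\textbf{Stage 1: instantaneous Markov network.} Because $H$ acts separately on each time slice and does not depend on $u_t$, the change of variables for $p(\hat{\vz}_t\mid\hat{\vz}_t^-,u_t)$ contributes only $|\det J_H(\hat{\vz}_t)|$. This factor cancels in the log-ratio, so
$\widehat\Delta_n^{(L)}(\hat{\vz}_t,\hat{\vz}_t^-)=\Delta_n^{(L)}(H(\hat{\vz}_t),H(\hat{\vz}_{t-1}),\ldots,H(\hat{\vz}_{t-L}))$.
Lemma~\ref{lemma:instantaneous_jacobian_conditions} then goes through verbatim with $\vz_t^-$ in place of $\vz_{t-1}$. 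The coefficient vector $\kappa$ depends only on $\hat{\vz}_t$, and the rank condition on $\mW^{(L)}$ in Condition~\ref{appcond:finite-instantaneous-variation} forces $\kappa=0$. Lemma~\ref{lemma:local_coordinate_correspondence}, Theorem~\ref{theorem:local_markov_network_isomorphism}, and Lemma~\ref{lemma:residual_mixing_is_intimate} involve no lagged quantities, so they transfer unchanged. This gives the local isomorphism of $\mathcal M_z$ and $\mathcal M_{\hat z}$, and it confines residual mixing to intimate neighbors.

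\textbf{Stage 1: cross-time relations, lag by lag.} For each $\ell$ and each learned pair $(j,a)$, differentiate the identity above with respect to $\hat z_{t,\pi(j)}$ and $\hat z_{t-\ell,\pi(a)}$. Since $H$ acts slice-wise, $\hat{\vv}^{(\ell)}$ becomes a combination of the $\vv^{(\ell)}_{km}$ over $\mathcal B^{(\ell)}(j,a)$. This is the lag-$\ell$ analogue of Lemma~\ref{lemma:time_variation_transformation}, and the analogue of Lemma~\ref{lemma:time_nonedge_zero_variation} is proved by the same factorization conditioned on the rest of the history. The per-lag rank condition and the continuity argument of Lemma~\ref{lemma:time_nonedge_jacobian_condition} then show that every true lag-$\ell$ edge is preserved. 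Preservation holds separately for each lag, so the injections together give $\sum_\ell|\Elag^{(\ell)}|\le\sum_\ell|\widehatElag^{(\ell)}|$. The summed bound in the assumption supplies the reverse inequality, so equality holds. Because each per-lag injection is into $\widehatElag^{(\ell)}$, equality of the sums forces equality lag by lag, which gives per-lag isomorphisms. The separation condition at the chosen lag then reproduces Lemma~\ref{lemma:eliminate_residual_mixing_by_time_relation}. Lemma~\ref{lemma:global_permutation_from_local_jacobian} globalizes the resulting permutation, exactly as in Theorem~\ref{theorem:element_wise_identifiability}.

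\textbf{Stage 2: transition-noise identification.} Under Condition~\ref{appcond:finite-history-noise-identification}, Theorem~\ref{theorem:supp_iia_transition_noise_identification} applies with augmented coordinates $(\veps_t,\vx_t^-)$. The problem setup gives $\veps_t\perp\vx_t^-\mid u_t$, and the augmented Jacobian is still block triangular. This yields $\pi_\epsilon$ and the functions $q_i$.

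\textbf{Stage 3: causal graph.} The chain-rule step of Lemma~\ref{lemma:supp_estimated_transition_jacobian_support} holds for each $\ell\in\{0,\ldots,L\}$. Differentiating $z_{t,i}=f_i(\cdot,r_i(\vz_t,\vz_t^-))$ gives $\partial f_i/\partial\epsilon_{t,i}\ne0$ and
$\partial r_i/\partial z_{t-\ell,j}=-(\partial f_i/\partial\epsilon_{t,i})^{-1}\partial f_i/\partial z_{t-\ell,j}$.
The unique-matching argument of Lemma~\ref{lemma:supp_unique_transition_state_matching} uses only the instantaneous block and acyclicity, so it carries over unchanged. Combining the two yields~\eqref{eq:supp_finite_lag_graph_support} and the corresponding instantaneous statement.

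\textbf{Main obstacle.} I expect the delicate step to be converting the summed edge-count bound into per-lag equalities. Preservation must be established for every lag before the counts are compared. An edge at one lag must never be credited against a learned edge at another lag; this is safe because the per-lag maps send $\Elag^{(\ell)}$ into $\widehatElag^{(\ell)}$ with the same $\ell$. A secondary check is the future-directed separation at lag $\ell$. There the required variables must exist, and the rank condition must be applied after shifting by $\ell$ steps, which requires stationarity of the conditions across $t$.
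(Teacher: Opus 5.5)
Your proposal is correct and follows essentially the same route as the paper's proof. Both use slice-wise cancellation of $|\det J_H|$ so that the rank condition on $\mW^{(L)}$ still forces $\kappa=0$, and both use per-lag expansion over $\mathcal B^{(\ell)}(j,a)$ to get $|\Elag^{(\ell)}|\le|\widehatElag^{(\ell)}|$ for each $\ell$, so that the summed edge-count minimality forces per-lag equality. The remaining steps also coincide: the $L=1$ intimate-neighbor elimination and globalization, IIA with $\vx_t^-$ in the block-triangular augmented coordinates, and the lag-wise chain-rule support correspondence together with the unique perfect matching from the instantaneous DAG. The only difference is the order in which you present the latent-component and transition-noise steps, which does not affect the argument.
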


\begin{proof}
In Appendix~\ref{app:transition_noise_identification}, replacing $\vx_{t-1}$ with $\vx_t^-$ preserves the block-triangular Jacobian of the augmented coordinate transformation, so the same mixed-derivative argument applies to every component of the finite history.
In Appendix~\ref{app:instantaneous_markov_identification}, replacing past-state evaluation values with finite-history evaluation values does not change the fact that the unknown coefficients constructed from the coordinate transformation $H$ depend only on $\vz_t$.
The full-rank condition on $\mW^{(L)}(\vz_t)$ and instantaneous edge-count minimality therefore imply an isomorphism of $\mathcal M_z$.

For each lag $\ell$, taking the mixed derivative of $\widehat\Delta_n^{(L)}$ with respect to $\hat{\vz}_t$ and $\hat{\vz}_{t-\ell}$ introduces no terms from other lags because $H$ acts separately at each time step, and gives
\begin{equation*}
\hat{\vv}^{(\ell)}_{
\pi_{\hat{\vz}_t}(j),\pi_{\hat{\vz}_{t-\ell}}(a)}
=\sum_{(k,m)\in\mathcal B^{(\ell)}(j,a)}
\frac{\partial z_{t,k}}{\partial\hat z_{t,\pi_{\hat{\vz}_t}(j)}}
\frac{\partial z_{t-\ell,m}}{\partial\hat z_{t-\ell,\pi_{\hat{\vz}_{t-\ell}}(a)}}
\vv_{km}^{(\ell)}.
\end{equation*}
The local full-rank condition preserves every true edge at each lag, so $|\Elag^{(\ell)}|\leq|\widehatElag^{(\ell)}|$.
Thus, each difference $|\widehatElag^{(\ell)}|-|\Elag^{(\ell)}|$ is nonnegative, while the aggregate edge-count minimality requires their sum to be nonpositive. 
Every difference must therefore be zero, yielding edge-set correspondence at each lag.
The argument of Appendix~\ref{app:latent_component_identification} then eliminates the remaining mixing and yields $\pi_z$ and $\xi_i$.

Finally, the chain-rule support correspondence in Appendix~\ref{app:causal_graph_identification} extends directly to every input lag, while the unique perfect matching between output rows and current latent components is determined from the instantaneous Jacobian as before.
Thus, at every lag, the Jacobian supports of the true and learned inverse transition functions correspond under the same row permutation $\pi_\epsilon$ and column permutation $\pi_z$, proving the stated graph-identification result.
\end{proof}

\section{Learning Framework Details}
\label{app:learning-framework-details}

This section provides details of the learning framework described in the main text, including its model parameterization, contrastive objectives, structural regularization, and optimization.

\subsection{Model Parameterization}
\label{app:learning-model-parameterization}

\paragraph{Shared encoder and learned inverse transition function.}

For maximum lag $L$, our method applies the shared encoder $\phi$ separately to each observation in $(\vx_{t-L},\ldots,\vx_t)$, yielding $\hat{\vz}_{t-\ell}=\phi(\vx_{t-\ell})$ for $\ell=0,\ldots,L$.
Each transition-noise component is then estimated by a component-specific learned inverse transition function.
Each component of $\hat r=(\hat r_1,\ldots,\hat r_{d_z})$ is parameterized as
\begin{equation}
 \hat\epsilon_{t,i}
 =\hat r_i\!\left(
   \hat z_{t,i},
   \mG^{(0)}_{i,-i}\odot\hat{\vz}_{t,-i},
   \left\{\mG^{(\ell)}_{i,:}\odot\hat{\vz}_{t-\ell}\right\}_{\ell=1}^{L}
 \right).
 \label{eq:supp_alg_weighted_inverse_transition}
\end{equation}
Here, $\mG^{(0)},\mG^{(\ell)}\in\mathbb R^{d_z\times d_z}$, $\ell=1,\ldots,L$, are learnable weight matrices that gate individual current and lagged input components, respectively.
For these matrices, row $i$ corresponds to the target component $\hat z_{t,i}$ and column $j$ to the source component $\hat z_{t-\ell,j}$.
The notation $-i$ denotes all components except component $i$, whereas $:$ denotes all components.
Accordingly, $\mG^{(0)}_{i,-i}$ gates the other current components and $\mG^{(\ell)}_{i,:}$ gates all components at lag $\ell$.
Because the current component $\hat z_{t,i}$ is supplied directly to $\hat r_i$ without a gate, the diagonal entries of $\mG^{(0)}$ are fixed to zero.
For each $i$, the gated inputs are concatenated and processed jointly by a general nonlinear network parameterizing $\hat r_i$.

\paragraph{History feature.}
For the contrastive objectives below, the past learned representations are
summarized by the history feature
\begin{equation}
 \vh_t=\eta(\hat{\vz}_{t-1},\ldots,\hat{\vz}_{t-L}),
 \label{eq:supp_alg_history_feature}
\end{equation}
where the history feature map $\eta$ is learned jointly with the other modules.

\subsection{Contrastive Learning with Auxiliary Variables}
\label{app:learning-contrastive}

We implement the IIA contrastive-learning framework~\citep{Morioka2021a}
in two forms.
\methodTCL{} treats a discrete auxiliary variable as a finite set of classes,
whereas \methodGCL{} accommodates a general observed auxiliary variable.
We describe the corresponding regression functions and training losses below.

\paragraph{\methodTCL{}.}
For a discrete auxiliary variable, \methodTCL{} treats each possible value as a class and predicts it from the estimated transition noise and history feature.
For a candidate value $u$, the logit is
\begin{equation}
 T_u^{\mathrm{TCL}}(\hat{\veps}_t,\vh_t)
 =
 \sum_{i=1}^{d_z}\sum_{k=1}^{K}
 w_{uik}\psi_{ik}(\hat\epsilon_{t,i})
 +\varphi(\vh_t,u)+b_u.
 \label{eq:supp_alg_tcl_score}
\end{equation}
Here, $K$ is the number of feature functions for each transition-noise component, and $\psi_{ik}$ is the $k$-th scalar-valued feature function of $\hat\epsilon_{t,i}$.
The feature functions may be learned or fixed; when the conditional noise family is specified, fixed functions can be chosen to include its sufficient statistics.
The coefficients $w_{uik}$ and biases $b_u$ are learned separately for each class.
The learned function $\varphi(\vh_t,u)$ represents the relation between the history feature and the auxiliary variable.
We omit an auxiliary-variable-independent common term because it cancels from the softmax probabilities.
The softmax cross-entropy for the observed $u_t$ defines $\mathcal L_{\mathrm{TCL}}$.

\paragraph{\methodGCL{}.}
\methodGCL{} uses changes in the transition-noise distributions associated with a general observed auxiliary variable through binary classification of matched and mismatched tuples.
Its binary logistic regression function is
\begin{equation}
 \begin{aligned}
 T^{\mathrm{GCL}}(\hat{\veps}_t,\vh_t,u_t)
 ={}&
 \sum_{i=1}^{d_z}\sum_{k=1}^{K}
 \psi_{ik}(\hat\epsilon_{t,i})
 \mu_{ik}(u_t)
 +\varphi(\vh_t,u_t)
 +B(\hat{\veps}_t,\vh_t).
 \end{aligned}
 \label{eq:supp_alg_gcl_score}
\end{equation}
Here, $\mu_{ik}$ is a learned function of the auxiliary variable that gives the coefficient of the $k$-th feature of component $i$.
The learned function $\varphi$ represents the relation between the history feature and the auxiliary variable as well as terms that depend only on the auxiliary variable, whereas $B$ collects terms independent of the auxiliary variable.
The component-wise additive restriction applies only to the auxiliary-variable-dependent term; $B$ may therefore depend jointly on all estimated transition-noise components and the history feature.
For each minibatch, matched tuples $(\hat{\veps}_t,\vh_t,u_t)$ are positive examples.
Mismatched tuples $(\hat{\veps}_t,\vh_t,u'_t)$ are constructed by permuting only the auxiliary-variable values within the minibatch, while keeping $\hat{\veps}_t$ and $\vh_t$ fixed.
We use a nonzero cyclic shift for this permutation so that each sample receives the auxiliary-variable value from a different sample.
The resulting binary cross-entropy defines $\mathcal L_{\mathrm{GCL}}$.

\subsection{Structural Regularization and Training Objective}
\label{app:learning-structural-regularization}

\paragraph{Sparsity regularization of input dependencies.}
Motivated by the edge-count minimality used in the identification analysis in Appendix~\ref{app:latent_component_identification}, we impose the following regularizers on the input gates in~\eqref{eq:supp_alg_weighted_inverse_transition}:
\begin{equation}
\mathcal R_{\mathrm{inst}}
=
\frac{1}{d_z^2}
\sum_{i,j}
\left|\mG^{(0)}_{ij}\right|,
\qquad
\mathcal R_{\mathrm{lag}}
=
\frac{1}{Ld_z^2}
\sum_{\ell=1}^{L}
\sum_{i,j}
\left|\mG^{(\ell)}_{ij}\right|.
\end{equation}
For any gated current or lagged input $\hat z_{t-\ell,j}$ supplied to $\hat r_i$ through $\mG^{(\ell)}_{ij}$, the chain rule gives
\begin{equation}
 \frac{\partial\hat r_i}{\partial\hat z_{t-\ell,j}}
 =\mG^{(\ell)}_{ij}
 \frac{\partial\hat r_i}
 {\partial\!\left(\mG^{(\ell)}_{ij}\hat z_{t-\ell,j}\right)}.
 \label{eq:supp_alg_gate_jacobian_relation}
\end{equation}
Consequently, $\mG^{(\ell)}_{ij}=0$ makes the corresponding Jacobian component zero.
Regularizing the gates therefore provides a computationally simple surrogate for encouraging sparse input dependencies, since it acts directly on the weight matrices without explicitly evaluating the Jacobian.
IDOL instead applies an $\ell_1$ penalty directly to the Jacobian~\citep{Li2025a}, which can also be used here but requires explicit Jacobian evaluation during training.
We combine the instantaneous and lagged penalties as
\begin{equation}
\mathcal R_{\mathrm{sparsity}}
:=
\mathcal R_{\mathrm{inst}}
+
\mathcal R_{\mathrm{lag}}.
\end{equation}

\paragraph{Instantaneous DAG regularization.}
When the instantaneous causal order is not fixed, we encourage acyclicity using the following NOTEARS-style regularizer~\citep{Zheng2018a}:
\begin{equation}
 \mathcal R_{\mathrm{DAG}}
 :=\operatorname{tr}\!\left[
  \exp\!\left(\mG^{(0)}\odot \mG^{(0)}\right)
 \right]-d_z.
 \label{eq:supp_alg_notears_penalty}
\end{equation}
Because this term is optimized as an ordinary loss penalty in our implementation, \eqref{eq:supp_alg_notears_penalty} is a soft penalty and does not strictly guarantee acyclicity. 
When a causal order is fixed in advance, a hard mask restricts the learnable entries to the triangular part consistent with that order, structurally guaranteeing an instantaneous DAG; in this case, $\mathcal R_{\mathrm{DAG}}$ can be omitted.

\paragraph{Joint optimization.}
The regularizers are combined with the selected TCL or GCL loss as in~\eqref{eq:learning_full_objective}, and all learnable parameters are optimized jointly.
A decoder and reconstruction loss may also be added, although neither is required by the learning framework or identification theory.
\section{Synthetic-data Experimental Details and Additional Results}
\label{app:experimental-details}

The main text presents the primary comparison between \methodname{} and the baselines in the default setting. 
This section details data generation, training conditions, baselines, and evaluation, followed by analyses of latent causal structure, latent dimension, sample size, data-generating function complexity, and objective components.
General definitions of the model, TCL/GCL losses, input-gate sparsity regularization, and DAG regularization are given in Appendix~\ref{app:learning-framework-details}.

\subsection{Common Experimental Settings}
\label{app:synthetic_common_experimental_settings}

\subsubsection{Data Generation}

\paragraph{Data-generating model.}
In the default setting, we generate sequences using nonlinear structural functions and a nonlinear observation function that remain invariant across auxiliary-variable values, while only the transition-noise distribution changes. We set $d_z=d_x=5$ and $L=1$ and generate data as
\begin{equation}
 z_{t,i}=f_i\!\left(\mathrm{pa}_{\mathrm{inst}}(z_{t,i}),
                     \mathrm{pa}_{\mathrm{lags}}(z_{t,i}),
                     \epsilon_{t,i}\right),
 \qquad \vx_t=g(\vz_t).
 \label{eq:supp_synthetic_true_model}
\end{equation}
We use the additive-noise special case $f_i(a,b,e)=\widetilde f_i(a,b)+e$. In the default setting and dimension sweep, the instantaneous graph is the chain DAG $z_{t,i}\to z_{t,i+1}$ for $i=1,\ldots,d_z-1$. The lagged edges are $z_{t-1,i}\to z_{t,i}$ and $z_{t-1,i}\to z_{t,1+((i+1)\bmod d_z)}$ for $i=1,\ldots,d_z$. Figure~\ref{fig:synthetic_graph_example} illustrates these graphs for the default five-component setting, together with their adjacency matrices.
\begin{figure}[tbp]
    \centering
    \includegraphics[width=0.55\textwidth]{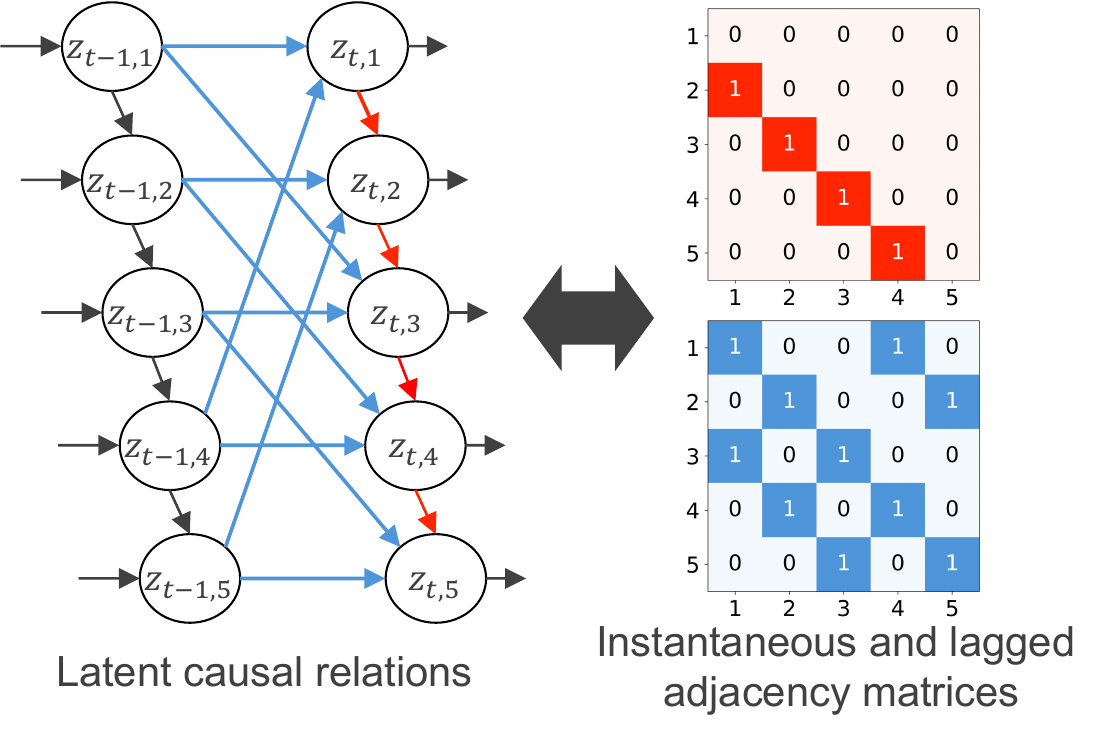}
    \caption{Instantaneous and lagged causal graphs in the default five-component setting and their adjacency matrices. Red and blue indicate instantaneous and lagged causal edges, respectively; rows and columns index targets and sources.}
    \label{fig:synthetic_graph_example}
\end{figure}

\paragraph{Function parameterization and data splits.}
Each $\widetilde f_i$ is a three-layer multilayer perceptron (MLP) that uses Leaky-ReLU activations and jointly takes all instantaneous and lagged parents specified by the true causal graph as input. The nonlinear observation function $g$ consists of three successive layers, each applying a Leaky-ReLU activation followed by an orthogonal linear transformation. For each seed, the training, validation, and test splits use independent transition-noise samples but share the true graph, structural and observation functions, and transition-noise parameters.

\paragraph{Discrete variation in transition-noise distributions.}
For discrete-variation data, the transition-noise distribution switches across discrete environments. Each split contains 256 independent sequences of length 256. For every environment $u\in\{1,\ldots,256\}$ and component $i$, we sample once and fix
\begin{equation}
 \mu_{u,i}\sim\operatorname{Unif}[-0.7,0.7],\qquad
 \sigma_{u,i}\sim\operatorname{Unif}[0.4,1.1],
 \label{eq:supp_synthetic_tcl_noise}
\end{equation}
and define
\begin{equation}
 p(\veps_t\mid u_t=u)
 =\prod_{i=1}^{d_z}\mathcal N\!\left(\epsilon_{t,i};\mu_{u,i},\sigma_{u,i}^2\right).
 \label{eq:supp_synthetic_tcl_conditional}
\end{equation}
Transition noise is sampled independently from this distribution at each time. Thus, its distribution is constant within a sequence, while its component-wise means and standard deviations switch across sequences. Each split contains $256\times256=2^{16}$ samples.

\paragraph{Continuous variation in transition-noise distributions.}
For continuous-variation data, the transition-noise distribution varies smoothly with continuous time. Each split is one sequence of length $T=2^{18}$, with $t=0,\ldots,T-1$ used as a continuous auxiliary variable. For each component $i$, we construct two smooth time series from separate random linear combinations of the sine and cosine functions $\{\sin(2\pi kt/T),\cos(2\pi kt/T)\}_{k=1}^{64}$. We standardize each time series, apply the standard Gaussian cumulative distribution function, and linearly rescale the results to $[-0.7,0.7]$ and $[0.4,1.1]$ to obtain $\mu_i(t)$ and $\sigma_i(t)$, respectively. We then set
\begin{equation}
 p(\veps_t\mid t)
 =\prod_{i=1}^{d_z}\mathcal N\!\left(\epsilon_{t,i};\mu_i(t),\sigma_i(t)^2\right).
 \label{eq:supp_synthetic_gcl_noise}
\end{equation}
For methods requiring discrete labels, we divide the sequence into 256 equal-width bins and use the pseudo-label $\widetilde u_t=\lfloor256t/T\rfloor\in\{0,\ldots,255\}$, which we distinguish from the continuous auxiliary variable $t$.
In both data types, the conditional transition-noise distribution is a component-wise Gaussian exponential family with sufficient statistics $(\epsilon_{t,i},\epsilon_{t,i}^2)$ and auxiliary-variable-dependent natural parameters, matching the form assumed in the IIA analysis in Appendix~\ref{app:transition_noise_identification}.

\paragraph{Windows used for training and evaluation.}
We remove windows containing extreme observations using a threshold determined solely from the training split. Specifically, we set
\begin{equation}
 \tau=Q_{0.999}\!\left(\left\{|x^{\mathrm{train}}_{t,j}|\right\}_{t,j}\right),
 \label{eq:supp_synthetic_observation_filter}
\end{equation}
and retain windows satisfying $\max_{\ell=0,\ldots,L}\max_j|x_{t-\ell,j}|\leq\tau$. For window-based training and validation, we apply this threshold without using the true latent states. Because LEAP is trained on contiguous sequences, we do not remove individual windows from its training or validation sequences. At test time, however, we evaluate all methods only on windows satisfying the threshold.

\subsubsection{Training Conditions and Baselines}

\paragraph{Model architecture.}
We use the model and objective defined in Appendix~\ref{app:learning-framework-details} with the dimensions and auxiliary variables specified for these experiments. The encoder and learned inverse transition function are five-layer Leaky-ReLU MLPs, and the history feature map is a three-layer Leaky-ReLU MLP; all hidden widths are $4d_z$. The main comparison and all additional analyses except the reconstruction-loss ablation use no decoder or reconstruction loss. \methodTCL{} classifies 256 discrete environments using the Gaussian sufficient statistics $(\hat\epsilon_{t,i},\hat\epsilon_{t,i}^2)$ for each estimated transition-noise component, together with the history-feature vector $\vh_t$ and its element-wise square.
\methodGCL{} applies the same element-wise transformations to the estimated transition noise and history feature. 
For discrete-variation data, each coefficient $\mu_{ik}(u)$, the coefficients multiplying the transformed history features in $\varphi$, and the offset in $\varphi$ are represented by separate parameters for each of the 256 environments.
For continuous-variation data, these coefficients and the offset are linear functions of a Fourier basis consisting of 64 pairs of sine and cosine functions.
Its auxiliary-variable-independent term $B(\hat{\veps}_t,\vh_t)$ in~\eqref{eq:supp_alg_gcl_score} is implemented as the sum of two MLP outputs, one for $\hat{\veps}_t$ and one for $\vh_t$, each with hidden width 32.
Although the theory assumes $C^2$ mappings, the experimental implementation uses Leaky-ReLU activations, which are only piecewise smooth. 
This mismatch at the activation level can be removed by replacing Leaky-ReLU with a smooth, strictly increasing activation applied element-wise.

\paragraph{Optimization.}
We clip the global gradient norm at 5 and set $\lambda_{\mathrm{sparsity}}=0.005$ and $\lambda_{\mathrm{DAG}}=10$.
Because the implementation averages the instantaneous and lagged mean $\ell_1$ quantities, this corresponds to an implementation-level sparsity coefficient of 0.01. 
We use AdamW with weight decay $10^{-5}$, initial learning rate $10^{-2}$, and minibatches of 256 windows, reducing the learning rate to 0.001 from epoch 120. 
We selected these coefficients by inspecting validation-loss trajectories and avoiding values for which overly strong regularization produced degenerate solutions. 
The ablation in Table~\ref{tab:supp_synthetic_loss_ablation} shows that removing either or both penalties does not consistently reduce latent-state or causal-graph recovery in the evaluated synthetic settings.

\paragraph{Training and checkpoint selection.}
All methods are trained independently with the same 10 seeds, and checkpoints are selected using validation objectives that do not use the true latent states or causal graph. Methods with the same seed share the generated data and splits; data generation, model initialization, training, and evaluation are independent across seeds. Each method is trained for at most 150 epochs, and we select the checkpoint minimizing its corresponding validation objective. Neither MCC nor F1 is used for checkpoint selection.

\paragraph{Baselines and their inputs.}
We follow public implementations where possible, adapting network sizes to the present setting and making limited adjustments to optimization settings, including learning rates, based on validation behavior. We evaluate IDOL~\citep{Li2025a}, G-CaRL~\citep{Morioka2024a}, CtrlNS~\citep{Song2024b}, LEAP~\citep{Yao2022b}, iVAE~\citep{Khemakhem2020a}, and $\beta$-VAE~\citep{Higgins2017a}, with $\beta=2$ for $\beta$-VAE. Tables~\ref{tab:supp_synthetic_baseline_optimization} and~\ref{tab:supp_synthetic_baseline_inputs} summarize the final optimization settings and auxiliary variables, respectively.

\begin{table}[tbp]
    \centering
    \caption{Main optimization settings for the baselines in the synthetic experiments.}
    \label{tab:supp_synthetic_baseline_optimization}
    \small
    \begin{tabular}{lp{0.72\textwidth}}
        \hline
        Method & Optimization \\
        \hline
        IDOL & Adam, initial learning rate $10^{-3}$; multiplied by 0.1 at epoch 120 \\
        G-CaRL & SGD, momentum 0.9, initial learning rate $10^{-2}$; multiplied by 0.1 after 80\% of updates \\
        CtrlNS & AdamW, initial learning rate $10^{-3}$ \\
        iVAE / $\beta$-VAE & AdamW, initial learning rate $10^{-2}$; multiplied by 0.1 at epoch 120 \\
        LEAP & AdamW (model/transition) and SGD (discriminator), initial learning rate $2\times10^{-3}$ \\
        \hline
    \end{tabular}
\end{table}

\begin{table}[tbp]
    \centering
    \caption{Auxiliary variables supplied to each method in the synthetic experiments.}
    \label{tab:supp_synthetic_baseline_inputs}
    \begin{tabular}{lcc}
        \hline
        Method & Discrete variation & Continuous variation \\
        \hline
        \methodTCL{} & Discrete environment $u_t$ & 256-bin pseudo-label $\widetilde u_t$ \\
        \methodGCL{} & Discrete environment $u_t$ & Continuous time $t$ \\
        IDOL & None & None \\
        G-CaRL & None & None \\
        CtrlNS & None & None \\
        LEAP & Discrete environment $u_t$ & 256-bin pseudo-label $\widetilde u_t$ \\
        iVAE & Discrete environment $u_t$ & Continuous time $t$ \\
        $\beta$-VAE & None & None \\
        \hline
    \end{tabular}
\end{table}

\paragraph{Method-specific sequence handling.}
For G-CaRL, CtrlNS, and LEAP, we adapt the inputs to the sequence structure of the default data. G-CaRL receives $(\vx_t,\vx_{t-1})$ as current- and past-time groups through a shared encoder, and we report past-to-current lagged F1. CtrlNS receives no auxiliary variable; its gating network estimates at most 256 regimes from consecutive learned representations. The 256 time bins are used only for chunk sampling and are not provided as labels. For LEAP, we compute the total-correlation loss from the residual components of each transition separately, rather than jointly across an entire residual sequence. For continuous-variation data, we divide the single sequence into the 256 pseudo-label bins and present each bin to LEAP as a separate sequence; consequently, transitions across bin boundaries are excluded.

\subsubsection{Evaluation Protocol}
\label{app:synthetic_evaluation_protocol}

\paragraph{Latent-representation and causal-graph evaluation.}
We choose the permutation of the learned latent components that maximizes the mean absolute correlation with the true components and use the same permutation to order the learned edge scores before computing F1. The resulting MCC is
\begin{equation}
 \operatorname{MCC}
 =\max_{\pi\in\mathfrak S_{d_z}}
 \frac{1}{d_z}\sum_{i=1}^{d_z}
 \left|\operatorname{corr}\!\left(\vz_{:,i},\hat{\vz}_{:,\pi(i)}\right)\right|,
 \label{eq:supp_synthetic_mcc}
\end{equation}
where $\mathfrak S_{d_z}$ is the set of permutations of the $d_z$ components, $\operatorname{corr}$ denotes the sample Pearson correlation, and $\vz_{:,i}$ and $\hat{\vz}_{:,i}$ collect the values of component $i$ over the evaluated test samples. Let $\widehat{\mathbb E}_{\mathrm{test}}$ denote the empirical mean over the evaluated test windows. For \methodname{}, we define the empirical edge scores as
\begin{equation}
 \begin{aligned}
 \mS^{(0)}_{ij}
 &=\widehat{\mathbb E}_{\mathrm{test}}
 \left[
 \left|\frac{\partial\hat r_i}{\partial\hat z_{t,j}}\right|
 \right],
 && i\ne j,\\
 \mS^{(\ell)}_{ij}
 &=\widehat{\mathbb E}_{\mathrm{test}}
 \left[
 \left|\frac{\partial\hat r_i}{\partial\hat z_{t-\ell,j}}\right|
 \right],
 && \ell=1,\ldots,L,
 \end{aligned}
 \label{eq:supp_synthetic_finite_sample_graph_scores}
\end{equation}
and use the edge scores produced by each baseline. Here, $t$ denotes the target time of each evaluated window. Diagonal entries of $\mS^{(0)}$ are excluded from instantaneous edges, and rows and columns index targets and sources, respectively. Using the row-to-target convention of the learned inverse transition function, we apply the MCC-based latent-component permutation to the target and source axes of the empirical graph scores. We then select for each target the same number of highest-scoring sources as its true number of parents. This evaluation depends on the ranking of the edge scores rather than their absolute scale, which is not invariant to component-wise invertible transformations. From this binary support and the true support, we compute
\begin{equation}
 \operatorname{F1}
 =\frac{2\mathrm{TP}}{2\mathrm{TP}+\mathrm{FP}+\mathrm{FN}}.
 \label{eq:supp_synthetic_graph_f1}
\end{equation}
Here, $\mathrm{TP}$, $\mathrm{FP}$, and $\mathrm{FN}$ denote the numbers of true-positive, false-positive, and false-negative edges, respectively.
We report instantaneous F1 and lagged F1, abbreviating the former as Inst. F1 in tables and figures. 
True latent states and true parent counts are not used for training or checkpoint selection.
For G-CaRL, we separately align the current- and past-time groups and apply the two permutations to the target and source axes of its lagged-edge scores. 
Tables and figures report the mean $\pm$ standard deviation over 10 seeds; \texttt{--} marks metrics not applicable to the method. Figure~\ref{fig:supp_synthetic_jacobian_scores} illustrates two seeds that recover the same support despite having different score scales.

\begin{figure}[tbp]
    \centering
    \textbf{(a)} Instantaneous\\
    \includegraphics[width=0.72\textwidth]{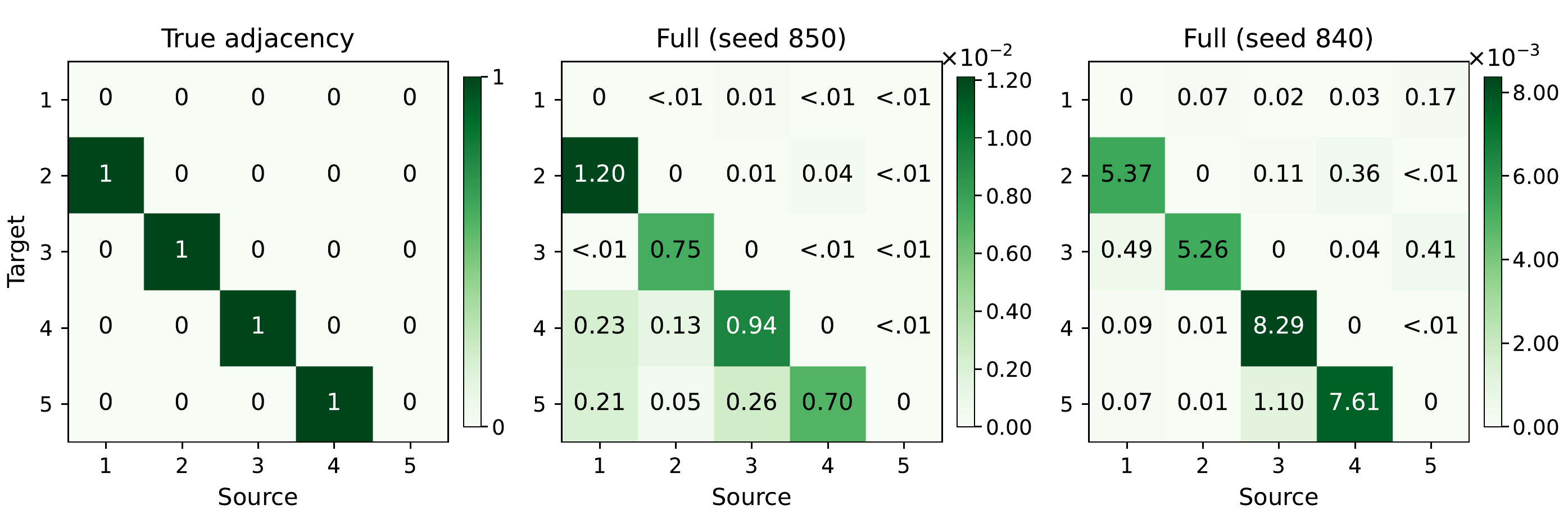}\\
    \textbf{(b)} Lagged\\
    \includegraphics[width=0.72\textwidth]{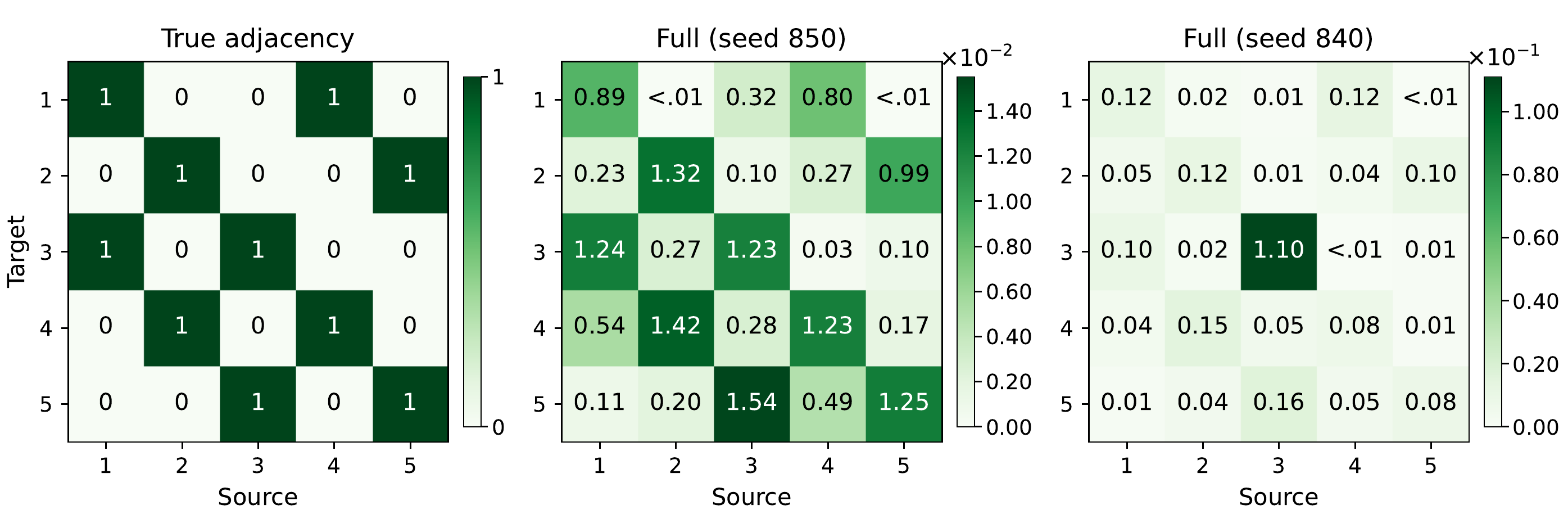}
    \caption{Jacobian edge scores in the default setting with Gaussian transition-noise distributions that vary across discrete environments: (a) instantaneous and (b) lagged. In each panel, the left matrix is the true support, and the middle and right matrices show seeds 850 and 840 trained with the full objective. Both seeds recover the same support, although their score scales differ under component-wise invertible transformations. Rows and columns index targets and sources, respectively.}
    \label{fig:supp_synthetic_jacobian_scores}
\end{figure}

\subsection{Sensitivity to Latent Causal Structure}
\label{app:synthetic_causal_structure_sensitivity}

To examine sensitivity to the latent causal structure, we evaluate all four combinations of two instantaneous and two lagged graphs.
For the instantaneous graph, we compare the default chain with ten distinct randomly generated DAGs, one for each seed.
Each random DAG has five nodes and four edges, respects the fixed component order, is connected but not chain-shaped when edge directions are ignored, and contains at least one fork and one collider.
For the lagged graph, we compare the default pattern, in which each component has two lagged parents, with a graph containing only the lagged self-loop $z_{t-1,i}\to z_{t,i}$ for each component $i$.
Because this replacement reduces the number of lagged parents per component from two to one, we double the weight assigned to the retained self-loop input in each true structural function. 
Table~\ref{tab:supp_synthetic_causal_structure_sensitivity} shows that \methodTCL{} attains high MCC and F1 scores across all four combinations.

\begin{table}[tbp]
    \centering
    \caption{Latent-state and causal-graph recovery by \methodTCL{} under different instantaneous and lagged causal structures on synthetic data with Gaussian transition-noise distributions that vary across discrete environments. Random DAGs are distinct across seeds. Values are the mean $\pm$ standard deviation over 10 seeds.}
    \label{tab:supp_synthetic_causal_structure_sensitivity}
    \small
    \begin{tabular}{llccc}
        \hline
        Instant. graph & Lagged graph & MCC & Inst. F1 & Lagged F1 \\
        \hline
        Default chain & Default    & $0.951\pm0.011$ & $0.950\pm0.158$ & $0.980\pm0.063$ \\
        Random DAGs   & Default    & $0.960\pm0.008$ & $1.000\pm0.000$ & $0.970\pm0.067$ \\
        Default chain & Self-loops & $0.955\pm0.009$ & $0.925\pm0.169$ & $0.940\pm0.135$ \\
        Random DAGs   & Self-loops & $0.948\pm0.032$ & $0.975\pm0.079$ & $0.960\pm0.126$ \\
        \hline
    \end{tabular}
\end{table}

\subsection{Sensitivity to Latent Dimension}
\label{app:synthetic_dimension_sensitivity}

We vary the latent dimension of discrete-variation data over $d_x=d_z\in\{4,5,8,12,16\}$ while fixing the total sample size at $2^{16}$. 
For $d_z=4$, the lagged graph contains only self-loops; otherwise, we use the default instantaneous and lagged graphs. 
Table~\ref{tab:supp_synthetic_dimension_sweep} shows that MCC and F1 scores generally decrease as $d_z$ increases, with the most pronounced decrease in instantaneous F1.

\begin{table}[tbp]
    \centering
    \caption{Latent-state and causal-graph recovery by \methodTCL{} as the latent dimension varies on synthetic data with Gaussian transition-noise distributions that vary across discrete environments. Values are the mean $\pm$ standard deviation over 10 seeds.}
    \label{tab:supp_synthetic_dimension_sweep}
    \small
    \begin{tabular}{rccc}
        \hline
        $d_z$ & MCC & Inst. F1 & Lagged F1 \\
        \hline
        4  & $0.958\pm0.009$ & $1.000\pm0.000$ & $1.000\pm0.000$ \\
        5  & $0.951\pm0.011$ & $0.950\pm0.158$ & $0.980\pm0.063$ \\
        8  & $0.860\pm0.021$ & $0.743\pm0.284$ & $0.869\pm0.127$ \\
        12 & $0.886\pm0.017$ & $0.527\pm0.200$ & $0.812\pm0.118$ \\
        16 & $0.792\pm0.056$ & $0.140\pm0.058$ & $0.569\pm0.136$ \\
        \hline
    \end{tabular}
\end{table}

\subsection{Sensitivity to Total Sample Size}
\label{app:synthetic_sample_size_sensitivity}

We vary the total sample size from $2^{13}$ to $2^{18}$ while retaining the respective discrete-variation and continuous-variation generation schemes. 
For discrete-variation data, we fix 256 environments and vary the sequence length per environment, whereas for continuous-variation data we vary the length of the single continuous sequence. 
Figure~\ref{fig:supp_synthetic_sample_count_sweep} shows that MCC and F1 generally improve with sample size for both variants under both data-generation schemes.
For both data-generation schemes, \methodGCL{} is more sensitive to sample size than \methodTCL{}, particularly in instantaneous F1. 
At $N=2^{16}$, its instantaneous F1 is $0.700\pm0.158$ for discrete variation and $0.700\pm0.197$ for continuous variation.
Increasing $N$ to $2^{18}$ raises both values to $0.975\pm0.079$, indicating that the lower discrete-variation result at the default sample size is partly a finite-sample effect.

\begin{figure}[tbp]
    \centering
    \includegraphics[width=\textwidth]{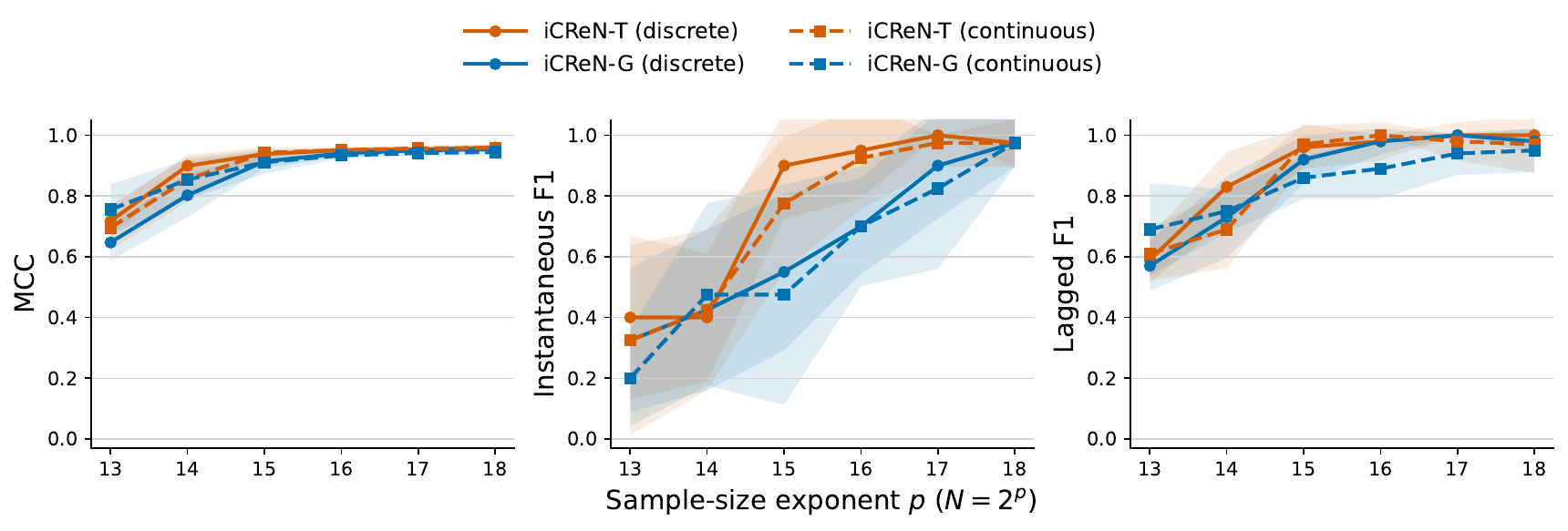}
    \caption{Latent-state and causal-graph recovery as total sample size varies on synthetic data with Gaussian transition-noise distributions. Orange and blue denote \methodTCL{} and \methodGCL{}, respectively; solid curves with circular markers and dashed curves with square markers denote discrete variation across environments and continuous variation over time, respectively. Shaded regions show mean $\pm$ standard deviation over 10 seeds.}
    \label{fig:supp_synthetic_sample_count_sweep}
\end{figure}

\subsection{Sensitivity to Data-generating Function Complexity}
\label{app:synthetic_mapping_complexity}

\begin{table}[tbp]
    \centering
    \caption{Latent-state and causal-graph recovery by \methodTCL{} as the complexity of the true data-generating functions varies on synthetic data with Gaussian transition-noise distributions that vary across discrete environments. We vary structural-function depth with a fixed three-layer observation function in (a), and observation-function depth with fixed three-layer structural functions in (b). Values are the mean $\pm$ standard deviation over 10 seeds.}
    \label{tab:supp_synthetic_mapping_complexity}
    \small
    \begin{tabular}{rccc}
        \hline
        \multicolumn{4}{c}{(a) Structural-function depth} \\
        Depth & MCC & Inst. F1 & Lagged F1 \\
        \hline
        1 & $0.913\pm0.012$ & $0.275\pm0.275$ & $1.000\pm0.000$ \\
        2 & $0.953\pm0.007$ & $0.975\pm0.079$ & $1.000\pm0.000$ \\
        3 & $0.951\pm0.011$ & $0.950\pm0.158$ & $0.980\pm0.063$ \\
        4 & $0.952\pm0.007$ & $0.925\pm0.237$ & $0.950\pm0.127$ \\
        5 & $0.955\pm0.006$ & $1.000\pm0.000$ & $1.000\pm0.000$ \\
        \hline
        \multicolumn{4}{c}{(b) Observation-function depth} \\
        Depth & MCC & Inst. F1 & Lagged F1 \\
        \hline
        1 & $0.957\pm0.008$ & $1.000\pm0.000$ & $1.000\pm0.000$ \\
        2 & $0.953\pm0.007$ & $0.975\pm0.079$ & $1.000\pm0.000$ \\
        3 & $0.951\pm0.011$ & $0.950\pm0.158$ & $0.980\pm0.063$ \\
        4 & $0.946\pm0.011$ & $0.950\pm0.105$ & $1.000\pm0.000$ \\
        5 & $0.930\pm0.008$ & $0.875\pm0.177$ & $0.980\pm0.063$ \\
        \hline
    \end{tabular}
\end{table}

We vary the depth of the true structural or observation function to examine its effect on latent-state and causal-graph recovery by \methodTCL{}. 
We use the default dimensions $d_x=d_z=5$ and the default instantaneous and lagged graphs. 
First, we fix the observation function at three layers and vary the structural-function depth from one to five layers. One layer gives linear structural functions, whereas two to five layers are Leaky-ReLU MLPs with increasing numbers of hidden layers; all conditions use the same 10 seeds. 
As shown in Table~\ref{tab:supp_synthetic_mapping_complexity}(a), MCC and both F1 scores remain high for the nonlinear two- to five-layer structural functions. For the linear functions, instantaneous F1 decreases to $0.275\pm0.275$, whereas MCC and lagged F1 remain high. 
An additively separable linear transition cannot satisfy the sufficient-variability rank condition when instantaneous edges are present (Remark~\ref{remark:supp_instantaneous_additive_separation_rank_bound}). 
The reduced instantaneous F1 is consistent with this limitation.

Next, we fix the structural functions at three layers and vary the observation-function depth from one to five layers. 
Table~\ref{tab:supp_synthetic_mapping_complexity}(b) shows that \methodTCL{} has mean MCC and instantaneous F1 that tend to decrease as the observation-function depth increases, whereas lagged F1 shows no clear trend. 
We also compare \methodTCL{}, \methodGCL{}, and the baselines using one- and three-layer observation functions while keeping the structural functions at three layers.
Within each seed, the two settings share the same latent states, transition noise, environment labels, and time indices.
Table~\ref{tab:supp_synthetic_observation_depth} shows higher mean MCC at one layer for every evaluated method, with the largest increases for IDOL and LEAP; both proposed methods have higher mean MCC than all baselines at both depths.

\begin{table}[tbp]
    \centering
    \caption{MCC of both proposed methods and the baselines with one- and three-layer observation functions on discrete-variation synthetic data. Each entry is the mean $\pm$ standard deviation over 10 seeds.}
    \label{tab:supp_synthetic_observation_depth}
    \small
    \begin{tabular}{lcc}
        \hline
        Method & One layer & Three layers \\
        \hline
        \methodTCL{} & $0.957\pm0.008$ & $0.951\pm0.011$ \\
        \methodGCL{} & $0.947\pm0.010$ & $0.940\pm0.010$ \\
        % \hline
        IDOL & $0.704\pm0.061$ & $0.468\pm0.046$ \\
        G-CaRL & $0.775\pm0.060$ & $0.764\pm0.065$ \\
        CtrlNS & $0.503\pm0.050$ & $0.432\pm0.071$ \\
        LEAP & $0.536\pm0.063$ & $0.373\pm0.107$ \\
        iVAE & $0.264\pm0.037$ & $0.230\pm0.106$ \\
        $\beta$-VAE & $0.582\pm0.058$ & $0.482\pm0.067$ \\
        \hline
    \end{tabular}
\end{table}

\subsection{Ablation of Objective Components}

\paragraph{Structural regularization.}

We remove the sparsity and DAG regularizers separately or jointly to assess their empirical contributions to latent-state and causal-graph recovery. 
On discrete-variation data with the default $d_z=5$ and with $d_z=8$, where recovery begins to deteriorate, we compare the full objective with three ablations: No sparsity with $\lambda_{\mathrm{sparsity}}=0$, No DAG with $\lambda_{\mathrm{DAG}}=0$, and Neither with both coefficients set to zero. 
The results in Table~\ref{tab:supp_synthetic_loss_ablation} show that removing either or both regularizers does not consistently lower MCC, instantaneous F1, or lagged F1 in the evaluated settings, although the regularizers are motivated by the structural conditions used in the theory.

\begin{table}[tbp]
    \centering
    \caption{Ablation of structural regularization on synthetic data with Gaussian transition-noise distributions that vary across discrete environments, with $d_z=5$ (left) and $d_z=8$ (right). Values are the mean $\pm$ standard deviation over 10 seeds.}
    \label{tab:supp_synthetic_loss_ablation}
    \scriptsize
    \setlength{\tabcolsep}{2pt}
    \resizebox{\textwidth}{!}{%
    \begin{tabular}{lccc|ccc}
        \hline
        & \multicolumn{3}{c|}{$d_z=5$} & \multicolumn{3}{c}{$d_z=8$} \\
        \cline{2-7}
        Objective & MCC & Inst. F1 & Lagged F1 & MCC & Inst. F1 & Lagged F1 \\
        \hline
        Full & $0.951\pm0.011$ & $0.950\pm0.158$ & $0.980\pm0.063$ & $0.860\pm0.021$ & $0.743\pm0.284$ & $0.869\pm0.127$ \\
        No sparsity & $0.955\pm0.007$ & $1.000\pm0.000$ & $1.000\pm0.000$ & $0.867\pm0.016$ & $0.900\pm0.191$ & $0.950\pm0.097$ \\
        No DAG & $0.957\pm0.004$ & $0.950\pm0.105$ & $0.940\pm0.135$ & $0.861\pm0.027$ & $0.786\pm0.205$ & $0.894\pm0.135$ \\
        Neither & $0.955\pm0.008$ & $0.925\pm0.121$ & $0.890\pm0.191$ & $0.861\pm0.017$ & $0.800\pm0.204$ & $0.800\pm0.164$ \\
        \hline
    \end{tabular}%
    }
\end{table}

\paragraph{Reconstruction loss.}

We add a decoder and reconstruction loss to the default objective to assess how observation reconstruction affects latent-state and causal-graph recovery. 
For the default discrete-variation data, let $\hat g$ be a three-layer decoder from the learned representation to the observation space. 
For a minibatch $\mathcal B$ of observation windows, define
\begin{equation}
 \mathcal L_{\mathrm{rec}}
 =\frac{1}{|\mathcal B|}\sum_{b\in\mathcal B}\sum_{\ell=0}^{L}
 \left\|\vx^{(b)}_{t-\ell}
 -\hat g\!\left(\phi\!\left(\vx^{(b)}_{t-\ell}\right)\right)\right\|_2^2.
 \label{eq:supp_synthetic_reconstruction_loss}
\end{equation}
We compare the full objective with Full + reconstruction, which adds $\mathcal L_{\mathrm{rec}}$ with $\lambda_{\mathrm{rec}}=1$. 
Table~\ref{tab:supp_synthetic_reconstruction_ablation} shows that adding reconstruction yields similarly high MCC and F1 scores but no consistent improvement across the three metrics.

\begin{table}[tbp]
    \centering
    \caption{Effect of adding a decoder and reconstruction loss to the default objective on synthetic data with Gaussian transition-noise distributions that vary across discrete environments. Values are the mean $\pm$ standard deviation over 10 seeds.}
    \label{tab:supp_synthetic_reconstruction_ablation}
    \small
    \begin{tabular}{lcccc}
        \hline
        Objective & MCC & Inst. F1 & Lagged F1 & Reconstruction loss \\
        \hline
        Full & $0.951\pm0.011$ & $0.950\pm0.158$ & $0.980\pm0.063$ & \texttt{--} \\
        Full + reconstruction & $0.957\pm0.009$ & $1.000\pm0.000$ & $0.970\pm0.048$ & $0.0022\pm0.0005$ \\
        \hline
    \end{tabular}
\end{table}

\section{Real-world Experimental Details}
\label{app:additional-results}

We use \methodTCL{} in both real-world experiments because it generally achieved higher mean recovery scores in the synthetic comparisons.
Its classifier applies the element-wise map $a\mapsto(a,a^2,\log(1+a^2))$ to both the estimated transition noise and the history feature.
Inputs are standardized using statistics computed only from the training data, and each method's pretraining checkpoint is selected using its method-specific validation objective.
After pretraining, each encoder is frozen, and a forecasting head with the same architecture across methods is trained, selected using validation data, and evaluated on test data.
Each random seed defines one complete run, including representation pretraining, training of the forecasting head, and evaluation.

\subsection{Treadmill Motion Forecasting}
\label{app:supp_treadmill_motion_forecasting}

\paragraph{Dataset and preprocessing.}
We use the treadmill walking data provided by \citet{Riglet2024}, including sequences recorded at Slow, Comfortable, and Fast walking speeds. 
From the three-dimensional coordinates of markers on the left and right sides of the pelvis, thigh, knee, ankle, heel, and toe, we construct 36-dimensional sequences. 
We align the coordinate axes, convert the coordinates from millimeters to meters, center each frame by subtracting the midpoint of the left and right pelvic markers, and retain every eighth frame of the original 100-Hz trajectories, yielding a sampling rate of 12.5~Hz. 
We split the 30 participants into 19 for training, 6 for validation, and 5 for testing. 
We estimate the normalization statistics using only sequences recorded at the Slow and Fast walking speeds from the training participants and apply them unchanged to validation and test data. 
We use Slow and Fast for training and model selection and Comfortable only for final test evaluation.

\begin{table}[tbp]
    \centering
    \caption{Main pretraining objectives and auxiliary information for the treadmill motion forecasting experiment. KL denotes Kullback--Leibler.}
    \label{tab:supp_treadmill_method_settings}
    \small
    \begin{tabular}{lp{0.47\textwidth}p{0.25\textwidth}}
        \hline
        Method & Pretraining objective & Auxiliary information \\
        \hline
        \methodname{} & Reconstruction, TCL, and sparsity regularization & 12-class gait phase \\
        IDOL & Reconstruction, KL divergence, and sparsity regularization & None \\
        G-CaRL & Group-contrastive objective & None \\
        CtrlNS & Variational transition objective & No observed label; at most five inferred regimes \\
        LEAP & Variational transition, total correlation, and sparsity regularization & 12-class gait phase \\
        $\beta$-VAE & Reconstruction and $\beta$-weighted KL divergence & None \\
        \hline
    \end{tabular}
\end{table}

\paragraph{Pretraining.}
We pretrain each method for at most 20,000 steps with latent dimension 5 and batch size 256. 
Following the motion-capture setting of IDOL~\citep{Li2025a}, we set the maximum lag to one for all dynamic methods. 
Using the right-foot-strike events provided with the dataset, we define each gait cycle as the interval between consecutive strikes, divide it uniformly into 12 phases, and use the phase index as the auxiliary variable for \methodname{} and LEAP. 
We fit each representation model on the Slow and Fast walking speeds from the 19 training participants and select its checkpoint on the same walking speeds from the six validation participants using the method-specific validation objective. 
For \methodname{}, a frame-wise decoder reconstructs the 36-dimensional marker coordinates from each latent representation, and the pretraining objective combines reconstruction and TCL classification losses with sparsity regularization on the input gates, using $\lambda_{\mathrm{rec}}=1$, $\lambda_{\mathrm{IIA}}=0.001$, and $\lambda_{\mathrm{sparsity}}=5\times10^{-5}$.
Because we use a fixed lower-triangular mask for the instantaneous structure, no DAG regularization is applied in this experiment. 
Two LEAP runs produced non-finite losses or gradients during pretraining and were therefore excluded from downstream evaluation without replacement; we report the remaining eight runs. 
Table~\ref{tab:supp_treadmill_method_settings} summarizes the pretraining objectives and auxiliary information.

\begin{table}[tbp]
    \centering
    \caption{Mean marker error (mm) at every third predicted frame for Comfortable test sequences, which are excluded from training and model selection. Values are the mean $\pm$ standard deviation across runs; boldface indicates the best value in each column. We report 10 runs for \methodname{}, IDOL, G-CaRL, CtrlNS, and $\beta$-VAE and the eight numerically stable runs for LEAP.}
    \label{tab:supp_treadmill_per_horizon}
    \footnotesize
    \setlength{\tabcolsep}{4pt}
    \begin{tabular}{lcccc}
        \hline
        \multicolumn{5}{c}{(a) Early prediction horizons} \\
        Prediction horizon [s] & 0.24 & 0.48 & 0.72 & 0.96 \\
        \hline
        \methodname{} & $\mathbf{35.518\pm1.236}$ & $\mathbf{42.060\pm1.939}$ & $\mathbf{47.863\pm1.255}$ & $\mathbf{58.650\pm2.051}$ \\
        IDOL & $36.654\pm2.106$ & $44.076\pm2.304$ & $50.717\pm1.786$ & $62.400\pm2.119$ \\
        G-CaRL & $45.942\pm4.276$ & $52.197\pm4.917$ & $60.568\pm3.640$ & $75.790\pm7.785$ \\
        CtrlNS & $37.152\pm1.859$ & $43.593\pm2.267$ & $48.861\pm1.610$ & $62.337\pm1.978$ \\
        LEAP & $37.281\pm1.326$ & $44.220\pm1.825$ & $50.902\pm1.568$ & $63.431\pm2.825$ \\
        $\beta$-VAE & $36.429\pm0.877$ & $43.391\pm1.865$ & $51.058\pm2.257$ & $63.305\pm2.546$ \\
        \hline
    \end{tabular}

    \vspace{0.5em}

    \begin{tabular}{lcccc}
        \hline
        \multicolumn{5}{c}{(b) Later prediction horizons} \\
        Prediction horizon [s] & 1.20 & 1.44 & 1.68 & 1.92 \\
        \hline
        \methodname{} & $\mathbf{67.673\pm2.831}$ & $\mathbf{82.034\pm2.725}$ & $\mathbf{95.912\pm2.611}$ & $\mathbf{107.309\pm4.345}$ \\
        IDOL & $71.989\pm2.691$ & $86.791\pm2.547$ & $102.094\pm3.543$ & $115.146\pm3.822$ \\
        G-CaRL & $86.037\pm6.819$ & $103.133\pm9.088$ & $118.191\pm10.616$ & $135.169\pm8.541$ \\
        CtrlNS & $70.025\pm1.759$ & $86.040\pm4.300$ & $101.372\pm3.605$ & $117.059\pm7.501$ \\
        LEAP & $70.623\pm2.281$ & $87.338\pm4.548$ & $99.185\pm3.018$ & $113.210\pm7.738$ \\
        $\beta$-VAE & $73.886\pm2.671$ & $91.089\pm4.090$ & $106.227\pm4.279$ & $123.960\pm5.890$ \\
        \hline
    \end{tabular}
\end{table}

\paragraph{Downstream forecasting.}
After pretraining, we freeze each selected encoder and train the same two-layer forecasting head with hidden width 64 and Gaussian error linear unit (GELU) activations for all methods to directly predict the marker coordinates for the following 24 frames (1.92 seconds) from the latent representations of the preceding 12 frames (0.96 seconds). 
For downstream learning, we further divide the 19 training participants into 15 for fitting the forecasting head and 4 for selecting its checkpoint; both subsets contain only the Slow and Fast walking speeds. 
We train the head for at most 100,000 steps using Adam with learning rate $10^{-3}$ and batch size 256 and select the checkpoint with the lowest mean marker error averaged over all 24 prediction horizons on the four-participant selection subset. 
We then evaluate the selected encoder and forecasting head on the Comfortable walking speed from the five test participants without further parameter updates or model selection.

\paragraph{Evaluation metric.}
Before computing the error, we return the standardized predictions and targets to the physical coordinate scale using the training-set normalization statistics. 
Let $n=1,\ldots,N$ index test windows, $h=1,\ldots,H$ index prediction horizons, and $j=1,\ldots,J$ index markers, where $N=7{,}214$, $H=24$, and $J=12$. 
Let $\vy_{n,h,j}$ and $\hat{\vy}_{n,h,j}$ denote the true and predicted three-dimensional marker coordinates, respectively. 
We compute the mean marker error at horizon $h$ as
\begin{equation}
 \frac{1}{NJ}\sum_{n=1}^{N}\sum_{j=1}^{J}
 \left\|\vy_{n,h,j}-\hat{\vy}_{n,h,j}\right\|_2.
 \label{eq:supp_treadmill_marker_error}
\end{equation}
We convert the resulting distances from meters to millimeters and report every third prediction horizon, from $h=3$ (0.24 seconds) to $h=24$ (1.92 seconds). 
As shown in Table~\ref{tab:supp_treadmill_per_horizon}, \methodname{} attains the lowest mean marker error at every reported horizon.

\subsection{Weather Forecasting}
\label{app:supp_weather_forecasting}

\paragraph{Dataset and preprocessing.}
We construct regional weather time series from the ERA5 global climate reanalysis dataset~\citep{Hersbach2020,Copernicus2023} for the eight regions listed in Table~\ref{tab:supp_weather_cities}. 
For each of the five cities in a region, we use the nearest $0.25^\circ$ grid point and sample the data at three-hour intervals. 
At each time point, the encoder input comprises 2-m temperature (\texttt{t2m}), mean sea-level pressure (\texttt{msl}), 10-m zonal and meridional wind components (\texttt{u10} and \texttt{v10}), and total precipitation over the preceding three hours (\texttt{tp}) for all five cities. 
We convert the three-hour precipitation totals to millimeters and apply a \texttt{log1p} transformation. 
We use only windows consisting of consecutive three-hour observations. Within each month of the 1981--2013 pretraining period, we divide the days into four periods (1--7, 8--14, 15--21, and 22 onward) and reserve the final day of each period for validation. 
The preceding $L$ time points used as context for these validation windows are also excluded from training and from the computation of normalization statistics.
We standardize each input variable separately for each city using only the remaining 1981--2013 training samples and apply the resulting statistics unchanged to the pretraining validation and later downstream data.

\paragraph{Pretraining.}
We use a calendar index as the auxiliary variable during pretraining. 
The index combines month (12 categories), the four within-month periods defined above, and UTC time (8 three-hour categories), yielding 384 classes.
We exclude February 29 and do not include the year, so the same 384 classes recur across years. 
Only \methodname{} and LEAP use this observed auxiliary variable during pretraining; the other methods receive no calendar label. 
We pretrain each method for at most 10,000 steps on data from 1981--2013, use latent dimension 5, and set the maximum lag to 8 for the dynamic methods, covering the preceding 24 hours at the three-hour sampling interval.
For \methodname{}, the pretraining objective combines reconstruction and TCL classification losses with sparsity regularization on the input gates, using $\lambda_{\mathrm{rec}}=1$, $\lambda_{\mathrm{IIA}}=0.005$, and $\lambda_{\mathrm{sparsity}}=5\times10^{-5}$.
A fixed lower-triangular mask is used for the instantaneous structure, so no DAG penalty contributes to the objective.
Table~\ref{tab:supp_weather_method_settings} summarizes the pretraining objectives and auxiliary information.

\begin{table}[bp]
    \centering
    \caption{Cities used in the weather forecasting experiment. We use the ERA5 $0.25^\circ$ grid point nearest to each city center.}
    \label{tab:supp_weather_cities}
    \small
    \begin{tabular}{lp{0.66\textwidth}}
        \hline
        Region & Cities \\
        \hline
        Southeastern Australia & Brisbane, Sydney, Canberra, Melbourne, Hobart \\
        Canada & Toronto, Kingston, Ottawa, Montreal, Quebec City \\
        China & Harbin, Beijing, Shanghai, Wuhan, Guangzhou \\
        France & Lille, Paris, Strasbourg, Bordeaux, Marseille \\
        Germany & Hamburg, Berlin, Cologne, Frankfurt, Munich \\
        Japan & Sapporo, Sendai, Tokyo, Osaka, Fukuoka \\
        South Korea & Seoul, Daejeon, Daegu, Gwangju, Busan \\
        US Atlantic & Boston, New York, Washington, D.C., Atlanta, Miami \\
        \hline
    \end{tabular}
\end{table}

\paragraph{Downstream forecasting.}
After pretraining, we freeze each selected encoder and train the same two-layer forecasting head with hidden width 256, GELU activations, and dropout 0.1 for all methods. 
The head uses the latent representations at the preceding eight time points to predict the relative humidity of all five cities three hours ahead. 
We fit the head on data from 2014--2016 for at most 5,000 steps using Adam with learning rate $10^{-3}$ and select its checkpoint by standardized MSE on 2017 validation data. 
We then evaluate the selected encoder and head on data from 2018--2020 without further parameter updates or model selection. 
The calendar index is not provided to the forecasting head.

\paragraph{Target and evaluation metric.}
We derive relative humidity, which is not used during pretraining, from 2-m temperature and dew-point temperature and evaluate the three-hour-ahead forecasts using standardized MSE. 
Let $T$ and $T_d$ denote 2-m temperature and dew-point temperature in kelvin, and define their values in degrees Celsius as $\tau=T-273.15$ and $\tau_d=T_d-273.15$. 
Using Bolton's approximation of saturation vapor pressure~\citep{Bolton1980}, we compute
\begin{equation}
 \operatorname{RH}(T,T_d)
 =\operatorname{clip}_{[0,100]}\!\left[
 100\exp\!\left(
 \frac{17.67\tau_d}{\tau_d+243.5}
 -\frac{17.67\tau}{\tau+243.5}
 \right)
 \right].
 \label{eq:supp_weather_rh}
\end{equation}
Dew-point temperature is used only for this calculation and is not provided to the encoder. 
We standardize relative humidity separately for each city using statistics from the 1981--2013 pretraining period. 
Let $n=1,\ldots,N$ index test time points and $c=1,\ldots,C$ index cities, and denote the standardized true and predicted relative humidity by $y_{n,c}$ and $\hat y_{n,c}$, respectively. 
We compute
\begin{equation}
 \operatorname{MSE}_{\mathrm{std}}
 =\frac{1}{NC}\sum_{n=1}^{N}\sum_{c=1}^{C}
 \left(y_{n,c}-\hat y_{n,c}\right)^2.
 \label{eq:supp_weather_standardized_mse}
\end{equation}
For each run, we evaluate the metric over all test time points and cities from 2018--2020. 
Table~\ref{tab:supp_weather_results_by_region} reports the results. 
We also report persistence, which uses the most recently observed relative humidity as the forecast. 
Because the learned methods do not receive relative humidity as an input, we treat persistence as a reference and exclude it from the boldface comparison.

\begin{table}[tbp]
    \centering
    \caption{Main pretraining objectives and auxiliary information for the weather forecasting experiment.}
    \label{tab:supp_weather_method_settings}
    \small
    \begin{tabular}{lp{0.45\textwidth}p{0.27\textwidth}}
        \hline
        Method & Pretraining objective & Auxiliary information \\
        \hline
        \methodname{} & Reconstruction, TCL, and sparsity regularization & 384-class calendar index \\
        IDOL & Reconstruction, KL divergence, and sparsity regularization & None \\
        G-CaRL & Group-contrastive objective & None \\
        CtrlNS & Variational transition objective & No observed label; at most five inferred regimes \\
        LEAP & Variational transition, total correlation, and sparsity regularization & 384-class calendar index \\
        % LEAP & Variational transition, innovation density, and sparsity regularization & 384-class calendar index \\
        $\beta$-VAE & Reconstruction and $\beta$-weighted KL divergence & None \\
        \hline
    \end{tabular}
\end{table}

\begin{table}[tbp]
    \centering
    \caption{Standardized MSE for three-hour-ahead relative-humidity forecasting in eight regions. Each region comprises five cities. Values for the learned methods are the mean $\pm$ standard deviation over 10 runs; boldface indicates the best learned method in each column. The two panels partition the columns only for readability. Persistence is a deterministic forecasting reference and is shown without a standard deviation.}
    \label{tab:supp_weather_results_by_region}
    \small
    \setlength{\tabcolsep}{5pt}
    \begin{tabular}{l|cccc}
        \hline
        \multicolumn{5}{c}{(a) Southeastern Australia, Canada, China, and France} \\
        Method & AU & CAN & CN & FR \\
        \hline
        \methodname{} & $\mathbf{0.352\pm0.004}$ & $0.481\pm0.011$ & $\mathbf{0.546\pm0.010}$ & $\mathbf{0.422\pm0.015}$ \\
        IDOL & $0.359\pm0.004$ & $0.484\pm0.010$ & $0.555\pm0.006$ & $0.438\pm0.007$ \\
        G-CaRL & $0.649\pm0.017$ & $0.673\pm0.013$ & $0.664\pm0.009$ & $0.750\pm0.019$ \\
        CtrlNS & $0.385\pm0.021$ & $0.522\pm0.084$ & $0.618\pm0.028$ & $0.459\pm0.061$ \\
        LEAP & $0.396\pm0.006$ & $0.524\pm0.009$ & $0.591\pm0.012$ & $0.486\pm0.014$ \\
        $\beta$-VAE & $0.357\pm0.003$ & $\mathbf{0.475\pm0.011}$ & $0.561\pm0.015$ & $0.445\pm0.011$ \\
        \hline
        Persistence (reference) & $0.477$ & $0.373$ & $0.356$ & $0.434$ \\
        \hline
    \end{tabular}

    \vspace{0.5em}

    \begin{tabular}{l|cccc}
        \hline
        \multicolumn{5}{c}{(b) Germany, Japan, South Korea, and US Atlantic} \\
        Method & GER & JP & KR & US \\
        \hline
        \methodname{} & $0.474\pm0.011$ & $\mathbf{0.414\pm0.013}$ & $\mathbf{0.291\pm0.007}$ & $0.427\pm0.009$ \\
        IDOL & $\mathbf{0.469\pm0.010}$ & $0.440\pm0.009$ & $0.304\pm0.007$ & $0.440\pm0.008$ \\
        G-CaRL & $0.754\pm0.028$ & $0.615\pm0.026$ & $0.535\pm0.016$ & $0.554\pm0.020$ \\
        CtrlNS & $0.504\pm0.043$ & $0.501\pm0.057$ & $0.382\pm0.048$ & $0.462\pm0.026$ \\
        LEAP & $0.566\pm0.018$ & $0.482\pm0.015$ & $0.327\pm0.011$ & $0.460\pm0.012$ \\
        $\beta$-VAE & $0.484\pm0.017$ & $0.426\pm0.009$ & $0.328\pm0.009$ & $\mathbf{0.416\pm0.008}$ \\
        \hline
        Persistence (reference) & $0.437$ & $0.446$ & $0.374$ & $0.334$ \\
        \hline
    \end{tabular}
\end{table}

\subsection{Data Sources and Availability}

The ERA5 hourly data on single levels used in the weather experiments are publicly available from the Copernicus Climate Change Service Climate Data Store~\citep{Copernicus2023} under DOI~\texttt{10.24381/cds.adbb2d47}; we retrieved them on September 2, 2026. 
We do not redistribute the data.
The results contain modified Copernicus Climate Change Service information 2020. 
Neither the European Commission nor ECMWF is responsible for any use that may be made of the Copernicus information or data it contains. 

The treadmill experiment uses the publicly available motion-capture dataset of
\citet{Riglet2024}, distributed through Figshare under
DOI~\texttt{10.6084/m9.figshare.c.7056797.v1}. 
We do not redistribute the source data.

\end{document}